\newcommand{\tagLabel}[2]{\tag{\textbf{#1}}\label{#2}}
\newenvironment{proof}{\medskip\noindent \textsc{Proof.}}
{\hspace*{\fill}\nolinebreak[2]\hspace*{\fill}$\blacksquare$\medskip}
\newenvironment{example}
{
 \noindent \textbf{Example (cont.)}\begin{itshape}%
}%
{\end{itshape}}
\newtheorem{definition}{Definition}
\newtheorem{proposition}{Proposition}
\newtheorem{theorem}{Theorem}
\newtheorem{lemma}{Lemma}
\begin{document}

  \title{A Qualitative Theory of Cognitive Attitudes \\
        and their Change}

  \author{EMILIANO LORINI\\
         IRIT-CNRS, Toulouse University, France}

 \date{}

\maketitle

\begin{abstract}

We present a general logical framework for reasoning about 
agents' cognitive attitudes of both
epistemic type and motivational type.
We show that it allows 
us to express a variety of relevant concepts 
for qualitative decision theory including the concepts of 
knowledge, belief, strong belief, conditional belief, desire, conditional desire, strong desire
and preference. We also present two extensions of the logic,
one by the notion of choice
and the other by dynamic operators for belief change and desire change,
and we apply the former to the analysis of single-stage games under incomplete information.
We provide sound
and complete
axiomatizations for the basic logic
and for its two extensions. 

The paper is 
``under consideration in Theory and Practice of Logic Programming (TPLP)''.

\end{abstract}

\section{Introduction}\label{intro}

 Since the seminal work of
 Hintikka
 on epistemic logic  \cite{Hintikka},
 of Von Wright on the logic
 of preference  \cite{WrightPreference,WrightPreference2}
 and
 of Cohen \& Levesque
 on the logic
 of intention  \cite{Coh90}, many formal logics for reasoning about cognitive
attitudes of agents such as knowledge and belief
\cite{Fagin1995}, preference \cite{Fenrong,BenthemGirardRoy}, desire \cite{DuboisLorini},
intention \cite{Shoham2009,IcardPacuit}
and their combination
\cite{Mey99,Woo00}
 have been proposed.
 Generally speaking, these logics are nothing but formal models of rational agency relying on the idea that
 an agent endowed
with cognitive attitudes makes decisions on the basis of what she believes and of what she desires or prefers.

The idea of describing rational agents in terms of their epistemic and motivational attitudes
is something that these logics share with classical decision theory and game theory.
Classical decision theory and game theory provide a quantitative account
of individual and strategic decision-making by assuming that agents' beliefs and desires can be respectively modeled
by subjective probabilities and utilities.
Qualitative approaches to individual and strategic decision-making have been proposed in AI
\cite{Boutilier94,DoyleThomason}
to characterize criteria that a rational agent should adopt for making decisions when she cannot build a probability distribution
over the set of possible events and her preference over the set of possible outcomes cannot be expressed by a utility function but
only by a qualitative ordering over the outcomes. For example, going beyond expected utility maximization, qualitative criteria such as
the maxmin principle (choose the action that will minimize potential loss) and the maxmax principle
(choose the action that will maximize potential gain) have been studied and axiomatically characterized
\cite{BrafmanTennen,BrafmanTennen2}.

The aim of this paper is to present an expressive logical framework
for representing
both the static and the dynamic aspects
of a rich
 variety of agents' cognitive attitudes  in a multi-agent setting. 
In agreement with
philosophical
theories  \cite{Platts,Searle1979,HumberstoneFit,LoriniIfColog},
our logic allows us to  distinguish two general categories
of cognitive attitudes:
\emph{epistemic}
attitudes, including belief and knowledge,
and 
\emph{motivational}
ones, including desire and preference. 
Moreover, 
 in agreement with rational choice theory,
 it allows us to capture a notion of choice
 which depends on what
 an agent believes and prefers.\footnote{
Rational choice theory (RCT)
is a umbrella
term
for a family of theories
prescribing  that
 an agent should choose the course of action that,
 according to her beliefs,
 leads to the most desirable
 (or most preferred) consequences. In other words, RCT relies on the general
 assumption that agents 
 make optimal choices
 in the light of her
 beliefs, desires and preferences.
 See \cite{Paternotte}
 for more details on RCT.
 }

The example depicted in Figure 
 \ref{fig:TrafficExample}
 brings to the fore the epistemic
 and motivational  attitudes
 that are involved
 in everyday situations
 whereby artificial  
 agents
 are supposed to interact. 
There are two autonomous agents
meeting at  a crossroad:
agent $1$ and agent $2$.
The two agents could be either two mobile robots or two autonomous vehicles. 
Each agent  can decide either to stop or to continue.
 If an agent stops, then it 
will lose time. 
If both agents decide to continue,
they will collide
and, consequently, each of them
will lose time.
Therefore, for an agent
not to  lose time, it has to continue,
while the other agent decides to stop.

In this situation, each agent 
is identified 
with the set of cognitive attitudes
it endorses.
For instance, it is reasonable
to suppose that the two agents
know that 
in the situation they face
 necessarily 
some of them
will lose  time
and that 
if
one of them
 loses time
by letting the other pass,
there will be no collision.
On the motivational side,
it is reasonable to suppose
that 
each agent 
is strongly motivated by two
desires, namely, 
the desire
not to lose time
and the desire
to avoid a collision.

\begin{figure}[h]
\centering
    \includegraphics[scale=0.40]{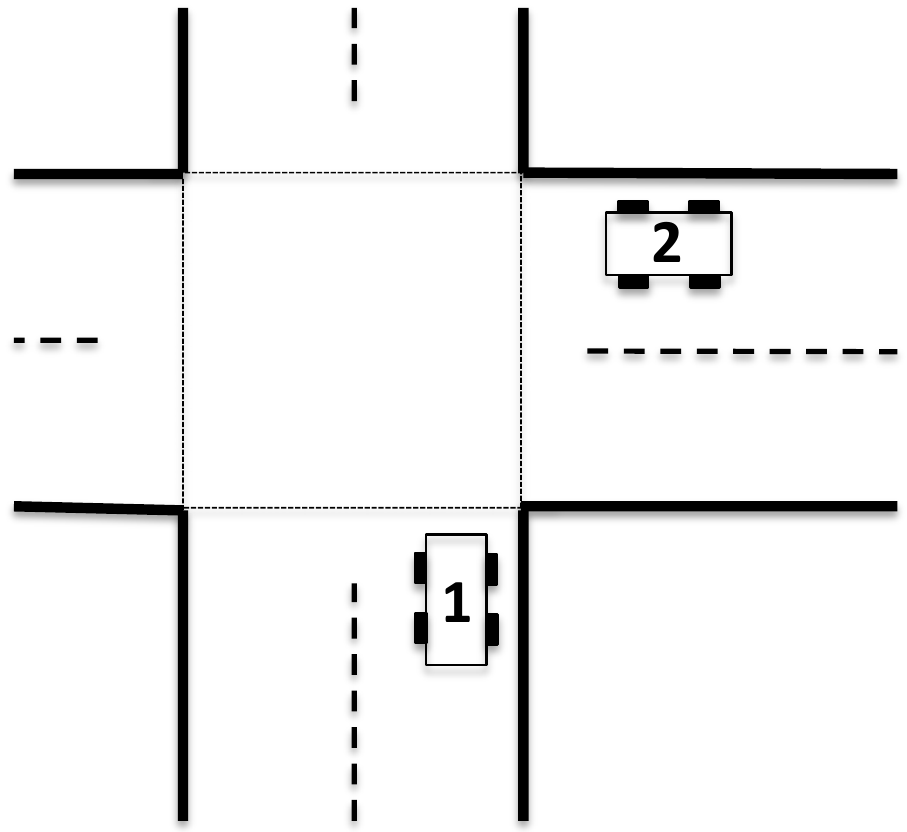}
\caption{Crossroad game}
\label{fig:TrafficExample}
\end{figure}

 On the dynamic side,
 we consider two basic
 forms of cognitive attitude change,
 namely, belief change and desire change.
 While belief change has been extensively
 studied in the area of belief
 revision \cite{Alc85,DBLP:conf/ijcai/Boutilier93,DBLP:journals/ai/DarwicheP97,Spohn,Rott27,DuboisBR} and dynamic epistemic logic 
  ($\mathrm{DEL}$) \cite{BenthemRevision,BaltagSmets2008,DBLP:journals/synthese/Ditmarsch05,DBLP:conf/prima/Aucher04},
  desire change is far less studied and understood.
We will
 study two basic forms
 of cognitive attitude revision, namely,
  \emph{radical} attitude revision
  and   \emph{conservative} attitude revision.
  While the distinction
  between radical and conservative
  belief revision has been drawn
  before (see, e.g., \cite{BenthemRevision}),
  the distinction
  between radical and conservative desire
  revision is new.
  Radical belief revision by an input $\varphi$
  makes all states at which $\varphi$ is true 
  more plausible than all states at which $\varphi$ is false,
  whereas 
  conservative belief revision by $\varphi$
  simply promotes the most plausible states
  in which $\varphi$
  is true to the highest plausibility 
rank, but apart from that, it keeps the old plausibility ordering. 
For example, 
suppose 
in the crossroad game of Figure
\ref{fig:TrafficExample},
 agent $1$
 and agent $2$
can communicate.
Agent $1$
informs
agent $2$
that ``if they both
lose time,
then there will no collision''
and agent $2$
trusts what agent $1$
says.
Then, by performing a conservative
belief
revision, agent $2$
will promote
the most plausible
situations
in which 
the formula
announced by $1$
 is true 
 to the highest plausibility rank.
 As a consequence,
agent $2$
will start to believe
what $1$
has just said.

  Symmetrically, 
  radical desire revision by  $\varphi$
  makes all states at which $\varphi$ is true 
  more desirable than all states at which $\varphi$ is false,
  whereas 
  conservative desire revision by $\varphi$
  simply demotes the least desirable states in which $\varphi$
  is false
 to the lowest desirability rank, but apart from that, it keeps
the old desirability  ordering.
For example, 
suppose 
in the crossroad game
 agent 
$1$ has just
learnt
that agent $2$
is an ambulance which 
has to transport a patient to the hospital as quickly  as possible.
Consequently,
$1$
starts to be
altruistically
 motivated
by the fact
that  $2$
does not lose time.
Thus, by performing a radical
desire
revision, agent $1$
will start to consider all situations
in which 
$2$
does not lose time
more desirable
than the situations in which
it does. This radical desire revision
operation
leads
agent $1$
to strongly desire
that 
agent $2$
does not lose time.

 The paper is organized as follows. 
 In Section \ref{LogicPart},
 we present the semantics
 and syntax of our logic,
 called 
 Dynamic Logic of Cognitive Attitudes  ($\mathrm{DLCA}$).
  At the semantic level, it exploits
  two orderings that
  capture, respectively,
 an agent's comparative plausibility
  and comparative desirability
  over states. 
 At the syntactic level, it uses
  program constructs 
 of dynamic logic (sequential composition,
 non-deterministic choice, intersection, complement, converse and test) to build
 complex cognitive attitudes
 from simple ones.  
Following \cite{PassyTinchev,Goranko93}, it also
 exploits nominals in order to axiomatize  intersection
 and complement of programs.   
  In Section \ref{FormalizationPart},
  we illustrate the expressive
  power of our logic by using it to formalize
  a variety of  cognitive attitudes 
  of agents including
  knowledge, belief, strong belief, conditional belief, desire, strong desire,
  conditional desire and preference. 
  We instantiate
  some
  of these concepts
  in the crossroad game
  depicted
in Figure \ref{fig:TrafficExample}.
   In Section \ref{AxiomatizationPart},
   we present a sound and complete axiomatization
   for our logic.
   In 
   Section \ref{sec:GTapplication}  we present the first extension of our logic
   by the notion of choice and  apply it to the analysis of single-stage games under incomplete information.
Section  \ref{sec:dynext} presents the second  extension
of our logic by dynamic operators for belief and desire change. 
   In Section \ref{ConclusionPart} we conclude. Formal proofs are given in a technical annex
   at the end of the paper.\footnote{This paper is an extended and improved
   version of \cite{DBLP:conf/jelia/Lorini19}. The JELIA'19 paper did not include the two extensions of Section
   \ref{sec:GTapplication}  and Section  \ref{sec:dynext}, or the detailed proof of the completeness theorem for the logic $\mathrm{DLCA}$.
   Also, the logical analysis of the cognitive attitudes in Section 
   \ref{FormalizationPart} has been extended: (i) we included the notion of conditional desire
   which was not considered in the JELIA'19 paper, and (ii)
   we added new logical validities
 which describe interesting properties of cognitive attitudes.
   
     }

\section{Dynamic Logic of Cognitive Attitudes }\label{LogicPart}

Let $\mathit{Atm}$
be a countable infinite set of atomic
propositions,
let
$\mathit{Nom}$
be a countable infinite set of nominals
disjoint from  $\mathit{Atm}$
and let 
$\mathit{Agt}$
be a finite set of agents.

\begin{definition}[Multi-agent cognitive model]\label{DefSemanticsModel}
A multi-agent cognitive model (MCM) is a tuple
$M = (  W,( \preceq_{i,P})_{i \in \mathit{Agt} }, (\preceq_{i,D})_{i \in \mathit{Agt} }  ,
(\equiv_i)_{i \in \mathit{Agt} }  , V  )$
where:
\begin{itemize}
\item  $W$ is a set of worlds or states;
\item  
for every $i \in \mathit{Agt} $,
$\preceq_{i,P} $ and $\preceq_{i,D}$ are  preorders on $W$
and $\equiv_i $ is an equivalence relation on $W$
such that for all $\tau \in \{P,D\}$ and   for all $w, v \in W$:
\begin{description}
\item[(C1)] $\preceq_{i,\tau} \subseteq \equiv_i $,

\item[(C2)]  if $w\equiv_i v$ then $w \preceq_{i,\tau}  v$ or $v \preceq_{i,\tau}  w$;

\end{description}

\item $V :W  \longrightarrow
2^{\mathit{Atm} \cup
\mathit{Nom} }$ is a valuation function
such that  for all $w,v \in W$:
\begin{description}

\item[(C3)]  $  V_{\mathit{Nom}}(w)  \neq \emptyset$,

\item[(C4)] if $ V_{\mathit{Nom}}(w) \cap V_{\mathit{Nom}}(v)  \neq \emptyset $
 then $w =v$;

\end{description}
where $V_{\mathit{Nom}}(w) = \mathit{Nom} \cap V(w)$.

\end{itemize}
\end{definition}

$w \preceq_{i,P} v$
means that, according to agent $i$, $v$
is  at least as plausible as $w$,
whereas
$w \preceq_{i,D} v$
means that, according to agent $i$, $v$
is at least as desirable as $w$.
Finally,
$w \equiv_i v$
means that $w$
and $v$
are indistinguishable for agent $i$.
For every $w \in W$,
$\equiv_i\!\! (w)$
is also called agent $i$'s
information set at state $w$.
According to Constraint C1,
an agent can only compare
the plausibility (resp. desirability)
of two
states in her information set. 
According to Constraint C2,
the plausibility (resp. desirability)
of
 two states
in an agent's information set
are always comparable.
Constraints C3 and C4 capture the two basic properties of nominals:
every state is associated with at least one nominal and
there are no different states
associated with the same nominal.



 Note that there is no connection between binary relations 
 $\preceq_{i,P}$
 and $\preceq_{i,D}$.
 In accord
 with
 classical
 decision and game theory
 in which an agent's subjective probability
 and utility function do not interact,
 we adopt a normative
 view of 
 epistemic and motivational
attitudes 
according to which 
 an agent's 
 epistemic plausibility and
desirability
  are assumed
 to be independent.\footnote{The normative
 view
 is usually opposed to the descriptive view.
 The
 normative view is aimed
 at describing the reasoning and decision-making
 of  ideal agents conforming to
  standards of rationality, while 
  the descriptive view
 is concerned with
 psychologically realistic cognitive agents 
 who systematically violate standards
 of rationality and exhibit different types of cognitive bias.
 }
 Therefore,
 we do not consider
 cognitive  biases
 typical
 of human reasoning
 such as 
wishful thinking.
 as the tendency to form beliefs 
 according to what is desired 
 in the absence of a clear
 evidence against it \cite{MarshWallaceWT}.
 Nonetheless,
 as we will show in Section \ref{choiceSect},
 the primitive relations
  $\preceq_{i,P}$
 and $\preceq_{i,D}$
 can be combined to obtain 
 a notion of realistic preference
 which is essential
 for elucidating the connection
 between
 an agent's beliefs and desires and 
 her  choices.

We introduce the following modal
language $\mathcal{L}_{\mathrm{DLCA} } ( \mathit{Atm},  \mathit{Nom},  \mathit{Agt})$,
or simply $\mathcal{L}_{\mathrm{DLCA} }$,
for the Dynamic Logic of Cognitive Attitudes
 $\mathrm{DLCA}$:
\begin{center}\begin{tabular}{lcl}
  $\pi$  & $::=$ & $    \equiv_i \mid \preceq_{i,P} \mid \preceq_{i,D} \mid
\preceq_{i,P}^\sim \mid \preceq_{i,D}^\sim    \mid 
   \pi { ; } \pi'
  \mid
  \pi \cup \pi' \mid \pi \cap \pi' \mid
  -\pi  \mid \varphi?
   $\\
 $\varphi$  & $::=$ & $ p \mid
 x
 \mid \neg\varphi \mid \varphi\wedge\varphi' \mid [\pi] \varphi
$
\end{tabular}\end{center}
where $p$ ranges over $ \mathit{Atm} $,
$x$ ranges over $ \mathit{Nom} $ and
$i$ ranges over $ \mathit{Agt} $.
 The other Boolean constructions 
 $\top$,
 $\bot$,
 $\vee$,
 $\rightarrow$
 and
 $\leftrightarrow$
 are defined 
  from $p$, $\neg$ and $\wedge$ in the standard way.
  The propositional language 
  built from
  the set of atomic propositions $\mathit{Atm}$
  is noted $\mathcal{L}_{\mathsf{PL}}(\mathit{Atm})$. 
Note that
the sets 
$\mathit{Atm}$,
$\mathit{Nom}$
and
$\mathit{Agt}$
define the signature of the language
$\mathcal{L}_{\mathrm{DLCA} }$.
They are
not part of the model
since every atomic
proposition $p$,
nominal $x$
and modal formula
$ [\pi] \varphi$
should be interpretable
relative to any MCM.

Elements $\pi$
are called \emph{cognitive programs}
or, more shortly, \emph{programs}.
The set of all programs is noted 
$\mathcal{P}(\mathit{Atm},\mathit{Nom},\mathit{Agt})$, or simply,
$\mathcal{P}$.

Cognitive programs correspond
to the basic constructions of Propositional Dynamic Logic ($\mathrm{PDL}$) \cite{Har00}:
atomic programs of type 
$ \equiv_i $,
$  \preceq_{i,P} $,
$   \preceq_{i,D} $,
$\preceq_{i,P}^\sim$ and
$ \preceq_{i,D}^\sim$,
sequential composition ($;$),
non-deterministic choice ($\cup$),
intersection ($\cap$),
converse ($^-$)
and test ($?$).
A given cognitive program $\pi$ corresponds to
a specific configuration of the agents' cognitive states
including their epistemic states and their motivational states.

The formula $[ \pi   ] \varphi$
has to be read ``$\varphi$ is true, according to the cognitive program $\pi$''.
As usual,
we define $\langle \pi \rangle$
to be the dual operator of $[ \pi   ] $, that is,
$\langle \pi \rangle \varphi=_{\mathit{def}} \neg [ \pi   ]\neg \varphi$.

The atomic program
$\equiv_i$
represents the standard
 S5,
partition-based and fully introspective notion of knowledge \cite{Fagin1995,Aumann99JGT}.
$[ \equiv_i ]\varphi$ has to be read ``$\varphi$ is true according to what agent $i$ knows''
or more simply
``agent $i$ knows that $\varphi$ is true'',
which just means that
``$\varphi$ is true in all worlds that agent $i$ envisages''.

The atomic programs
$ \preceq_{i,P} $
and 
$ \preceq_{i,D} $
capture, respectively,
agent $i$'s plausibility
ordering and agent $i$'s  desirability
ordering over facts.
In particular, 
$[ \preceq_{i,P} ] \varphi$
has to be read
 ``$\varphi$ is true at all states
 that, according to agent $i$,
 are at least as plausible as the current one'',
 while
 $[ \preceq_{i,D} ] \varphi$
has to be read
 ``$\varphi$ is true at all states
 that, according to agent $i$,
 are at least as desirable as the current one''.
 The atomic programs
$ \preceq_{i,P}^\sim$
and 
$ \preceq_{i,D}^\sim$
are the complements of
the atomic programs 
$ \preceq_{i,P}$
and 
$ \preceq_{i,D}$, respectively.
In particular, 
$[ \preceq_{i,P}^\sim ] \varphi$
has to be read
 ``$\varphi$ is true at all states
 that, according to agent $i$,
 are \emph{not} at least as plausible as the current one'',
 while
 $[ \preceq_{i,D}^\sim ] \varphi$
has to be read
 ``$\varphi$ is true at all states
 that, according to agent $i$,
 are \emph{not} at least as desirable as the current one''.
 The program
 constructs
 $ ;$,
 $ \cup$,
 $\cap$,
 $-$ and
$?$
are used to define complex cognitive programs
from
the atomic cognitive programs.
For example, 
the formula
 $[ \preceq_{i,P} \cup \preceq_{i,D} ] \varphi$
 has to be read 
  ``$\varphi$ is true at all states
 that, according to agent $i$,
 are either at least as
 plausible \emph{or}
 at least as
  desirable as the current one'',
 whereas
 the formula
  $[ \preceq_{i,P} \cap \preceq_{i,D} ] \varphi$
 has to be read 
  ``$\varphi$ is true at all states
 that, according to agent $i$,
 are  at least as
 plausible \emph{and}
 at least as
  desirable as the current one''.

The following definition
provides
truth conditions for
formulas
in $\mathcal{L}_{\mathrm{DLCA} }$:
\begin{definition}[Truth conditions]\label{truthcond}
Let $M = (  W,( \preceq_{i,P})_{i \in \mathit{Agt} }, (\preceq_{i,D})_{i \in \mathit{Agt} }  ,
(\equiv_i)_{i \in \mathit{Agt} } , V  )$ be a MCM
and let $w \in W$. Then:
\begin{eqnarray*}
M, w \models p & \Longleftrightarrow & p \in V( w) ,\\
M, w \models x & \Longleftrightarrow & x \in V( w), \\
M, w \models \neg \varphi & \Longleftrightarrow & M, w \not \models  \varphi, \\
M, w \models \varphi \wedge \psi & \Longleftrightarrow & M, w \models \varphi   \text{ and } M, w \models \psi ,\\
M, w \models [\pi] \varphi
 & \Longleftrightarrow & \forall
v \in W: \text{ if } w R_\pi v  \text{ then } M,v \models \varphi,
\end{eqnarray*}
where the binary relation $R_\pi$ on $W$ is inductively defined as follows,
with $\tau \in \{P,D\}$:
  \begin{align*}
       w  R_{\equiv_i}    v& \text{ iff }
w \equiv_i v , \\
    w R_{\preceq_{i,\tau}}  v  & \text{ iff }
w \preceq_{i,\tau} v,  \\
    w R_{\preceq_{i,\tau}^\sim}  v  & \text{ iff }
 w \equiv_i v \text{ and } w \not \preceq_{i,\tau } v  ,\\
    w  R_{ \pi ;  \pi'}v    & \text{ iff }
    \exists u \in W:
w R_{ \pi } u
 \text{ and }  uR_{  \pi'}  v,\\
    w R_{ \pi \cup  \pi'} v    & \text{ iff }
 w  R_{ \pi } v
 \text{ or }  w  R_{  \pi'}  v,\\
 w  R_{ \pi \cap  \pi'} v   & \text{ iff }
w R_{ \pi } v
 \text{ and }  w R_{  \pi'} v ,\\
  w  R_{ -\pi} v   & \text{ iff }
 v  R_{ \pi}w ,\\
 w R_{\varphi?} v   & \text{ iff }
w =v  \text{ and } M,w \models \varphi.
\end{align*}

 \end{definition}

For notational convenience,
we use $w R_\pi v$
and $(w,v) \in R_\pi$
as interchangeable notations.

We can build a variety 
of cognitive programs 
capturing different types
of plausibility
and desirability
relations between possible worlds.
For instance,
for every $\tau \in \{P,D\}$,
we can define:
  \begin{align*}
  \succeq_{i,\tau}    & =_{\mathit{def}}
-\preceq_{i,\tau},  \\
 \succ_{i,\tau}    & =_{\mathit{def}}
\succeq_{i,\tau}\cap\preceq_{i,\tau}^\sim ,\\
\succeq_{i,\tau}^\sim  & =_{\mathit{def}}
- \preceq_{i,\tau}^\sim,  \\
 \prec_{i,\tau}    & =_{\mathit{def}}
\preceq_{i,\tau} \cap\succeq_{i,\tau}^\sim , \\
 \approx_{i,\tau}   & =_{\mathit{def}}
\preceq_{i,\tau} \cap \succeq_{i,\tau}.
\end{align*}
The five definitions denote respectively
  ``at most as plausible (resp. desirable) as'',
    ``less plausible (resp. desirable) than'',
      ``not at most as plausible (resp. desirable) as'',
        ``more plausible (resp. desirable) than''
        and 
                ``equally plausible (resp. desirable) as''.

For every formula $\varphi$
in $\mathcal{L}_{\mathrm{DLCA} }$
we say
that $\varphi$
is valid, noted $\models_{\mathit{MCM}} \varphi$,
if and only if
for every multi-agent cognitive model
$M$
and world $w$
in $M$,
we have $M,w \models \varphi$.
Conversely,
we say that $\varphi$
is satisfiable if $\neg \varphi$
is not valid.

For a given multi-agent cognitive
model
$M = (  W,( \preceq_{i,P})_{i \in \mathit{Agt} }, (\preceq_{i,D})_{i \in \mathit{Agt} }  ,
(\equiv_i)_{i \in \mathit{Agt} }  ,
N, V  )$, we define $ || \varphi ||_M = \{v\in W: M,v \models \varphi\}$
to be
the truth set of $\varphi$
in $M$.
Moreover, for every $w \in W $
and for every $ i \in \mathit{Agt}$, we
define $
 || \varphi ||_{i,w,M} =   \{v\in W: M,v \models \varphi  \text{ and } w \equiv_i v\}$
to be
the truth set of $\varphi$
from $i$'s 
 point of view at state $w$ in $M$.

\section{Formalization of Cognitive Attitudes}\label{FormalizationPart}

In this section, 
we show how the logic $\mathrm{DLCA}$
can be used to model the variety
of cognitive attitudes of agents
that we have briefly discussed in the introduction.

\subsection{Epistemic Attitudes}

We start with the family
of epistemic attitudes by defining a
standard notion of belief.
We say that an agent 
believes
that $\varphi$
if and only if $\varphi$
is true at all states that the agent
considers maximally plausible.
\begin{definition}[Belief]\label{defBel}
Let $M = (  W,( \preceq_{i,P})_{i \in \mathit{Agt} }, (\preceq_{i,D})_{i \in \mathit{Agt} }  ,
(\equiv_i)_{i \in \mathit{Agt} }  ,
V  )$ be a MCM
and let $w \in W$.
We say that agent $i$ believes that $\varphi$
at $w$, noted $M,w \models \mathsf{B}_i \varphi  $, if and only if
$
\mathit{Best}_{i,P}(w)\subseteq || \varphi ||_M
$
where 
$
\mathit{Best}_{i,P}(w)= \{v \in W: w \equiv_i v \text{ and }
\forall u \in W, \text{ if } w \equiv_i u  \text{ then } u \preceq_{i,P} v  \}.
$
\end{definition}
As the following proposition highlights,
the previous notion of
belief is expressible
in the logic $\mathrm{DLCA}$
by means of the cognitive program $ \equiv_i ; 
 [\prec_{i,P}]\bot ?$.

\begin{proposition}
Let $M = (  W,( \preceq_{i,P})_{i \in \mathit{Agt} }, (\preceq_{i,D})_{i \in \mathit{Agt} }  ,
(\equiv_i)_{i \in \mathit{Agt} }  ,
 V  )$ be a MCM
and let $w \in W$.
Then,  we have
\begin{align*}
M,w \models \mathsf{B}_i \varphi   \text{ iff }
M,w \models \big[  \equiv_i ; 
 [\prec_{i,P}]\bot ? \big]  \varphi      .
\end{align*}
\end{proposition}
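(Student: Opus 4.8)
The plan is to reduce the statement to a single set identity: that the set of states reachable from $w$ by the program $\equiv_i ; [\prec_{i,P}]\bot ?$ is exactly $\mathit{Best}_{i,P}(w)$. Given this, the equivalence is immediate from the truth condition for $[\pi]\varphi$ together with Definition~\ref{defBel}. The first step is to unfold the program relation. Since $([\prec_{i,P}]\bot)?$ is a test, the composition clause and the test clause of Definition~\ref{truthcond} give that $w R_{\equiv_i ; [\prec_{i,P}]\bot ?} v$ holds iff $w \equiv_i v$ and $M,v \models [\prec_{i,P}]\bot$; hence $M,w \models [\equiv_i ; [\prec_{i,P}]\bot ?]\varphi$ iff $M,v \models \varphi$ for every $v$ such that $w \equiv_i v$ and $M,v \models [\prec_{i,P}]\bot$.

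Next I would pin down when $M,v \models [\prec_{i,P}]\bot$, i.e.\ when $v$ has no $R_{\prec_{i,P}}$-successor. Expanding the defined programs $\prec_{i,P} = {\preceq_{i,P}} \cap {\succeq_{i,P}^\sim}$ and $\succeq_{i,P}^\sim = -\preceq_{i,P}^\sim$ via the semantic clauses, one gets $u R_{\prec_{i,P}} u'$ iff $u \preceq_{i,P} u'$, $u' \equiv_i u$ and $u' \not\preceq_{i,P} u$; by Constraint~C1 the middle conjunct is entailed by the first, so $u R_{\prec_{i,P}} u'$ iff $u \preceq_{i,P} u'$ and $u' \not\preceq_{i,P} u$. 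Consequently $M,v \models [\prec_{i,P}]\bot$ iff $v$ is $\preceq_{i,P}$-maximal, that is, $u' \preceq_{i,P} v$ for every $u'$ with $v \preceq_{i,P} u'$.

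The crux — and the only place where the model constraints genuinely do work — is to show that, for $v$ with $w \equiv_i v$, this notion of $\preceq_{i,P}$-maximality coincides with membership in $\mathit{Best}_{i,P}(w)$, i.e.\ with being a greatest element of the information set $\equiv_i(w)$. One direction is easy: if $v \in \mathit{Best}_{i,P}(w)$ and $v \preceq_{i,P} u'$, then by C1 $v \equiv_i u'$, hence $w \equiv_i u'$ by transitivity, so $u' \preceq_{i,P} v$; thus $v$ is maximal. For the converse I would invoke Constraint~C2: if $v$ is maximal, $w \equiv_i v$, and $u$ is any state with $w \equiv_i u$, then $v \equiv_i u$, so by C2 either $v \preceq_{i,P} u$ (whence $u \preceq_{i,P} v$ by maximality) or already $u \preceq_{i,P} v$; either way $u \preceq_{i,P} v$, so $v \in \mathit{Best}_{i,P}(w)$. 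Combining the two, $\{v : w \equiv_i v \text{ and } M,v \models [\prec_{i,P}]\bot\} = \mathit{Best}_{i,P}(w)$, and therefore $M,w \models [\equiv_i ; [\prec_{i,P}]\bot ?]\varphi$ iff $\mathit{Best}_{i,P}(w) \subseteq || \varphi ||_M$ iff $M,w \models \mathsf{B}_i \varphi$. Reflexivity of $\equiv_i$ ensures $v = w$ is always in the information set, so no degenerate empty-domain case arises; I expect the maximal-versus-greatest identification (resting on connectedness C2) to be the only nonroutine point, everything else being a direct unfolding of the semantics.
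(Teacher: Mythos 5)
Your proof is correct and follows the natural route the paper intends for this proposition: unfold the relation of the program $\equiv_i ; ([\prec_{i,P}]\bot)?$ via the composition and test clauses, identify $M,v \models [\prec_{i,P}]\bot$ with $\preceq_{i,P}$-maximality, and use C1 and C2 to show that within the information set maximal elements are exactly the greatest ones, i.e.\ the set reached by the program is $\mathit{Best}_{i,P}(w)$. Your closing remark about reflexivity is unnecessary (the equivalence holds even when $\mathit{Best}_{i,P}(w)=\emptyset$, both sides being vacuous), but this does not affect the argument.
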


It is worth noting that the set 
$\mathit{Best}_{i,P}(w)$
in Definition \ref{defBel}
might be empty, since
it is not necessarily the case that
the relation $ \preceq_{i,P}$
is conversely well-founded.\footnote{This means that there could
be a world $v$
such that $w \equiv_i v$
and there is
a $\preceq_{i,P}$-infinite ascending chain from $v$.}
As a consequence, the belief operator 
$\mathsf{B}_i $
does not necessarily satisfy
Axiom D, i.e., the formula $\mathsf{B}_i \varphi \wedge \mathsf{B}_i \neg \varphi $
is satisfiable in the logic $\mathrm{DLCA}$. 

In the literature on epistemic
logic \cite{BaltagSmets},
mere belief of Definition \ref{defBel}
is usually distinguished from strong belief.
Specifically, 
we say that an agent 
strongly
believes
that $\varphi$
if and only if,
according to agent $i$,
all $\varphi$-worlds
are strictly
more plausible than all $\neg \varphi$-worlds.
\begin{definition}[Strong belief]\label{defSbel}
Let $M = (  W,( \preceq_{i,P})_{i \in \mathit{Agt} }, (\preceq_{i,D})_{i \in \mathit{Agt} }  ,
(\equiv_i)_{i \in \mathit{Agt} }  ,
V  )$ be a MCM
and let $w \in W$.
We say that agent $i$ strongly believes that $\varphi$
at $w$, noted $M,w \models \mathsf{SB}_i \varphi  $, if and only if
$
\forall v \in  || \varphi ||_{i,w,M}
\text{ and } \forall u \in  || \neg \varphi ||_{i,w,M} : u \prec_{i,P} v.
$

\end{definition}

As the following proposition highlights,
the previous notion of
strong 
belief is expressible
in the logic $\mathrm{DLCA}$
by means of the cognitive program $ \equiv_i ;  \varphi ? ;  \preceq_{i,P}
$.

\begin{proposition}
Let $M = (  W,( \preceq_{i,P})_{i \in \mathit{Agt} }, (\preceq_{i,D})_{i \in \mathit{Agt} }  ,
(\equiv_i)_{i \in \mathit{Agt} }  ,
V  )$ be a MCM
and let $w \in W$.
Then,  we have
\begin{align*}
M,w \models \mathsf{SB}_i \varphi   \text{ iff }
M,w \models \big[  \equiv_i ;  \varphi ? ;  \preceq_{i,P} \big]  \varphi   .
\end{align*}
\end{proposition}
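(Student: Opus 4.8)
The plan is to unfold the composite accessibility relation $R_{\equiv_i;\varphi?;\preceq_{i,P}}$ from Definition \ref{truthcond} and then argue the two directions of the biconditional by elementary manipulation, the only non-routine ingredients being Constraints C1 and C2 on MCMs. First I would compute, using the clauses for $;$, $?$ and the atomic programs, that $w R_{\equiv_i;\varphi?;\preceq_{i,P}} v$ holds iff there is some $u \in W$ with $w \equiv_i u$, $M,u \models \varphi$ and $u \preceq_{i,P} v$; that is, $v$ is $\preceq_{i,P}$-above some $\varphi$-world lying in agent $i$'s information set at $w$. I would also record the routine computation that $w R_{\prec_{i,P}} v$ holds iff $w \preceq_{i,P} v$ and $v \not\preceq_{i,P} w$: this follows by unwinding the definitions of $\succeq^\sim_{i,P}$, of the converse $-$ and of $\cap$, and using C1 to discard the redundant condition $v \equiv_i w$. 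Hence $\prec_{i,P}$ denotes the standard strict part of the preorder $\preceq_{i,P}$, and Definition \ref{defSbel} says precisely that every $\neg\varphi$-world in $i$'s information set at $w$ is $\preceq_{i,P}$-below, but not $\preceq_{i,P}$-above, every $\varphi$-world there.

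For the right-to-left direction, I would assume $M,w \models [\equiv_i;\varphi?;\preceq_{i,P}]\varphi$ and pick arbitrary $v \in ||\varphi||_{i,w,M}$ and $u \in ||\neg\varphi||_{i,w,M}$. If we had $v \preceq_{i,P} u$, then $v$ itself witnesses $w R_{\equiv_i;\varphi?;\preceq_{i,P}} u$ (since $w \equiv_i v$, $M,v \models \varphi$, $v \preceq_{i,P} u$), so the box formula forces $M,u \models \varphi$, a contradiction; thus $v \not\preceq_{i,P} u$. Since $u$ and $v$ are both $\equiv_i$-related to $w$ and $\equiv_i$ is an equivalence relation, we have $u \equiv_i v$, so Constraint C2 gives $u \preceq_{i,P} v$ or $v \preceq_{i,P} u$, and as the second disjunct is excluded we obtain $u \preceq_{i,P} v$; together with $v \not\preceq_{i,P} u$ this is exactly $u \prec_{i,P} v$, which is what strong belief requires.

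For the left-to-right direction, I would assume $M,w \models \mathsf{SB}_i\varphi$ and take any $v$ with $w R_{\equiv_i;\varphi?;\preceq_{i,P}} v$, witnessed by some $u$ with $w \equiv_i u$, $M,u \models \varphi$ and $u \preceq_{i,P} v$. By C1, $u \preceq_{i,P} v$ yields $u \equiv_i v$, hence $w \equiv_i v$. Suppose towards a contradiction that $M,v \models \neg\varphi$; then $v \in ||\neg\varphi||_{i,w,M}$ and $u \in ||\varphi||_{i,w,M}$, so strong belief gives $v \prec_{i,P} u$, in particular $u \not\preceq_{i,P} v$, contradicting $u \preceq_{i,P} v$. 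Therefore $M,v \models \varphi$, which establishes $M,w \models [\equiv_i;\varphi?;\preceq_{i,P}]\varphi$.

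The step I expect to be the (mild) crux is the appeal to Constraint C2 in the right-to-left direction: knowing that $v$ is not $\preceq_{i,P}$-above $u$ does not by itself place $u$ strictly below $v$, and it is the totality of $\preceq_{i,P}$ on each information set that closes this gap; without C2 the encoding would capture a weaker notion. Everything else is bookkeeping with the semantic clauses of Definition \ref{truthcond}.
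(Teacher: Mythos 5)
Your proof is correct and proceeds exactly as the intended (routine) verification: unfold $R_{\equiv_i;\varphi?;\preceq_{i,P}}$, use C1 to place the reached world in $i$'s information set for the left-to-right direction, and use C2 (totality on the information set) to upgrade ``not $v \preceq_{i,P} u$'' to $u \prec_{i,P} v$ in the converse direction, which is indeed the only non-trivial point. Nothing is missing, and your identification of C2 as the crux matches the role the paper assigns to that constraint.
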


Strong belief that $\varphi$ implies
belief that $\varphi$,
if the agent envisages at least
one state in which $\varphi$ is true.
This property is expressed by the following validity:
\begin{align}
\models_{\mathit{MCM}} & \big( \mathsf{SB}_i \varphi  \wedge \langle \equiv_i \rangle \varphi \big) \rightarrow \mathsf{B}_i \varphi 
\end{align}

Conditional belief is another notion
which has been studied by epistemic
logicians 
given its important role in belief
dynamics \cite{BenthemRevision}.
We say that an agent 
believes
that $\varphi$
conditional
on $\psi$,
or she would believe that $\varphi$
if she learnt that $\psi$,
if and only if,
according to the agent,
all most plausible $\psi$-worlds
are also $ \varphi$-worlds.

\begin{definition}[Conditional belief]\label{condSbel}
Let $M = (  W,( \preceq_{i,P})_{i \in \mathit{Agt} }, (\preceq_{i,D})_{i \in \mathit{Agt} }  ,
(\equiv_i)_{i \in \mathit{Agt} }  ,
V  )$ be a MCM
and let $w \in W$.
We say that agent $i$ would believe that $\varphi$
if she learnt that $\psi$
at $w$, noted $M,w \models \mathsf{B}_i (\psi,\varphi ) $, if and only if
$
\mathit{Best}_{i,P}(\psi, w)\subseteq || \varphi ||_M
$,
where 
$
\mathit{Best}_{i,P}(\psi, w)= \{v \in  || \psi ||_{i,w,M}
: 
\forall u \in  || \psi ||_{i,w,M},  u \preceq_{i,P} v  \}.
$
\end{definition}
Note that $\mathit{Best}_{i,P}(\top, w) = \mathit{Best}_{i,P}( w)$.

As for belief
and strong belief,
we have a specific cognitive
program
 $ \equiv_i ; 
( \psi \wedge [\prec_{i,P}]\neg \psi )?$
corresponding to
the belief
that $\varphi$
conditional on $\psi$,
so that the latter can be represented in 
in the language of the logic $\mathrm{DLCA}$.

\begin{proposition}
Let $M = (  W,( \preceq_{i,P})_{i \in \mathit{Agt} }, (\preceq_{i,D})_{i \in \mathit{Agt} }  ,
(\equiv_i)_{i \in \mathit{Agt} }  ,
 V  )$ be a MCM
and let $w \in W$.
Then,  we have
\begin{align*}
M,w \models  \mathsf{B}_i (\psi,\varphi )   \text{ iff }
M,w \models \big[  \equiv_i ; 
( \psi \wedge [\prec_{i,P}]\neg \psi )? \big]  \varphi  .    
\end{align*}
\end{proposition}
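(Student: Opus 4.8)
The plan is to unfold both sides down to set-theoretic conditions on the worlds reachable from $w$ and then check that these conditions coincide, the only non-routine ingredient being the connectedness constraint C2.

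First I would compute the relation $R_\pi$ for $\pi = \equiv_i ; (\psi \wedge [\prec_{i,P}] \neg \psi)?$. By the semantics of sequential composition and test, $w R_\pi v$ holds iff $w \equiv_i v$ and $M,v \models \psi \wedge [\prec_{i,P}] \neg \psi$. Next I would unfold $[\prec_{i,P}] \neg \psi$ at $v$: recalling that $\prec_{i,P}$ abbreviates $\preceq_{i,P} \cap \succeq_{i,P}^\sim$ and using C1 (so that $v \preceq_{i,P} u$ already forces $v \equiv_i u$), one obtains $v R_{\prec_{i,P}} u$ iff $v \preceq_{i,P} u$ and $u \not\preceq_{i,P} v$; hence $M,v \models [\prec_{i,P}] \neg \psi$ iff for every $u$ with $M,u \models \psi$ and $v \preceq_{i,P} u$ we have $u \preceq_{i,P} v$. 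Combining, $w R_\pi v$ iff $w \equiv_i v$, $M,v \models \psi$, and $v$ is not strictly dominated (w.r.t.\ $\preceq_{i,P}$) by any $\psi$-world.

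The core step is then to prove that this set of worlds equals $\mathit{Best}_{i,P}(\psi,w)$. The inclusion $\mathit{Best}_{i,P}(\psi,w) \subseteq \{v : w R_\pi v\}$ is immediate: if $v$ dominates every $\psi$-world of $i$'s information set at $w$, then a fortiori no $\psi$-world is strictly above it; here one uses C1 to see that any $u$ with $v \preceq_{i,P} u$ lies in the same $\equiv_i$-class as $w$. For the converse inclusion I would argue as follows: suppose $w \equiv_i v$, $M,v \models \psi$, and $v$ is maximal among $\psi$-worlds; take an arbitrary $u \in ||\psi||_{i,w,M}$; since $\equiv_i$ is an equivalence relation, $v \equiv_i u$, so by C2 either $u \preceq_{i,P} v$ or $v \preceq_{i,P} u$, and in the latter case maximality of $v$ forces $u \preceq_{i,P} v$ as well; hence $v$ dominates every $\psi$-world in the information set, i.e.\ $v \in \mathit{Best}_{i,P}(\psi,w)$.

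Once the two sets are identified, the proposition follows: $M,w \models [\pi] \varphi$ iff every $v$ with $w R_\pi v$ satisfies $\varphi$, iff $\mathit{Best}_{i,P}(\psi,w) \subseteq ||\varphi||_M$, iff $M,w \models \mathsf{B}_i(\psi,\varphi)$ by Definition \ref{condSbel}. The main obstacle is precisely the converse inclusion in the core step: this is where C2 is indispensable, since without connectedness of $\preceq_{i,P}$ on each $\equiv_i$-class a locally maximal $\psi$-world need not dominate all $\psi$-worlds and the equivalence would fail. Edge cases (empty $||\psi||_{i,w,M}$, or $\mathit{Best}_{i,P}(\psi,w)$ empty because $\preceq_{i,P}$ is not conversely well-founded) are absorbed by this same argument, since the two sets are shown equal without any well-foundedness assumption.
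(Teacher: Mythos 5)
Your proof is correct and follows the same route the paper intends for this proposition: unfold the program semantics to see that $w R_\pi v$ iff $w \equiv_i v$, $M,v \models \psi$ and no $\psi$-world is strictly $\preceq_{i,P}$-above $v$, then identify this set with $\mathit{Best}_{i,P}(\psi,w)$, using C1 for one inclusion and C2 (totality on the information set) for the other. The handling of the edge cases (empty $||\psi||_{i,w,M}$, no well-foundedness needed) is also as in the paper.
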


 \subsection{Motivational Attitudes I: Desires}

 The first kind of motivational
 attitude we consider is desire.
Following \cite{DuboisLorini},
we say that 
 an agent 
desires
that $\varphi$
if and only if 
all states that
the agent envisages
at which 
$\varphi$
is true are not minimally desirable
for  her.
In other words,
desiring that $\varphi$
consists in having some degree of attraction for
all situations in which
 $\varphi$ is true,
 since minimally desirable states are those to which the agent
 is not attracted at all.
 
\begin{definition}[Desire]\label{defDes}
Let $M = (  W,( \preceq_{i,P})_{i \in \mathit{Agt} }, (\preceq_{i,D})_{i \in \mathit{Agt} }  ,
(\equiv_i)_{i \in \mathit{Agt} }  ,
V  )$ be a MCM
and let $w \in W$.
We say that agent $i$ desires that $\varphi$
at $w$, noted $M,w \models \mathsf{D}_i \varphi  $, if and only if
$
\mathit{Worst}_{i,D}(w)\cap || \varphi ||_M = \emptyset
$,
where 
$
\mathit{Worst}_{i,D}(w)= \{v \in W: w \equiv_i v \text{ and }
\forall u \in W, \text{ if } w \equiv_i u  \text{ then } v \preceq_{i,D} u  \}.
$
\end{definition}
As the following proposition highlights,
the previous notion of
desire
is characterized by
the cognitive program
 $ \equiv_i ; 
 [\succ_{i,D}]\bot ?$.
\begin{proposition}
Let $M = (  W,( \preceq_{i,P})_{i \in \mathit{Agt} }, (\preceq_{i,D})_{i \in \mathit{Agt} }  ,
(\equiv_i)_{i \in \mathit{Agt} }  ,
V  )$ be a MCM
and let $w \in W$.
Then,  we have
\begin{align*}
M,w \models \mathsf{D}_i \varphi   \text{ iff }
M,w \models \big[  \equiv_i ; 
 [\succ_{i,D}]\bot ? \big] \neg \varphi      .
\end{align*}
\end{proposition}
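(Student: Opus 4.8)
The plan is to peel off the program constructors in $\equiv_i ; [\succ_{i,D}]\bot ?$ one layer at a time, reducing the claim to the set-theoretic identity that defines $\mathsf{D}_i$. First I would apply the truth conditions for sequential composition and test from Definition \ref{truthcond}: for an arbitrary formula $\psi$ we have $w R_{\equiv_i ; \psi?} v$ iff there is $u$ with $w \equiv_i u$, $u = v$ and $M,u \models \psi$, i.e.\ iff $w \equiv_i v$ and $M,v \models \psi$. Hence $M,w \models [\equiv_i ; \psi?]\chi$ iff $M,v \models \chi$ for every $v$ with $w \equiv_i v$ and $M,v \models \psi$. Instantiating $\psi$ with $[\succ_{i,D}]\bot$ and $\chi$ with $\neg\varphi$, the right-hand side of the Proposition becomes: $M,v \models \neg\varphi$ for all $v$ such that $w \equiv_i v$ and $M,v \models [\succ_{i,D}]\bot$.

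Next I would identify the set $X_w =_{\mathit{def}} \{v \in W : w \equiv_i v \text{ and } M,v \models [\succ_{i,D}]\bot\}$ with $\mathit{Worst}_{i,D}(w)$. Unfolding the abbreviations $\succeq_{i,D} =_{\mathit{def}} -\preceq_{i,D}$ and $\succ_{i,D} =_{\mathit{def}} \succeq_{i,D}\cap\preceq_{i,D}^\sim$ together with the clauses for converse, intersection and complement, one gets $v R_{\succ_{i,D}} u$ iff $u \preceq_{i,D} v$ and $v \equiv_i u$ and $v \not\preceq_{i,D} u$; by Constraint C1 the middle conjunct is implied by the first, so $v R_{\succ_{i,D}} u$ iff $u \preceq_{i,D} v$ and $v \not\preceq_{i,D} u$. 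Consequently $M,v \models [\succ_{i,D}]\bot$ iff there is no $u$ with $u \preceq_{i,D} v$ and $v \not\preceq_{i,D} u$. The inclusion $\mathit{Worst}_{i,D}(w) \subseteq X_w$ is then immediate: if $v$ lies $\preceq_{i,D}$-below every world in $i$'s information set at $w$ and $u \preceq_{i,D} v$, then $u \equiv_i v \equiv_i w$ by C1, whence $v \preceq_{i,D} u$, so $v$ has no strict $\preceq_{i,D}$-predecessor. For the converse inclusion $X_w \subseteq \mathit{Worst}_{i,D}(w)$ I would invoke Constraint C2: given $v \in X_w$ and any $u$ with $w \equiv_i u$, we have $v \equiv_i u$, so by C2 either $v \preceq_{i,D} u$, and we are done, or $u \preceq_{i,D} v$, in which case $v \not\preceq_{i,D} u$ would contradict $v \in X_w$, so again $v \preceq_{i,D} u$; hence $v \in \mathit{Worst}_{i,D}(w)$.

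Combining the two reductions, the right-hand side of the Proposition asserts precisely that $M,v \models \neg\varphi$ for every $v \in \mathit{Worst}_{i,D}(w)$, i.e.\ $\mathit{Worst}_{i,D}(w) \cap || \varphi ||_M = \emptyset$, which by Definition \ref{defDes} is $M,w \models \mathsf{D}_i \varphi$. The only nonroutine point is the equivalence used in the second paragraph between ``$v$ has no strict $\preceq_{i,D}$-predecessor in $i$'s information set'' and ``$v$ is a $\preceq_{i,D}$-minimum of that information set''; this is where the totality assumption C2 (and, for the membership bookkeeping, C1) is essential, and it mirrors the argument behind the analogous propositions for belief and strong belief stated earlier in this section.
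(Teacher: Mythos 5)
Your proof is correct and proceeds exactly as the paper intends for these (routine) semantic characterizations: unfold the truth conditions for $;$ and $?$, compute $R_{\succ_{i,D}}$ from the defined abbreviations, and use Constraints C1 and C2 to identify the worlds satisfying $[\succ_{i,D}]\bot$ inside $i$'s information set with $\mathit{Worst}_{i,D}(w)$, after which the claim is just Definition of $\mathsf{D}_i$. In particular, your observation that totality (C2) is what turns ``no strict $\preceq_{i,D}$-predecessor'' into ``$\preceq_{i,D}$-minimum'' is precisely the one non-trivial step, and you handle it correctly.
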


Similarly to the set $\mathit{Best}_{i,P}(w)$
in Definition \ref{defBel},
the set 
$\mathit{Worst}_{i,D}(w)$
in Definition \ref{defDes}
might be empty, since
it is not necessarily the case that
the relation $ \preceq_{i,D}$
is  well-founded.\footnote{This means that there could
be a world $v$
such that $w \equiv_i v$
and there is
a $\preceq_{i,D}$-infinite descending chain from $v$.}
As a consequence,  desires 
are not necessarily consistent
and an agent may desire the tautology, i.e.,
the formulas
$\mathsf{D}_i \varphi \wedge \mathsf{D}_i \neg \varphi $
and
 $\mathsf{D}_i \top $
are satisfiable in the logic $\mathrm{DLCA}$. 
As emphasized by  \cite{DuboisLorini},
this notion of desire satisfies
the following property:
\begin{align}
\models_{\mathit{MCM}} & \mathsf{D}_i \varphi
\rightarrow \mathsf{D}_i ( \varphi \wedge \psi)
\end{align}
Indeed, if an agent
has  some degree of attraction for
all situations in which
 $\varphi$ is true then, clearly,
 it should have some degree
 of attraction for all situations
 in which  $\varphi \wedge \psi$ is true,
 since all $\varphi \wedge \psi$-situations
 are also
 $\varphi $-situations.

Note that there
is no counterpart
of this property
for belief, 
as the formula
$\mathsf{B}_i \varphi
\rightarrow \mathsf{B}_i ( \varphi \wedge \psi)$
is clearly not valid.\footnote{See
\cite{DuboisLorini} for more details
about the differences between
the notion of belief and the notion of desire.}

 It is a property
 that the
 notion
 of desire shares
 with the \emph{open reading}
 of the concept of
permission
 studied in the area of deontic logic (see, e.g., \cite{Anglberger,LewisPermission}).\footnote{
 According to deontic logicians, there are at least two candidate readings of the statement
  ``$\varphi$
  is permitted'': (i) every instance of $\varphi$ is OK according to the normative regulation,
  and (ii) at least one instance of $\varphi$ (but possibly not all) is OK according to the
normative regulation. The former is the so-called \emph{open reading} of permission.
 }
 One way of blocking this inference
 is by strengthening the notion
 of desire.
 We  say that an agent 
strongly
desires
that $\varphi$
if and only if,
according to agent $i$,
all $\varphi$-worlds
are strictly
more desirable than all $\neg \varphi$-worlds.
\begin{definition}[Strong desire]\label{defSbel}
Let $M = (  W,( \preceq_{i,P})_{i \in \mathit{Agt} }, (\preceq_{i,D})_{i \in \mathit{Agt} }  ,
(\equiv_i)_{i \in \mathit{Agt} }  ,
 V  )$ be a MCM
and let $w \in W$.
We say that agent $i$ strongly desires that $\varphi$
at $w$, noted $M,w \models \mathsf{SD}_i \varphi  $, if and only if
$
\forall v \in  || \varphi ||_{i,w,M}
\text{ and } \forall u \in  || \neg \varphi ||_{i,w,M} : u \prec_{i,D} v.
$

\end{definition}
As for desire,
there exists
a cognitive program
which characterizes
strong desire, namely,
the program $  \equiv_i ;  \varphi ? ;  \preceq_{i,D}$.

\begin{proposition}
Let $M = (  W,( \preceq_{i,P})_{i \in \mathit{Agt} }, (\preceq_{i,D})_{i \in \mathit{Agt} }  ,
(\equiv_i)_{i \in \mathit{Agt} }  ,
V  )$ be a MCM
and let $w \in W$.
Then,  we have
\begin{align*}
M,w \models \mathsf{SD}_i \varphi   \text{ iff }
M,w \models \big[  \equiv_i ;  \varphi ? ;  \preceq_{i,D} \big]  \varphi      .
\end{align*}
\end{proposition}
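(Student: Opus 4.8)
The plan is to unfold the relational semantics of the composite program $\equiv_i ; \varphi? ; \preceq_{i,D}$ and then run two short contrapositive arguments, exactly mirroring the proof of the strong-belief proposition with $\preceq_{i,P}$ replaced by $\preceq_{i,D}$. First I would compute $R_{\equiv_i ; \varphi? ; \preceq_{i,D}}$ from the inductive clauses for ``$;$'' and ``$?$'': one gets that $w R_{\equiv_i ; \varphi? ; \preceq_{i,D}} v$ holds iff there is some $u \in W$ with $w \equiv_i u$, $M,u \models \varphi$ and $u \preceq_{i,D} v$. Hence $M,w \models [\equiv_i ; \varphi? ; \preceq_{i,D}]\varphi$ is equivalent to: for all $u,v \in W$, if $w \equiv_i u$, $M,u \models \varphi$ and $u \preceq_{i,D} v$, then $M,v \models \varphi$. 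I would also record the expansion of the derived strict relation: $x \prec_{i,D} y$ iff $x \preceq_{i,D} y$ and $y \not\preceq_{i,D} x$ (the redundant $\equiv_i$-conjunct coming from $\succeq_{i,D}^\sim = -\preceq_{i,D}^\sim$ being discharged by Constraint C1).

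For the ``$\Rightarrow$'' direction, assume $M,w \models \mathsf{SD}_i \varphi$ and take $u,v$ witnessing the antecedent above, so $w \equiv_i u$, $M,u \models \varphi$ and $u \preceq_{i,D} v$. By Constraint C1, $u \preceq_{i,D} v$ yields $u \equiv_i v$, hence by transitivity and symmetry of $\equiv_i$ we get $w \equiv_i v$. If, towards a contradiction, $M,v \models \neg\varphi$, then $v \in ||\neg\varphi||_{i,w,M}$ while $u \in ||\varphi||_{i,w,M}$, so strong desire forces $v \prec_{i,D} u$, i.e.\ $u \not\preceq_{i,D} v$, contradicting $u \preceq_{i,D} v$. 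Therefore $M,v \models \varphi$, which is precisely what the box formula requires. For ``$\Leftarrow$'', assume $M,w \models [\equiv_i ; \varphi? ; \preceq_{i,D}]\varphi$ and take arbitrary $v \in ||\varphi||_{i,w,M}$ and $u \in ||\neg\varphi||_{i,w,M}$; the goal is $u \prec_{i,D} v$. From $w \equiv_i u$ and $w \equiv_i v$ we get $u \equiv_i v$. If $v \preceq_{i,D} u$, then $u$ is $R_{\equiv_i ; \varphi? ; \preceq_{i,D}}$-reachable from $w$ via the witness $v$, so the hypothesis gives $M,u \models \varphi$, contradicting $M,u \models \neg\varphi$; hence $v \not\preceq_{i,D} u$. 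Applying Constraint C2 to $u \equiv_i v$ gives $u \preceq_{i,D} v$ or $v \preceq_{i,D} u$, and since the second disjunct is excluded we obtain $u \preceq_{i,D} v$. Together with $v \not\preceq_{i,D} u$ this is exactly $u \prec_{i,D} v$.

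The only real subtlety — and hence the ``hard part'' — is the bookkeeping: keeping track of the direction of the converse and complement operators when expanding $\prec_{i,D} = \preceq_{i,D} \cap \succeq_{i,D}^\sim$, and invoking C1 and C2 at the right moments to pass between $\equiv_i$-membership and $\preceq_{i,D}$-comparability. Everything else is routine, and indeed the argument is verbatim the one used for the strong-belief proposition with the index $P$ replaced by $D$; a reader could be referred there instead of repeating it.
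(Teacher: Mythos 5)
Your proof is correct: the relational unfolding of $\equiv_i ; \varphi? ; \preceq_{i,D}$, the reduction of $\prec_{i,D}$ to ``$\preceq_{i,D}$ and not conversely'' via C1, and the two contrapositive arguments using C1 (to stay inside the information set) and C2 (to obtain comparability) are exactly the routine semantic verification the paper intends for this proposition, mirroring the strong-belief case with $P$ replaced by $D$.
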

We have that strong desire implies desire,
when the agent envisages at least one state in which $\varphi$
is false:
\begin{align}
\models_{\mathit{MCM}} & (  \mathsf{SD}_i \varphi  \wedge \langle \equiv_i \rangle \neg \varphi )
\rightarrow \mathsf{D}_i \varphi 
\end{align}
Unlike desire,
it is not necessarily the case
that strongly desiring that $\varphi$
implies
strongly desiring  that $\varphi \wedge \psi$,
i.e., $\mathsf{SD}_i \varphi   \wedge \neg \mathsf{SD}_i (\varphi \wedge \psi)  $ is satisfiable  in the logic $\mathrm{DLCA}$.
Indeed,
 strongly  desiring that $\varphi$
is compatible with 
envisaging a situation in which 
$\varphi \wedge \psi$
holds
and another situation
in which $\varphi \wedge \neg \psi$
holds
such that the first situation
is less desirable than the second.

The last motivational attitude we consider
is conditional desire which parallels the notion
of conditional belief of Definition \ref{condSbel}.
We say that an agent 
desires
that $\varphi$
conditional
on $\psi$,
or she would desire that $\varphi$
if she started to desire that $\psi$,
if and only if,
according to agent $i$,
there is no least desirable $\neg \psi$-world
which is also 
a  $\varphi$-world.
The idea behind this notion is the following. 
If the agent started to desire that $\psi$,
all $\psi$-worlds
would start to have some degree of attraction for her
and the least desirable  $\neg \psi$-worlds would become the minimally desirable worlds.
Therefore, the fact that there  is no least desirable 
 $\neg \psi$-world
which is also 
a  $\varphi$-world guarantees that, if the agent started to desire that $\psi$, 
no $\varphi $-world would be included in the set of 
minimally desirable worlds
for the agent. The latter means that,
if the agent started to desire that $\psi$, 
all $\varphi $-worlds would have
some degree of attraction for her
and she would desire that $\varphi$.

\begin{definition}[Conditional desire]\label{condDesire}
Let $M = (  W,( \preceq_{i,P})_{i \in \mathit{Agt} }, (\preceq_{i,D})_{i \in \mathit{Agt} }  ,
(\equiv_i)_{i \in \mathit{Agt} }  ,
V  )$ be a MCM
and let $w \in W$.
We say that agent $i$ would desire that $\varphi$
if she started to desire that $\psi$
at $w$, noted $M,w \models \mathsf{D}_i (\psi,\varphi ) $, if and only if
$
\mathit{Worst}_{i,D}(\neg \psi, w)\cap || \varphi ||_M= \emptyset
$,
with
$
\mathit{Worst}_{i,D}(\neg\psi, w)= \{v \in  ||  \neg \psi ||_{i,w,M}
: 
\forall u \in  || \neg \psi ||_{i,w,M},  v \preceq_{i,D} u  \}.
$
\end{definition}

As for the other cognitive attitudes,
there is a specific  cognitive
program
which characterizes conditional desire.

\begin{proposition}
Let $M = (  W,( \preceq_{i,P})_{i \in \mathit{Agt} }, (\preceq_{i,D})_{i \in \mathit{Agt} }  ,
(\equiv_i)_{i \in \mathit{Agt} }  ,
 V  )$ be a MCM
and let $w \in W$.
Then,  we have
\begin{align*}
M,w \models \mathsf{D}_i (\psi,\varphi )  \text{ iff }
M,w \models \big[  \equiv_i ; 
( \neg \psi \wedge [\succ_{i,D}] \psi )? \big] \neg \varphi  .    
\end{align*}
\end{proposition}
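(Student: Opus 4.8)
The plan is to reduce both sides to the same condition on $w$ by mechanically unfolding the PDL semantics on the right and comparing it with Definition~\ref{condDesire}. Write $\pi = \equiv_i;(\neg\psi\wedge[\succ_{i,D}]\psi)?$. First I would compute the relation $R_\pi$: by the clauses of Definition~\ref{truthcond} for sequential composition and test, $w\,R_\pi\,v$ holds iff $w\equiv_i v$ and $M,v\models\neg\psi\wedge[\succ_{i,D}]\psi$. Then I would expand $[\succ_{i,D}]\psi$ using the abbreviations $\succ_{i,D}=\succeq_{i,D}\cap\preceq_{i,D}^\sim$ and $\succeq_{i,D}=-\preceq_{i,D}$ together with the semantic clauses for $-$, $\cap$ and $\preceq_{i,D}^\sim$; this yields $v\,R_{\succ_{i,D}}\,u$ iff $u\preceq_{i,D}v$ and $v\not\preceq_{i,D}u$ (the side condition $v\equiv_i u$ in the clause for $\preceq_{i,D}^\sim$ being automatic from $u\preceq_{i,D}v$ by C1). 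Hence $M,v\models[\succ_{i,D}]\psi$ iff every $u$ with $u\preceq_{i,D}v$ and $v\not\preceq_{i,D}u$ satisfies $\psi$; and since all such $u$ lie in $\equiv_i\!(w)=\equiv_i\!(v)$, this is equivalent to: there is no $u\in||\neg\psi||_{i,w,M}$ with $u\preceq_{i,D}v$ and $v\not\preceq_{i,D}u$. Combining, $w\,R_\pi\,v$ iff $v\in||\neg\psi||_{i,w,M}$ and no $\neg\psi$-world in $i$'s information set at $w$ is strictly less desirable than $v$.

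The key step is then to show that $\{v : w\,R_\pi\,v\}=\mathit{Worst}_{i,D}(\neg\psi,w)$, i.e. that the condition just obtained is equivalent to ``$v\in||\neg\psi||_{i,w,M}$ and $v\preceq_{i,D}u$ for every $u\in||\neg\psi||_{i,w,M}$''. For $u,v\in||\neg\psi||_{i,w,M}$ we have $v\equiv_i u$, so by totality on the information set (Constraint C2) either $v\preceq_{i,D}u$ or $u\preceq_{i,D}v$. From this one checks the pointwise equivalence: ``$v\preceq_{i,D}u$'' iff ``not ($u\preceq_{i,D}v$ and $v\not\preceq_{i,D}u$)'' — the right-to-left direction is where C2 is used, the left-to-right direction being trivial. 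Quantifying this equivalence over all $u\in||\neg\psi||_{i,w,M}$ turns the $\mathit{Worst}$-membership condition into exactly the $R_\pi$-reachability condition, so the two sets coincide. (Constraint C1 is what guarantees that the witnesses $u$ for $v\,R_{\succ_{i,D}}\,u$ range only over $i$'s information set, so nothing is lost by restricting attention to $||\neg\psi||_{i,w,M}$.)

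Finally I would assemble the pieces: by the truth condition for $[\pi]$, $M,w\models[\pi]\neg\varphi$ iff $M,v\models\neg\varphi$ for every $v$ with $w\,R_\pi\,v$, i.e. iff $\{v:w\,R_\pi\,v\}\cap||\varphi||_M=\emptyset$, i.e. iff $\mathit{Worst}_{i,D}(\neg\psi,w)\cap||\varphi||_M=\emptyset$, which by Definition~\ref{condDesire} is precisely $M,w\models\mathsf{D}_i(\psi,\varphi)$. The only genuinely non-routine point is the middle step, where the ``universally dominated'' formulation of $\mathit{Worst}_{i,D}(\neg\psi,w)$ is matched against the ``no strictly-lower $\neg\psi$-witness'' formulation coming from the box $[\succ_{i,D}]$; this hinges on C1 and on the totality Constraint C2. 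Everything else is bookkeeping with the program abbreviations $\succeq_{i,D},\succ_{i,D},\preceq_{i,D}^\sim$ and the clauses of Definition~\ref{truthcond}.
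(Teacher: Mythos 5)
Your proof is correct and is essentially the intended argument: you unfold $R_{\equiv_i;(\neg\psi\wedge[\succ_{i,D}]\psi)?}$ via the clauses for $;$, $?$, $\cap$, $-$ and $\preceq_{i,D}^\sim$, and then show $\{v: w R_\pi v\}=\mathit{Worst}_{i,D}(\neg\psi,w)$ using C1 to confine the $\succ_{i,D}$-witnesses to $i$'s information set and C2 to convert ``no strictly less desirable $\neg\psi$-world'' into ``$\preceq_{i,D}$-dominated by every $\neg\psi$-world''. The paper treats this proposition as a routine consequence of the definitions, and your semantic-unfolding argument, including the correct identification of where C1 and C2 are needed, matches that route.
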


 In 
Section \ref{LogicPart},
we   emphasized
that
 the  relations
 $\preceq_{i,P}$
 and $\preceq_{i,D}$
 do not interact since
our logic 
is aimed at modeling
 ideal
 rational 
 agents 
 with no wishful thinking
 and, more generally, 
 with no cognitive biases.
 We conclude this 
 section by showing how
 the assumption of independence
 between
 epistemic plausibility and
desirability could be relaxed
 and, consequently, how wishful
 thinking could be modeled
 in our framework.
 
 A wishful thinker is nothing
 but an agent who systematically
 believes
 what she strongly desires
 in the absence of a reason to believe
 the contrary.
 Such a connection between
 the agent's beliefs
 and desires is captured
 by the following
 ``wishful thinking'' (WT)
  constraint on
 MCMs:
  \begin{align*}
\forall w \in W: \mathit{Best}_{i,P}(w) \subseteq 
   \mathit{Best}_{i,D}(w) \text{ or }
     \mathit{Best}_{i,P}(w) \subseteq 
  \mathit{Worst}_{i,D}(w),
  \end{align*}
  where 
  $\mathit{Best}_{i,P}(w)$
  and
  $\mathit{Worst}_{i,D}(w)$
  are defined as in Definitions \ref{defBel}
  and \ref{defDes},
  and $\mathit{Best}_{i,D}(w)= \{v \in W: w \equiv_i v \text{ and }
\forall u \in W, \text{ if } w \equiv_i u  \text{ then } u \preceq_{i,D} v  \}$.
It is routine
to verify that if the MCM
$M = (  W,( \preceq_{i,P})_{i \in \mathit{Agt} }, (\preceq_{i,D})_{i \in \mathit{Agt} }  ,
(\equiv_i)_{i \in \mathit{Agt} }  ,
 V  )$
satisfies the previous constraint WT,
then the following holds for every $w \in W$:
  \begin{align*}
 M,w \models 
( \mathsf{SD}_i \varphi \wedge \neg \mathsf{B}_i \neg
\varphi
) \rightarrow \mathsf{B}_i \varphi.
  \end{align*}
  We leave for future work
  an 
  in-depth analysis
 of the variant of our logic
  in which 
  wishful thinking
is enabled.

\subsection{Motivational Attitudes II: Preferences }\label{choiceSect}

We consider two views about  comparative statements between
formulas
of the form  
  ``agent $i$ prefers $\varphi $ to $\psi$''
  or  ``the state of affairs 
 $\varphi$  is for agent $i$ at least as good as 
 the state of affairs $\psi$''.
 According to the optimistic view, 
 when assessing
 whether $\varphi$
 is at least as good as $\psi$,
 an agent focuses on the best $\varphi$-situations 
 in comparison with  the best $\psi$-situations.
 Specifically,
 an  ``optimistic'' agent $i$ prefers $\varphi$
 to $\psi$
 if and only if,
 for every $\psi$-situation
 envisaged by $i$
 there exists a $\varphi$-situation
 envisaged by $i$ such that
 the latter is at least as desirable
 as the former. 
 According to the pessimistic view, 
 she  focuses on the worst $\varphi$-situations 
 in comparison with  the worst $\psi$-situations.
  Specifically,
   a  ``pessimistic'' 
   agent $i$ prefers $\varphi$
 to $\psi$
 if and only if,
 for every $\varphi$-situation
 envisaged by $i$
 there exists a $\psi$-situation
 envisaged by $i$ such that
 the former is at least as desirable
 as the latter.

Let us first define 
a dyadic operator for preference
according to the optimistic view.
\begin{definition}[Preference: optimistic view]\label{CompDesO}
Let $M = (  W,( \preceq_{i,P})_{i \in \mathit{Agt} },  (\preceq_{i,D})_{i \in \mathit{Agt} }  ,
(\equiv_i)_{i \in \mathit{Agt} }  ,
V  )$ be a MCM
and let $w \in W$.
We say that, according to agent $i$'s optimistic assessment,
$\varphi$
is at least as good as $\psi$
at $w$, noted $M,w \models \mathsf{P}_i^{\mathit{Opt}} (\psi \preceq \varphi  )$, if and only if
$
\forall u \in  || \psi ||_{i,w,M},
\exists v \in  || \varphi ||_{i,w,M} : u \preceq_{i,D} v.
$

\end{definition}

As the following proposition highlights,
it
is expressible
in the language $\mathcal{L}_{\mathrm{DLCA} }$.

\begin{proposition}
Let $M = (  W,( \preceq_{i,P})_{i \in \mathit{Agt} }, (\preceq_{i,D})_{i \in \mathit{Agt} }  ,
(\equiv_i)_{i \in \mathit{Agt} }  ,
V  )$ be a MCM
and let $w \in W$.
Then,  we have
\begin{align*}
M,w \models\mathsf{P}_i^{\mathit{Opt}} (\psi \preceq \varphi  )   \text{ iff }
M,w \models \big[  \equiv_i ; \psi ? \big]   \langle \preceq_{i,D}\rangle \varphi .
\end{align*}
\end{proposition}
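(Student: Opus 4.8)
The plan is to unfold both sides of the biconditional with the truth conditions of Definition \ref{truthcond} and then match them, the only genuinely non-routine ingredient being Constraint C1. First I would compute the accessibility relation of the compound program $\equiv_i ; \psi?$: by the clauses for sequential composition and for test, $w R_{\equiv_i;\psi?} v$ holds iff there is $u$ with $w \equiv_i u$, $u = v$ and $M,v \models \psi$, i.e.\ iff $w \equiv_i v$ and $M,v \models \psi$, which is exactly the condition $v \in || \psi ||_{i,w,M}$. Consequently $M,w \models [\equiv_i ; \psi?]\langle \preceq_{i,D}\rangle \varphi$ iff $M,v \models \langle \preceq_{i,D}\rangle \varphi$ for every $v \in || \psi ||_{i,w,M}$, and, spelling out the dual, $M,v \models \langle \preceq_{i,D}\rangle \varphi$ iff there is $v'$ with $v \preceq_{i,D} v'$ and $M,v' \models \varphi$.

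For the left-to-right direction I would fix an arbitrary $v \in || \psi ||_{i,w,M}$ and apply Definition \ref{CompDesO} to obtain some $v' \in || \varphi ||_{i,w,M}$ with $v \preceq_{i,D} v'$; since in particular $M,v' \models \varphi$, this witnesses $M,v \models \langle \preceq_{i,D}\rangle \varphi$, and as $v$ was arbitrary the right-hand side follows. For the converse I would fix an arbitrary $u \in || \psi ||_{i,w,M}$, note $w R_{\equiv_i;\psi?} u$, and use the right-hand side to get $v$ with $u \preceq_{i,D} v$ and $M,v \models \varphi$. The step needing care is that Definition \ref{CompDesO} asks for $v \in || \varphi ||_{i,w,M}$, i.e.\ also $w \equiv_i v$: here I would invoke Constraint C1, which yields $u \equiv_i v$ from $u \preceq_{i,D} v$, and then compose with $w \equiv_i u$, using that $\equiv_i$ is an equivalence relation, to conclude $w \equiv_i v$. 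Hence $v \in || \varphi ||_{i,w,M}$ together with $u \preceq_{i,D} v$ establishes the optimistic preference condition at $w$.

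The only real obstacle is this last bookkeeping with C1 (the witness supplied by $\langle \preceq_{i,D}\rangle\varphi$ must be shown to lie in agent $i$'s information set); everything else is a direct reading of the semantics. I would therefore present the argument as the one-off computation of $R_{\equiv_i;\psi?}$ followed by the two short implications above.
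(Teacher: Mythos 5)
Your proof is correct and follows essentially the same route as the paper's: a direct unfolding of the truth conditions showing $R_{\equiv_i;\psi?}(w)=||\psi||_{i,w,M}$, and then matching the two quantifier patterns, with Constraint C1 (plus transitivity of $\equiv_i$) supplying the only non-trivial step, namely that the witness for $\langle\preceq_{i,D}\rangle\varphi$ lies in agent $i$'s information set at $w$. Nothing is missing.
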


Let us now define
preference
according to the pessimistic view.
\begin{definition}[Preference: pessimistic view]\label{CompDesP}
Let $M = (  W,( \preceq_{i,P})_{i \in \mathit{Agt} }, (\preceq_{i,D})_{i \in \mathit{Agt} }  ,
(\equiv_i)_{i \in \mathit{Agt} }  ,
 V  )$ be a MCM
and let $w \in W$.
We say that, according to agent $i$'s pessimistic assessment,
$\varphi$
is at least as good as $\psi$
at $w$, noted $M,w \models \mathsf{P}_i^{\mathit{Pess}}(\psi \preceq \varphi  )$, if and only if
$
\forall v \in  || \varphi ||_{i,w,M},
\exists u \in  || \psi ||_{i,w,M} : u \preceq_{i,D} v.
$

\end{definition}

As for the optimistic view,
the
pessimistic view 
is also expressible
in the language $\mathcal{L}_{\mathrm{DLCA} }$.

\begin{proposition}
Let $M = (  W,( \preceq_{i,P})_{i \in \mathit{Agt} }, (\preceq_{i,D})_{i \in \mathit{Agt} }  ,
(\equiv_i)_{i \in \mathit{Agt} }  ,
V  )$ be a MCM
and let $w \in W$.
Then,  we have
\begin{align*}
M,w \models\mathsf{P}_i^{\mathit{Pess}} (\psi \preceq \varphi  )   \text{ iff }
M,w \models \big[  \equiv_i ; \varphi ? \big]   \langle \succeq_{i,D}\rangle \psi.
\end{align*}
\end{proposition}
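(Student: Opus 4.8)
The plan is to unfold both sides of the biconditional using Definition~\ref{truthcond} together with the abbreviation $\succeq_{i,D} =_{\mathit{def}} -\preceq_{i,D}$, reducing each side to an explicit first-order condition over $W$, and then to match the two conditions by invoking Constraint~C1 and the fact that $\equiv_i$ is an equivalence relation.

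First I would simplify the relation $R_{\equiv_i ; \varphi?}$. By the clauses for sequential composition and test, $w R_{\equiv_i ; \varphi?} v$ holds iff there is $u$ with $w \equiv_i u$, $u = v$ and $M,u \models \varphi$; that is, iff $w \equiv_i v$ and $M,v \models \varphi$, i.e. iff $v \in || \varphi ||_{i,w,M}$. Hence $M,w \models [\equiv_i ; \varphi?]\langle \succeq_{i,D}\rangle \psi$ holds iff $M,v \models \langle \succeq_{i,D}\rangle \psi$ for every $v \in || \varphi ||_{i,w,M}$.

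Next I would unpack $\langle \succeq_{i,D}\rangle \psi$. Since $\succeq_{i,D}$ abbreviates $-\preceq_{i,D}$, the converse clause gives $v R_{\succeq_{i,D}} u$ iff $u \preceq_{i,D} v$, so $M,v \models \langle \succeq_{i,D}\rangle \psi$ iff there is $u \in W$ with $u \preceq_{i,D} v$ and $M,u \models \psi$. Combining the two computations, the right-hand side of the proposition is equivalent to: for every $v \in || \varphi ||_{i,w,M}$ there is $u \in W$ with $u \preceq_{i,D} v$ and $M,u \models \psi$. The left-hand side, by Definition~\ref{CompDesP}, says: for every $v \in || \varphi ||_{i,w,M}$ there is $u \in || \psi ||_{i,w,M}$ with $u \preceq_{i,D} v$.

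It then remains to reconcile the witness condition ``$u \in W$ and $M,u \models \psi$'' with ``$u \in || \psi ||_{i,w,M}$'', i.e. to establish $w \equiv_i u$ for the witness. The direction from the left-hand side to the right-hand side is immediate, since $u \in || \psi ||_{i,w,M}$ gives in particular $M,u \models \psi$. For the converse, given such a witness $u$ for some $v \in || \varphi ||_{i,w,M}$, Constraint~C1 ($\preceq_{i,D} \, \subseteq \, \equiv_i$) yields $u \equiv_i v$; combining this with $w \equiv_i v$ and using that $\equiv_i$ is symmetric and transitive gives $w \equiv_i u$, hence $u \in || \psi ||_{i,w,M}$, as required. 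This appeal to C1 — the only place where a model constraint rather than plain relational bookkeeping is used — is the only step that needs any care; everything else is a routine expansion of the semantics of the derived programs.
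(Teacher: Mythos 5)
Your proof is correct: the reduction of $R_{\equiv_i;\varphi?}$ and of $\langle\succeq_{i,D}\rangle\psi$ via the converse clause is exactly the intended semantic unfolding, and you rightly identify the one non-routine step, namely using Constraint C1 together with symmetry and transitivity of $\equiv_i$ to pull the witness $u$ back into $||\psi||_{i,w,M}$. This matches the paper's (routine) argument for this expressibility proposition, so nothing further is needed.
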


Thanks to the totality of the relation $ \preceq_{i,D}$  (Constraint C2 in Definition \ref{DefSemanticsModel}),
dyadic preference over formulas is total too. This fact is illustrated by the following validity.
For every $x \in \{\mathit{Opt}, \mathit{Pess}\}$:
\begin{align}
\models_{\mathit{MCM}} &   \mathsf{P}_i^{x} (\psi \preceq \varphi  )  \vee  \mathsf{P}_i^{x} (\varphi \preceq \psi  ) 
\end{align}
To see this suppose $M,w \models  \neg  \mathsf{P}_i^{\mathit{Opt}} (\psi \preceq \varphi  ) $
for an arbitrary model $M$
and world $w$ in $M$. Because of Constraint C2 in Definition \ref{DefSemanticsModel},
the latter implies that
$
\exists u \in  || \psi ||_{i,w,M},
\forall v \in  || \varphi ||_{i,w,M} : v \prec_{i,D} u
$. Therefore, $
\forall v \in  || \varphi ||_{i,w,M},
\exists u \in  || \psi ||_{i,w,M} : v \preceq_{i,D} u
$ which is equivalent to $M,w \models    \mathsf{P}_i^{\mathit{Opt}} (\varphi \preceq \psi  ) $.
The case $x= \mathit{Pess}$
can be proved in an analogous way.

The previous notion of (optimistic and pessimistic) preference
does not depend on what the agent believes. This means
that, in order to assess whether $\varphi$
is at least as good as $\psi$,
an agent also takes into account worlds
that are implausible (or, more generally, 
not maximally plausible). 
\emph{Realistic}
preference
requires that an agent
compares two formulas $\varphi$
and $\psi$
only with respect to the set of most plausible states.
This idea has been discussed
in the area of qualitative decision theory
by different authors \cite{Boutilier94,BrafmanTennen,BrafmanTennen2}.

%
%

%

The following definition
introduces 
\emph{realistic} preference according to the optimistic view.

\begin{definition}[Realistic preference: optimistic view]\label{RChoiceO}
Let $M = (  W, ( \preceq_{i,P})_{i \in \mathit{Agt} }, (\preceq_{i,D})_{i \in \mathit{Agt} }  ,
(\equiv_i)_{i \in \mathit{Agt} }  ,
V  )$ be a MCM
and let $w \in W$.
We say that, according to agent $i$'s optimistic assessment,
$\varphi$
is realistically at least as good as $\psi$
at $w$, noted $M,w \models \mathsf{RP}_i^{\mathit{Opt}} (\psi \preceq \varphi  )$, if and only if
$
\forall u \in  \mathit{Best}_{i,P}(w) \cap || \psi ||_{i,w,M},
\exists v \in  \mathit{Best}_{i,P}(w)  \cap || \varphi ||_{i,w,M} : u \preceq_{i,D} v.
$

\end{definition}

The idea is that 
 an  ``optimistic'' agent $i$ considers
 $\varphi$ \emph{realistically}
 at least as good as $\psi$
 if and only if,
 for every $\psi$-situation
in agent $i$'s belief set
 there exists a $\varphi$-situation
in agent $i$'s belief set such that
 the latter is at least as good
 as the former.

The previous notion as well
is expressible
in the language $\mathcal{L}_{\mathrm{DLCA} }$.
\begin{proposition}
Let $M = (  W,( \preceq_{i,P})_{i \in \mathit{Agt} }, (\preceq_{i,D})_{i \in \mathit{Agt} }  ,
(\equiv_i)_{i \in \mathit{Agt} }  ,
V  )$ be a MCM
and let $w \in W$.
Then,  we have
\begin{align*}
M,w \models\mathsf{RP}_i^{\mathit{Opt}} (\psi \preceq \varphi  )   \text{ iff } &
M,w \models \big[  
\equiv_i ; 
 [\prec_{i,P}]\bot ? ; \psi ? \big]   \langle \preceq_{i,D} \cap
 (\equiv_i ; 
 [\prec_{i,P}]\bot ?)
 \rangle \varphi.
\end{align*}
\end{proposition}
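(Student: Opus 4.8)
The plan is to unfold the $\mathrm{DLCA}$ truth conditions of the right-hand side modal formula and show, step by step, that it coincides with the semantic clause defining $\mathsf{RP}_i^{\mathit{Opt}}$ in Definition~\ref{RChoiceO}. The key observation, which I would establish first as a sublemma, is that the program $\equiv_i ; [\prec_{i,P}]\bot ?$ computes exactly the set $\mathit{Best}_{i,P}(w)$: by the truth conditions, $w R_{\equiv_i ; [\prec_{i,P}]\bot ?} v$ holds iff $w \equiv_i v$ and $M,v \models [\prec_{i,P}]\bot$, and the latter says there is no $u$ with $v \prec_{i,P} u$; since $\prec_{i,\tau} =_{\mathit{def}} \preceq_{i,\tau} \cap \succeq_{i,\tau}^\sim$ and by Constraints C1 and C2 any $u$ with $v \equiv_i u$ is $\preceq_{i,P}$-comparable with $v$, this is equivalent to saying $u \preceq_{i,P} v$ for all $u \equiv_i v$, i.e. $v \in \mathit{Best}_{i,P}(w)$. (This is, essentially, the content of the first Proposition of Section~\ref{FormalizationPart}, which I may simply cite.)

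Next I would analyse the full left-hand program of the big box, $\equiv_i ; [\prec_{i,P}]\bot ? ; \psi ?$. By the sublemma and the clause for sequential composition and test, $w R v$ for this program iff $v \in \mathit{Best}_{i,P}(w)$ and $M,v \models \psi$, which (using $w \equiv_i v$) is exactly $v \in \mathit{Best}_{i,P}(w) \cap ||\psi||_{i,w,M}$. Then I would turn to the diamond formula inside, namely $\langle \preceq_{i,D} \cap (\equiv_i ; [\prec_{i,P}]\bot ?) \rangle \varphi$: by the intersection clause, $v R_{\preceq_{i,D} \cap (\equiv_i ;[\prec_{i,P}]\bot ?)} v'$ holds iff $v \preceq_{i,D} v'$ \emph{and} $v' \in \mathit{Best}_{i,P}(w)$ — here I need to be slightly careful, using the sublemma with the \emph{same} base point, which works because $w \equiv_i v$ forces $\mathit{Best}_{i,P}(v) = \mathit{Best}_{i,P}(w)$. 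Hence $M,v \models \langle \preceq_{i,D} \cap (\equiv_i ;[\prec_{i,P}]\bot ?)\rangle \varphi$ iff there is $v' \in \mathit{Best}_{i,P}(w)$ with $v \preceq_{i,D} v'$ and $M,v' \models \varphi$, i.e. $\exists v' \in \mathit{Best}_{i,P}(w) \cap ||\varphi||_{i,w,M}$ with $v \preceq_{i,D} v'$. Combining the two halves, the whole formula $[\equiv_i ; [\prec_{i,P}]\bot ? ; \psi ?]\langle \preceq_{i,D} \cap (\equiv_i ;[\prec_{i,P}]\bot ?)\rangle\varphi$ holds at $w$ iff for every $u \in \mathit{Best}_{i,P}(w) \cap ||\psi||_{i,w,M}$ there is $v \in \mathit{Best}_{i,P}(w) \cap ||\varphi||_{i,w,M}$ with $u \preceq_{i,D} v$, which is precisely the condition in Definition~\ref{RChoiceO}.

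The one delicate point — the step I expect to be the main obstacle — is the handling of the nested program $\equiv_i ; [\prec_{i,P}]\bot ?$ occurring \emph{inside} the intersection with $\preceq_{i,D}$. One must check that the reachability of a world $v'$ via this inner program from the point $v$ (rather than from $w$) still characterises membership in $\mathit{Best}_{i,P}(w)$; this hinges on the fact that $\equiv_i$ is an equivalence relation (so $w \equiv_i v$ implies $\equiv_i\!(v) = \equiv_i\!(w)$, hence the same "best" set), which is exactly what Constraint C1 together with $\equiv_i$ being an equivalence relation gives us. Once this is noted, everything else is a routine unfolding of Definitions~\ref{truthcond}, \ref{defBel} and \ref{RChoiceO}, together with the defined program abbreviations for $\prec_{i,P}$.
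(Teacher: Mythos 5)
Your proof is correct, and it follows the same route the paper's (relegated, routine) proof takes: unfold the truth conditions, observe that $\equiv_i ; [\prec_{i,P}]\bot?$ reaches exactly $\mathit{Best}_{i,P}(w)$ (using C1, C2 and the defined program $\prec_{i,P}$), and note that the inner occurrence of this program evaluated at $v$ with $w \equiv_i v$ still picks out $\mathit{Best}_{i,P}(w)$ because $\equiv_i$ is an equivalence relation. The one delicate point you flag is indeed the only non-mechanical step, and you handle it correctly.
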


 The following definition
introduces 
\emph{realistic} preference according to the pessimistic view.
\begin{definition}[Realistic preference: pessimistic view]\label{RChoiceP}
Let $M = (  W, ( \preceq_{i,P})_{i \in \mathit{Agt} }, (\preceq_{i,D})_{i \in \mathit{Agt} }  ,
(\equiv_i)_{i \in \mathit{Agt} }  ,
 V  )$ be a MCM
and let $w \in W$.
We say that, according to agent $i$'s pessimistic assessment,
$\varphi$
is realistically at least as good as $\psi$
at $w$, noted $M,w \models \mathsf{RP}_i^{\mathit{Pess}}(\psi \preceq \varphi  )$, if and only if
$
\forall v \in  \mathit{Best}_{i,P}(w) \cap || \varphi ||_{i,w,M},
\exists u \in  \mathit{Best}_{i,P}(w) \cap || \psi ||_{i,w,M}: u \preceq_{i,D} v.
$
\end{definition}
The idea is that 
 a  ``pessimistic'' agent $i$ considers
 $\varphi$ \emph{realistically}
 at least as good as $\psi$
 if and only if,
 for every $\varphi$-situation
in agent $i$'s belief set
 there exists a $\psi$-situation
in agent $i$'s belief set such that
 the former is at least as good
 as the latter. 

It is
also expressible
in the language $\mathcal{L}_{\mathrm{DLCA} }$.
\begin{proposition}
Let $M = (  W,( \preceq_{i,P})_{i \in \mathit{Agt} }, (\preceq_{i,D})_{i \in \mathit{Agt} }  ,
(\equiv_i)_{i \in \mathit{Agt} }  ,
V  )$ be a MCM
and let $w \in W$.
Then,  we have
\begin{align*}
M,w \models\mathsf{RP}_i^{\mathit{Pess}} (\psi \preceq \varphi  )   \text{ iff } &
M,w \models \big[  
\equiv_i ; 
 [\prec_{i,P}]\bot ? ; \varphi ? \big]   \langle \succeq_{i,D} \cap
 (\equiv_i ; 
 [\prec_{i,P}]\bot ?)
 \rangle \psi.
\end{align*}
\end{proposition}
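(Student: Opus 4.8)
The plan is to unfold the semantics of the two compound programs on the right-hand side (via Definition~\ref{truthcond}) until the condition reduces literally to the clause in Definition~\ref{RChoiceP}. The one mildly non-routine ingredient, which I would record first as an auxiliary observation, is: $v \in \mathit{Best}_{i,P}(w)$ iff $w \equiv_i v$ and $M,v \models [\prec_{i,P}]\bot$. One direction is immediate from the definition of $\mathit{Best}_{i,P}(w)$ together with Constraint C1 of Definition~\ref{DefSemanticsModel} (which forces every world that is $\prec_{i,P}$-above $v$ to lie in $\equiv_i(w)$, so a greatest element of the information set is $\prec_{i,P}$-maximal in $W$). The converse is where Constraint C2 is used: $M,v \models [\prec_{i,P}]\bot$ says that $v$ is a $\preceq_{i,P}$-maximal element of the information set $\equiv_i(v)=\equiv_i(w)$, and since $\preceq_{i,P}$ restricted to that set is a total preorder, a maximal element is a greatest element, i.e. $u \preceq_{i,P} v$ for all $u \equiv_i w$, so $v \in \mathit{Best}_{i,P}(w)$. (This is the very fact underlying the earlier proposition that characterizes $\mathsf{B}_i$ by the program $\equiv_i;[\prec_{i,P}]\bot?$; I would either cite it or re-derive it here.)

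Next I would compute the two accessibility relations from Definition~\ref{truthcond}. The two test programs identify the intermediate worlds with the endpoint, so $(w,v) \in R_{\equiv_i;[\prec_{i,P}]\bot?;\varphi?}$ holds iff $w \equiv_i v$, $M,v \models [\prec_{i,P}]\bot$ and $M,v \models \varphi$, which by the auxiliary observation is exactly $v \in \mathit{Best}_{i,P}(w) \cap ||\varphi||_{i,w,M}$. For the program $\beta := \succeq_{i,D} \cap (\equiv_i;[\prec_{i,P}]\bot?)$ inside the diamond, $(v,u) \in R_\beta$ iff both conjuncts hold: the first, unfolding $\succeq_{i,D} = -\preceq_{i,D}$, is $u \preceq_{i,D} v$; the second is $v \equiv_i u$ together with $M,u \models [\prec_{i,P}]\bot$, i.e. $u \in \mathit{Best}_{i,P}(v) = \mathit{Best}_{i,P}(w)$ (the last equality because $v \equiv_i w$).

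Combining these, $M,w \models [\equiv_i;[\prec_{i,P}]\bot?;\varphi?]\langle\beta\rangle\psi$ holds iff for every $v \in \mathit{Best}_{i,P}(w) \cap ||\varphi||_{i,w,M}$ there is some $u$ with $u \preceq_{i,D} v$, $u \in \mathit{Best}_{i,P}(w)$ and $M,u \models \psi$; any such $u$ is automatically $\equiv_i$-related to $w$, hence lies in $\mathit{Best}_{i,P}(w) \cap ||\psi||_{i,w,M}$. This is word for word the condition defining $M,w \models \mathsf{RP}_i^{\mathit{Pess}}(\psi \preceq \varphi)$ in Definition~\ref{RChoiceP}, so the equivalence follows. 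The only step requiring genuine care — the \emph{hard part} of an otherwise purely mechanical unfolding — is the maximal-equals-greatest argument in the auxiliary observation, which is precisely where Constraints C1 and C2 enter: without totality of $\preceq_{i,P}$ on the information set the right-hand program would capture the merely $\preceq_{i,P}$-maximal worlds rather than $\mathit{Best}_{i,P}(w)$, and the two could differ. One should also double-check the direction of $\succeq_{i,D} = -\preceq_{i,D}$, since it is this converse that makes the ``$\exists u$'' come out as a $\preceq_{i,D}$-lower bound for the chosen $\varphi$-world, matching the pessimistic clause of Definition~\ref{RChoiceP} rather than the optimistic one.
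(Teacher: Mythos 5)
Your proof is correct and follows the same route the paper's own argument takes: unfold the relations $R_{\equiv_i;[\prec_{i,P}]\bot?;\varphi?}$ and $R_{\succeq_{i,D}\cap(\equiv_i;[\prec_{i,P}]\bot?)}$ from Definition~\ref{truthcond}, and use the characterization $v\in\mathit{Best}_{i,P}(w)$ iff $w\equiv_i v$ and $M,v\models[\prec_{i,P}]\bot$, whose nontrivial direction indeed rests on Constraints C1 and C2 (totality on the information set turning $\prec_{i,P}$-maximality into greatestness), exactly as you note. Your handling of $\succeq_{i,D}=-\preceq_{i,D}$ and of the invariance $\mathit{Best}_{i,P}(v)=\mathit{Best}_{i,P}(w)$ for $v\equiv_i w$ is also right, so the reduction to Definition~\ref{RChoiceP} goes through as stated.
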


Like dyadic preference over formulas,
realistic dyadic preference over formulas is total.
In fact, for every
for $x \in \{\mathit{Opt}, \mathit{Pess}\}$, we have:
\begin{align}
\models_{\mathit{MCM}} &   \mathsf{RP}_i^{x} (\psi \preceq \varphi  )  \vee  \mathsf{RP}_i^{x} (\varphi \preceq \psi  ) 
\end{align}

The following abbreviations
define \emph{strict}
variants of dyadic preference
operators:
  \begin{align*}
 &  \mathsf{P}_i^{\mathit{Opt}} (\psi \prec \varphi  )     =_{\mathit{def}}
 \neg  \mathsf{P}_i^{\mathit{Opt}} (\varphi \preceq \psi  )\\
    &\mathsf{P}_i^{\mathit{Pess}} (\psi \prec \varphi  )     =_{\mathit{def}}
 \neg  \mathsf{P}_i^{\mathit{Pess}}(\varphi \preceq \psi  )\\
    &\mathsf{RP}_i^{\mathit{Opt}} (\psi \prec \varphi  )     =_{\mathit{def}}
 \neg  \mathsf{RP}_i^{\mathit{Opt}} (\varphi \preceq \psi  )\\
    &   \mathsf{RP}_i^{\mathit{Pess}} (\psi \prec \varphi  )     =_{\mathit{def}}
 \neg  \mathsf{RP}_i^{\mathit{Pess}} (\varphi \preceq \psi  )
\end{align*}
$   \mathsf{P}_i^{\mathit{Opt}} (\psi \prec \varphi  ) $ (resp. 
$   \mathsf{P}_i^{\mathit{Pess}} (\psi \prec \varphi  ) $)
has to be read ``according to $i$'s optimistic (resp. pessimistic) assessment, 
 $\varphi$  is better than
 $\psi$''.
$   \mathsf{RP}_i^{\mathit{Opt}} (\psi \prec \varphi  )$
(resp. $   \mathsf{RP}_i^{\mathit{Pess}} (\psi \prec \varphi  )$)
has to be read 
``according to agent $i$'s optimistic (resp. pessimistic) assessment,
$\varphi$
is realistically better than $\psi$''.

We conclude this section by defining two notions
of monadic preference 
and  corresponding  two notions of \emph{realistic}
monadic preference, respectively noted
$   \mathsf{P}_i^{\mathit{Opt}} \varphi$,
$   \mathsf{P}_i^{\mathit{Pess}} \varphi$,
$   \mathsf{RP}_i^{\mathit{Opt}} \varphi$ and $   \mathsf{RP}_i^{\mathit{Pess}} \varphi$:
  \begin{align*}
&     \mathsf{P}_i^{\mathit{Opt}} \varphi       =_{\mathit{def}}    \mathsf{P}_i^{\mathit{Opt}} (\neg \varphi \prec \varphi  )\\
 &  \mathsf{P}_i^{\mathit{Pess}} \varphi      =_{\mathit{def}}    \mathsf{P}_i^{\mathit{Pess}} (\neg \varphi \prec \varphi  )\\
  & \mathsf{RP}_i^{\mathit{Opt}} \varphi      =_{\mathit{def}}    \mathsf{RP}_i^{\mathit{Opt}} (\neg \varphi \prec \varphi  )\\
   &\mathsf{RP}_i^{\mathit{Pess}} \varphi      =_{\mathit{def}}    \mathsf{RP}_i^{\mathit{Pess}} (\neg \varphi \prec \varphi  )
\end{align*}
An optimistic (resp. pessimistic) agent has a preference for $\varphi$, noted $\mathsf{P}_i^{\mathit{Opt}} \varphi$ (resp.  $\mathsf{P}_i^{\mathit{Pess}} \varphi$),
if and only if, 
according to her optimistic (resp. pessimistic) assessment,
$\varphi$
is better than $\neg \varphi$.
An optimistic (resp. pessimistic) agent has a realistic preference for $\varphi$,
noted $\mathsf{RP}_i^{\mathit{Opt}} \varphi$ (resp.  $\mathsf{RP}_i^{\mathit{Pess}} \varphi$),
if and only if, 
according to her optimistic (resp. pessimistic) assessment,
$\varphi$
is
realistically better than $\neg \varphi$.

The following validity illustrates the relationship between the notion of desire defined in Definition \ref{defDes} and the previous notion of pessimistic 
monadic
preference: 
\begin{align}
\models_{\mathit{MCM}} &  \neg \mathsf{D}_i \top \rightarrow (    \mathsf{D}_i \varphi  \leftrightarrow \mathsf{P}_i^{\mathit{Pess}} \varphi)
\end{align}
This means that if there exists at least a minimally desirable state
for agent $i$ (condition $\neg \mathsf{D}_i \top$),
then $i$  desires that $\varphi $
if and only if, 
according to her pessimistic assessment,
$\varphi$
is
better than $\neg \varphi$.

\subsection{Example }\label{example}

In the previous sections, 
we have defined a variety
of cognitive attitudes
of epistemic
and motivational type.
Let us illustrate them with the help
of the crossroad scenario
sketched in the introduction.
For simplicitly,
we assume
that $\mathit{Agt}= \{1,2\}$
and that the set of atomic
propositions 
$\mathit{Atm}$
includes the following elements with their corresponding
meaning: $\mathit{co}$ (``agent $1$ and agent $2$ collide''),
$\mathit{lo}_1$ (``agent $1$ loses  time'')
and 
$\mathit{lo}_2$ (``agent $2$ loses time'').

We are going to make different hypotheses
about the agents' cognitive attitudes
and present a number of conclusions
that can be drawn from them.
Our initial hypothesis
concerns the agents' knowledge:
\begin{align*}
\varphi_1=_{\mathit{def}} &\bigwedge_{i \in \{1,2 \}}
[ \equiv_i ] \Big(     
\big( ( \mathit{lo}_1 \wedge \neg \mathit{lo}_2 ) \rightarrow
\neg \mathit{co}   \big) \wedge \\
&\big( ( \neg \mathit{lo}_1 \wedge  \mathit{lo}_2 ) \rightarrow
\neg \mathit{co}   \big) \wedge 
\neg 
(\neg  \mathit{lo}_1 \wedge  \neg \mathit{lo}_2 )
   \Big).
\end{align*}
According to hypothesis 
$\varphi_1$,
agents $1$
and $2$
know 
(i) that there will be no collision if one of them
loses time
by letting the other pass,
and (ii) that  necessarily 
one of them will lose time
(since if they both pass,
there will be a collision
so that they will both
lose time).

Our second hypothesis
concerns what the agents
merely
envisage:
\begin{align*}
\varphi_2=_{\mathit{def}} &\bigwedge_{i \in \{1,2 \}}
\Big( \langle \equiv_i \rangle 
\mathit{co} \wedge 
\langle \equiv_i \rangle
(\mathit{lo}_1 \wedge \neg \mathit{lo}_2 )
\wedge 
\langle \equiv_i \rangle
(\neg \mathit{lo}_1 \wedge \mathit{lo}_2 )
\wedge 
\langle \equiv_i \rangle
(\mathit{lo}_1 \wedge \mathit{lo}_2 \wedge 
\neg \mathit{co}  )
\Big)
.
\end{align*}
According to hypothesis 
$\varphi_2$,
agents $1$ and $2$
envisage four possible
situations:
(i) the situations in which they collide,
(ii) the two situations in which
one of them
loses its time
while the other does not,
and
(iii) the situation in which they both lose time
because
of a collision.


We conclude with the following hypothesis about the agents' motivations,
according to which
each agent
strongly desires not
to collide and 
strongly desires
not to lose time:
\begin{align*}
\varphi_3=_{\mathit{def}} &
\bigwedge_{i \in \{1,2 \}}
\big(\mathsf{SD}_i\neg \mathit{lo}_i 
\wedge 
\mathsf{SD}_i\neg \mathit{co} \big) .
\end{align*}

As
the following validities highlight,
the previous hypotheses
lead to different conclusions
about the agents' epistemic
and motivational attitudes:
\begin{align}
\models_{\mathit{MCM}} &  
\varphi_1 
\rightarrow  \bigwedge_{i \in \{1,2 \}}
\Big( [ \equiv_i ]
\big( \mathit{co}  \rightarrow
( \mathit{lo}_1 \wedge \mathit{lo}_2 ) \big)
\wedge
\mathsf{B}_i ( \neg \mathit{lo}_1, \mathit{lo}_2 )
\wedge \mathsf{B}_i ( \neg \mathit{lo}_2, \mathit{lo}_1 )
\Big)
\\
 \models_{\mathit{MCM}} &  
( \varphi_2 \wedge \varphi_3)
\rightarrow \bigwedge_{i \in \{1,2 \}}
\big(
\mathsf{SD}_i ( 
\neg \mathit{lo}_i \wedge \neg \mathit{co}
 )\wedge 
 \mathsf{D}_i \neg  \mathit{lo}_i \wedge \mathsf{D}_i \neg  \mathit{co}
\big) \\
 \models_{\mathit{MCM}} &
(\varphi_1 \wedge \varphi_2 \wedge \varphi_3)
\rightarrow \bigwedge_{i \in \{1,2 \}}
\big(\mathsf{D}_i (\mathit{lo}_1 \wedge \neg
   \mathit{lo}_2)
   \wedge 
     \mathsf{D}_i (\neg \mathit{lo}_1 \wedge 
   \mathit{lo}_2)
 \big)\\
 \models_{\mathit{MCM}} &
( \varphi_2 \wedge \varphi_3)
\rightarrow \bigwedge_{i \in \{1,2 \}}
\big(\neg \mathsf{SD}_i (\mathit{lo}_1 \wedge \neg
   \mathit{lo}_2)
   \wedge  \neg 
     \mathsf{SD}_i (\neg \mathit{lo}_1 \wedge 
   \mathit{lo}_2)
 \big)
\end{align}
 The single hypothesis 
$\varphi_1$ leads
to the conclusion 
(i) that the agents
know that a collision implies
that they both lose time,
and (ii) that they  believe
that an agent loses time
conditional on the fact that
the other does not.
Thanks to
the set of hypotheses $\{  \varphi_2,
\varphi_3\}$,
we can conclude
(i) that 
each agent strongly desires
 not
 to lose
 time
and 
 to avoid a collision,
 and
 (ii) each agent
 has both the desire
 not to lose
 time and the desire
 to avoid a collision.
 Finally, thanks
 to the set of hypotheses
  $\{ \varphi_1, \varphi_2,
\varphi_3\}$,
we can conclude that each agent
finds desirable
the situations in which
only one of them loses time by letting the other pass.
As the last validity indicates,
 such  situations are merely desirable
for the agent but not strongly desirable.

\section{Axiomatization }\label{AxiomatizationPart}

In this section, we provide a sound and complete
axiomatization
for the Dynamic Logic of Cognitive Attitudes
($\mathrm{DLCA}$).
The first step consists in precisely
defining this logic
which includes
several axioms
and rule
of necessitation 
for the modalities
$[\pi]$
as well as one non-standard
rule
of inference for nominals.

\begin{definition}[Logic $\mathrm{DLCA}$]\label{axiomatics}
We define  $\mathrm{DLCA}$
to be the extension of classical
propositional logic given by the following
 axioms and rules with $\tau \in \{P,D\}$:
\begin{align}
& ( [\pi] \varphi
\wedge [\pi] (\varphi \rightarrow \psi))
\rightarrow [\pi] \psi
 \tagLabel{K$_{\pi}$}{ax:Kequiv}\\
 & [\equiv_i] \varphi
\rightarrow  \varphi
 \tagLabel{T$_{\equiv_i}$}{ax:Tequiv}\\
  & [\equiv_i] \varphi
\rightarrow  [\equiv_i] [\equiv_i] \varphi
 \tagLabel{4$_{\equiv_i}$}{ax:4equiv}\\
   & \neg [\equiv_i] \varphi
\rightarrow  [\equiv_i] \neg [\equiv_i] \varphi
 \tagLabel{5$_{\equiv_i}$}{ax:5equiv}\\
 &  [\preceq_{i,\tau}] \varphi  \rightarrow \varphi
 \tagLabel{T$_{\preceq_{i,\tau}}$}{ax:Tpreceq}\\
  & [\preceq_{i,\tau}] \varphi
\rightarrow  [\preceq_{i,\tau}] [\preceq_{i,\tau}] \varphi
 \tagLabel{4$_{\preceq_{i,\tau}}$}{ax:4preceq}\\
   &  [\equiv_i ] \varphi
\rightarrow  [\preceq_{i,\tau} ] \varphi
 \tagLabel{Inc$_{\preceq_{i,\tau},\equiv_i}$}{ax:Int}\\
   &  \big( \langle \equiv_i \rangle \varphi \wedge  \langle \equiv_i \rangle \psi\big)
\rightarrow \big(  \langle \equiv_i \rangle (\varphi \wedge \langle \preceq_{i,\tau} \rangle \psi ) \vee
 \langle \equiv_i \rangle (\psi \wedge \langle \preceq_{i,\tau} \rangle \varphi ) \big)
 \tagLabel{Conn$_{\preceq_{i,\tau},\equiv_i}$}{ax:Conn}\\
   &  [\pi {;} \pi' ] \varphi
\leftrightarrow
[\pi  ] [\pi' ] \varphi
 \tagLabel{Red$_{;}$}{ax:Prg1}\\
    &  [\pi \cup \pi' ] \varphi
\leftrightarrow
(  [\pi  ] \varphi \wedge  [\pi' ] \varphi )
 \tagLabel{Red$_{\cup}$}{ax:Prg2}\\
      &  ([\pi  ] \varphi
     \wedge [\pi'  ] \psi )
\rightarrow [\pi \cap \pi'   ] (\varphi \wedge \psi)
 \tagLabel{Add1$_{\cap}$}{ax:Prg4}\\
      &  (\langle \pi  \rangle x
     \wedge \langle  \pi' \rangle x)
\rightarrow  \langle \pi \cap \pi'  \rangle x
 \tagLabel{Add2$_{\cap}$}{ax:Prg3}\\
       &  \varphi
\rightarrow [\pi   ]
\langle -\pi  \rangle \varphi
 \tagLabel{Conv1$_{-}$}{ax:Prg5}\\
     &  \varphi
\rightarrow [-\pi   ]
\langle \pi  \rangle \varphi
 \tagLabel{Conv2$_{-}$}{ax:Prg6}\\
       &  ([\preceq_{i,\tau}  ]\varphi
       \wedge
       [\preceq_{i,\tau}^\sim  ]\varphi)
\leftrightarrow [\equiv_i   ]\varphi
 \tagLabel{Comp1$_{\sim}$}{ax:Prg7}\\
      &  \langle \preceq_{i,\tau} \rangle x
\rightarrow [\preceq_{i,\tau}^\sim   ]\neg x
 \tagLabel{Comp2$_{\sim}$}{ax:Prg8}\\
       &  [?\varphi] \psi
\rightarrow (\varphi \rightarrow \psi)
 \tagLabel{Red$_{?}$}{ax:Prg9}\\
       &  \langle \pi  \rangle (x \wedge \varphi)
       \rightarrow [\pi'](x \rightarrow \varphi)
 \tagLabel{Most$_{x}$}{ax:Most}\\
        &  \frac{\varphi }{[\pi] \varphi }
 \tagLabel{Nec$_{\pi}$}{ax:Nec}\\
         &  \frac{   [\pi ]   \neg x  \text{ for  all } x \in \mathit{Nom}  }{[\pi ] \bot }
 \tagLabel{Cov}{ax:Name}
\end{align}

\end{definition}

Note that the primitive operators
$[\preceq_{i,P}]$
and $[\preceq_{i,D}]$
are S4 (or KT4), while
$[\equiv_i]$
is S5.
The only interaction
principles
between these three operators
are the 
``inclusion'' Axiom 
\ref{ax:Int}
and the
``connectedness'' Axiom \ref{ax:Conn}.
Operators
$[\preceq_{i,P}]$
and $[\preceq_{i,D}]$
do not interact since, as we have emphasized
in Section \ref{LogicPart},
epistemic plausibility and
desirability
are assumed to be independent notions.

For every $\varphi \in \mathcal{L}_{\mathrm{DLCA} }$,
we write $\vdash \varphi$
to denote the fact that $\varphi $
is a theorem of $\mathrm{DLCA}$, i.e.,
there exists an at most countably infinite sequence 
$\psi_0, \psi_1, \ldots$
such that
$\psi_0= \varphi$
 and for all $k \geq 0$,
 $\psi_k$ is an instance of some
 axiom or  $\psi_k$
 can be obtained from some later members of the sequence by an application of some inference rule.

The rest of this section is devoted
to prove that 
the logic
$\mathrm{DLCA}$
is sound and complete for the class of
multi-agent cognitive models.

Soundness, namely checking that the
axioms
are valid and the the rules of inferences
preserve validity, is a routine exercise.
Notice that the admissibility
of the rule of inference
\ref{ax:Name}
is guaranteed by the fact that
the set of nominals $\mathit{Nom}$
is infinite.

As for completeness, the proof
is organized in several steps.
We use techniques from
dynamic logic and
modal logic with names \cite{PassyTinchev,Goranko93}.



In the rest of this section,
we denote sets of formulas 
from $\mathcal{L}_{\mathrm{DLCA} }$
by $\Sigma, \Sigma', \ldots$.
Let $\varphi \in \mathcal{L}_{\mathrm{DLCA} }$
and $\Sigma \subseteq \mathcal{L}_{\mathrm{DLCA} }$, we define:
\begin{align*}
\Sigma + \varphi= \{ \psi  \in  \mathcal{L}_{\mathrm{DLCA} }:  \varphi \rightarrow \psi \in \Sigma\}.
\end{align*}

Let us start by defining the concepts
of theory and maximal consistent theory.
\begin{definition}[Theory]
A set of formulas $\Sigma$ is said to be a theory if it
contains
all theorems of $\mathrm{DLCA}$
and is closed under modus ponens
and rule \ref{ax:Name}. It is said to be a consistent theory if
it is a theory and
 $\bot \not \in \Sigma$.
It is said to be a maximal consistent theory (MCT) if it
is a consistent theory
and, for each consistent theory $\Sigma'$,
we have  that if
$\Sigma\subseteq \Sigma'$ then $\Sigma= \Sigma'$.
\end{definition}

We have the following property for theories.

\begin{proposition}\label{proptheories}
Let
$\Sigma$ be a theory
and let $\varphi \in \mathcal{L}_{\mathrm{DLCA} }$.
Then, $\Sigma + \varphi$ is a theory.
Moreover,
if $\Sigma$
is consistent then
 either $\Sigma +  \varphi$ 
is consistent or  $\Sigma + \neg  \varphi$  is consistent.

\end{proposition}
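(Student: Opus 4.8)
The plan is to check the two assertions in turn. For the first, that $\Sigma + \varphi$ is again a theory, the idea is simply to push the conditional ``$\varphi \rightarrow (\cdot)$'' through each of the three conditions defining a theory. That $\Sigma + \varphi$ contains every theorem of $\mathrm{DLCA}$ is immediate: if $\vdash \psi$ then $\vdash \varphi \rightarrow \psi$, so $\varphi \rightarrow \psi \in \Sigma$ and hence $\psi \in \Sigma + \varphi$. Closure under modus ponens is equally routine: if $\psi \in \Sigma + \varphi$ and $\psi \rightarrow \chi \in \Sigma + \varphi$, then $\varphi \rightarrow \psi$ and $\varphi \rightarrow (\psi \rightarrow \chi)$ both lie in $\Sigma$, and two applications of modus ponens inside $\Sigma$ --- using the propositional theorem $(\varphi \rightarrow \psi) \rightarrow ((\varphi \rightarrow (\psi \rightarrow \chi)) \rightarrow (\varphi \rightarrow \chi))$, which $\Sigma$ contains --- yield $\varphi \rightarrow \chi \in \Sigma$, that is, $\chi \in \Sigma + \varphi$.

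The real content lies in closure under the rule \ref{ax:Name}. Assume $[\pi]\neg x \in \Sigma + \varphi$ for every $x \in \mathit{Nom}$, that is, $\varphi \rightarrow [\pi]\neg x \in \Sigma$ for every $x$; we must show $\varphi \rightarrow [\pi]\bot \in \Sigma$. The trick is to absorb the antecedent $\varphi$ into the modality by passing to the program $\varphi? ; \pi$: from the reduction axiom \ref{ax:Prg1} for sequential composition together with the reduction axiom \ref{ax:Prg9} for test one obtains the provable equivalence $\vdash [\varphi? ; \pi]\chi \leftrightarrow (\varphi \rightarrow [\pi]\chi)$, holding for every formula $\chi$. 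Hence $[\varphi? ; \pi]\neg x \in \Sigma$ for every $x \in \mathit{Nom}$, and since $\Sigma$, being a theory, is closed under \ref{ax:Name}, applying that rule with the program $\varphi? ; \pi$ gives $[\varphi? ; \pi]\bot \in \Sigma$; a final appeal to the same equivalence returns $\varphi \rightarrow [\pi]\bot \in \Sigma$, as desired. I expect this step to be the only delicate one --- in particular the small but slightly fiddly derivation of the displayed test equivalence --- the remainder being bookkeeping.

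For the second assertion, suppose toward a contradiction that neither $\Sigma + \varphi$ nor $\Sigma + \neg\varphi$ is consistent. By the first part both are theories, so non-consistency means $\bot \in \Sigma + \varphi$ and $\bot \in \Sigma + \neg\varphi$, that is, $\varphi \rightarrow \bot \in \Sigma$ and $\neg\varphi \rightarrow \bot \in \Sigma$. Since $\Sigma$ contains all propositional theorems and is closed under modus ponens, the first yields $\neg\varphi \in \Sigma$ and the second yields $\varphi \in \Sigma$, whence $\bot \in \Sigma$, contradicting the consistency of $\Sigma$. Therefore at least one of $\Sigma + \varphi$ and $\Sigma + \neg\varphi$ is consistent, which completes the plan.
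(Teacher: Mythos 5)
Your proof is correct and follows essentially the same route as the paper's: the key step in both is absorbing the antecedent $\varphi$ into the test program $\varphi?;\pi$ so that the rule \ref{ax:Name} can be applied inside $\Sigma$, and the second claim is handled by the same reductio. Note only that the equivalence $[\varphi?;\pi]\chi \leftrightarrow (\varphi\rightarrow[\pi]\chi)$ you invoke needs \ref{ax:Prg9} read as a biconditional (the axiom is displayed as a one-way implication), but the paper's own proof relies on exactly the same reading, so this is not a gap specific to your argument.
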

\begin{proof}
Let us first prove that  if $\Sigma $
is a  theory then $\Sigma + \varphi$
is a theory as well. 
Suppose $\Sigma $
is a  theory.
Then,
$\Sigma + \varphi$ clearly contains 
all theorems of $\mathrm{DLCA}$.
Moreover, 
suppose $\psi \rightarrow \psi', \psi \in \Sigma + \varphi$.
Thus, by definition of $\Sigma + \varphi$,
we have
$ \varphi \rightarrow \psi, \varphi \rightarrow (\psi \rightarrow \psi') \in  \Sigma$.
Since  $\Sigma$
is closed under modus ponens
and contains all theorems of $\mathrm{DLCA}$,
the latter implies $ (\varphi \rightarrow \psi )\wedge \big( \varphi \rightarrow (\psi \rightarrow \psi')\big) \in  \Sigma$.
Consequently, 
since $\Sigma$
is closed under modus ponens,
$  \varphi \rightarrow  \psi' \in  \Sigma$.
Hence, $\psi' \in \Sigma + \varphi $.
This means that $\Sigma + \varphi $
is closed under modus ponens.
Finally, let us show that $\Sigma + \varphi$
is closed under \ref{ax:Name}.
Suppose $[\pi] \neg x \in \Sigma + \varphi$
for all $x$.
Thus,
by definition of $\Sigma + \varphi$,
$\varphi \rightarrow [\pi] \neg x \in \Sigma $
for all $x$.
Since $\Sigma$ is a theory,
the latter implies that 
$[?\varphi {;} \pi] \neg x \in \Sigma $
for all $x$. Thus, since $\Sigma$ is a theory,
$[?\varphi {;} \pi] \bot \in \Sigma $
and, consequently, $\varphi \rightarrow [\pi] \bot \in \Sigma $.
It follows that $[\pi] \bot \in \Sigma + \varphi$.

Let us show that 
if $\Sigma$
is consistent then
either $\Sigma +  \varphi$ 
is consistent or  $\Sigma + \neg  \varphi$  is consistent.
Suppose the antecedent is true
while the consequent is false. Then, $\varphi \rightarrow \bot \in \Sigma$
and  $\neg \varphi \rightarrow \bot \in \Sigma$.
Since $\Sigma $
is a theory, we have $(\varphi \rightarrow \bot ) \wedge (\neg \varphi \rightarrow \bot ) \in \Sigma$.
Thus, $\bot \in \Sigma$ which is in contradiction with the fact that $\Sigma
$ is consistent.
\end{proof}

The following proposition highlights some standard properties
of MCTs.

\begin{proposition}\label{beflinde}
Let
$\Sigma$ be a MCT.
Then,
for all $\varphi, \psi \in \mathcal{L}_{\mathrm{DLCA} }$:
\begin{itemize}

\item $\varphi \in \Sigma$
or $\neg \varphi \in \Sigma$,

\item   $\varphi \vee \psi \in \Sigma$
iff $\varphi  \in \Sigma$
or $ \psi \in \Sigma$.



\end{itemize}

\end{proposition}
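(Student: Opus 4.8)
The plan is to establish the two bullet points of Proposition~\ref{beflinde} directly from the definition of a maximal consistent theory (MCT) together with Proposition~\ref{proptheories} and the basic closure properties of theories. The first bullet, that $\varphi \in \Sigma$ or $\neg\varphi \in \Sigma$, is the key fact from which the second follows easily, so I would concentrate on it first.

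For the first bullet, I would argue by contradiction. Suppose $\Sigma$ is a MCT but $\varphi \notin \Sigma$ and $\neg\varphi \notin \Sigma$. The idea is to use Proposition~\ref{proptheories}: since $\Sigma$ is consistent, either $\Sigma + \varphi$ is consistent or $\Sigma + \neg\varphi$ is consistent, and in either case this is a consistent theory. I then need two small observations: first, that $\Sigma \subseteq \Sigma + \psi$ for any $\psi$ (immediate, since $\Sigma$ contains all theorems and is closed under modus ponens, so $\chi \in \Sigma$ implies $\psi \to \chi \in \Sigma$, hence $\chi \in \Sigma + \psi$); and second, that $\psi \in \Sigma + \psi$ (since $\psi \to \psi$ is a theorem, hence in $\Sigma$). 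So if $\Sigma + \varphi$ is consistent, then by maximality $\Sigma = \Sigma + \varphi$, whence $\varphi \in \Sigma$, contradicting the assumption; symmetrically for $\Sigma + \neg\varphi$. Either way we reach a contradiction, so one of $\varphi$, $\neg\varphi$ lies in $\Sigma$. A complementary remark: $\Sigma$ cannot contain both $\varphi$ and $\neg\varphi$, else, being closed under modus ponens and containing all propositional theorems, it would contain $\bot$, contradicting consistency.

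For the second bullet, the right-to-left direction is routine: if $\varphi \in \Sigma$ then since $\varphi \to (\varphi \vee \psi)$ is a theorem and $\Sigma$ is closed under modus ponens, $\varphi \vee \psi \in \Sigma$; similarly if $\psi \in \Sigma$. For the left-to-right direction, suppose $\varphi \vee \psi \in \Sigma$ but $\varphi \notin \Sigma$ and $\psi \notin \Sigma$. By the first bullet, $\neg\varphi \in \Sigma$ and $\neg\psi \in \Sigma$, hence $\neg\varphi \wedge \neg\psi \in \Sigma$ (by closure under modus ponens, using the propositional theorem $\neg\varphi \to (\neg\psi \to (\neg\varphi \wedge \neg\psi))$). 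Together with $\varphi \vee \psi \in \Sigma$ and the propositional theorem $(\varphi \vee \psi) \to (\neg(\neg\varphi \wedge \neg\psi))$, closure under modus ponens gives $\bot \in \Sigma$, contradicting consistency.

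The main obstacle, such as it is, is purely bookkeeping: being careful that all the manipulations I invoke are licensed by exactly the stated closure conditions of a theory (containing all $\mathrm{DLCA}$ theorems, closure under modus ponens, closure under rule~\ref{ax:Name}) and by Proposition~\ref{proptheories}, rather than by some unstated ambient notion of ``maximal consistent set'' from ordinary modal logic. In particular I should not silently assume Lindenbaum-style maximality in the usual sense; the maximality here is phrased in terms of inclusion among consistent \emph{theories}, and Proposition~\ref{proptheories} (that $\Sigma + \varphi$ is again a theory, and one of $\Sigma + \varphi$, $\Sigma + \neg\varphi$ is consistent) is precisely what bridges that gap. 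Everything else is standard propositional reasoning inside $\Sigma$.
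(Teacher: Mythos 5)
Your proof is correct and follows essentially the same route as the paper: the first bullet is argued by contradiction using Proposition~\ref{proptheories} and the maximality of $\Sigma$ among consistent theories, exactly as in the paper's proof (which only treats that bullet). Your additional verification that $\Sigma \subseteq \Sigma + \psi$ and $\psi \in \Sigma + \psi$, and your routine propositional argument for the second bullet (which the paper omits as standard), are both sound.
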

\begin{proof}
We only prove the first item
by reductio ad absurdum.
Suppose $\Sigma$ is a MCT, 
$\varphi \not \in \Sigma$
and $\neg \varphi \not \in \Sigma$.
We clearly have $\Sigma \subseteq \Sigma + \varphi$
and  $\Sigma \subseteq \Sigma + \neg \varphi$.
Moreover, $\varphi \in \Sigma + \varphi$
and $\neg \varphi \in \Sigma + \neg \varphi$. 
Thus, $\Sigma \subset \Sigma + \varphi$
and  $\Sigma \subset \Sigma + \neg \varphi$.
By Proposition
\ref{proptheories},
$\Sigma + \varphi$
and 
$\Sigma + \neg \varphi$
are theories.
Moreover,
either
$ \Sigma + \varphi$
is consistent 
or $\Sigma + \neg \varphi$
is consistent. This contradicts the fact that 
$\Sigma$
is a MCT.
\end{proof}

The following variant of the Lindenbaum's lemma
is proved in the same way
as  \cite[Lemma 4.15]{PassyTinchev}.

\begin{lemma}\label{linden}
Let  $\Sigma$
be a consistent theory
and let $\varphi \not \in \Sigma$.
Then, there exists a MCT $\Sigma^+$
such that $\Sigma \subseteq \Sigma^+$
and $\varphi \not \in \Sigma^+$.
\end{lemma}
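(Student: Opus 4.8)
The plan is to carry out a Lindenbaum-style construction, but one designed so that the limit set is closed under the infinitary rule \ref{ax:Name}, not merely under modus ponens. First I would dispose of $\varphi$ once and for all by passing to $\Sigma + \neg\varphi$. Since $\varphi \notin \Sigma$ and $\Sigma$ is a consistent theory, $\Sigma + \neg\varphi$ is still consistent: if it were inconsistent we would have $\neg\varphi \to \bot \in \Sigma$ and hence, by closure of $\Sigma$ under modus ponens, $\varphi \in \Sigma$, a contradiction. By Proposition \ref{proptheories}, $\Sigma + \neg\varphi$ is a theory; it clearly contains $\Sigma$ and it contains $\neg\varphi$, so every MCT containing $\Sigma + \neg\varphi$ contains $\neg\varphi$ and therefore, by consistency, omits $\varphi$. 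Thus it suffices to extend the consistent theory $\Theta_0 = \Sigma + \neg\varphi$ to a MCT.

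Next I would fix an enumeration $\varphi_0, \varphi_1, \ldots$ of $\mathcal{L}_{\mathrm{DLCA}}$, which is possible since $\mathit{Atm}$ and $\mathit{Nom}$ are countable and $\mathit{Agt}$ is finite, and build an increasing chain of consistent theories $\Theta_0 \subseteq \Theta_1 \subseteq \cdots$. At stage $n$: if $\Theta_n + \varphi_n$ is inconsistent, I put $\Theta_{n+1} = \Theta_n + \neg\varphi_n$, which is a consistent theory by Proposition \ref{proptheories}. If $\Theta_n + \varphi_n$ is consistent, I set $\Theta' = \Theta_n + \varphi_n$ and then, if moreover $\varphi_n$ has the form $\neg[\pi]\bot$, I add a witness: since $\Theta'$ is a consistent theory it is closed under \ref{ax:Name}, so it cannot contain $[\pi]\neg x$ for every nominal $x$ (that would yield $[\pi]\bot \in \Theta'$, contradicting $\neg[\pi]\bot \in \Theta'$); choosing $x_0$ with $[\pi]\neg x_0 \notin \Theta'$, Proposition \ref{proptheories} gives that $\Theta' + \neg[\pi]\neg x_0$ is a consistent theory, and I take it as $\Theta_{n+1}$. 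In every remaining case I set $\Theta_{n+1} = \Theta'$.

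Finally I would let $\Sigma^+ = \bigcup_n \Theta_n$ and check it is a MCT. It contains all theorems of $\mathrm{DLCA}$ and is consistent because each $\Theta_n$ is, and it is closed under modus ponens since any two premises lie in a common $\Theta_n$. The step I expect to be the only real obstacle is closure under \ref{ax:Name}: assume $[\pi]\neg x \in \Sigma^+$ for every $x \in \mathit{Nom}$, and let $m$ be the index with $\varphi_m = \neg[\pi]\bot$. If $\Theta_m + \varphi_m$ was inconsistent then $[\pi]\bot \in \Theta_m \subseteq \Sigma^+$, as desired; otherwise the construction placed some witness $\neg[\pi]\neg x_0 \in \Theta_{m+1} \subseteq \Sigma^+$, which together with $[\pi]\neg x_0 \in \Sigma^+$ would make $\Sigma^+$ inconsistent, so that case cannot arise. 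Hence $[\pi]\bot \in \Sigma^+$ and $\Sigma^+$ is a consistent theory. Maximality is then immediate: at stage $n$ one of $\varphi_n, \neg\varphi_n$ was added, so $\Sigma^+$ decides every formula and no consistent theory can extend it properly. Since $\Sigma \subseteq \Theta_0 \subseteq \Sigma^+$ and $\neg\varphi \in \Sigma^+$, we get $\varphi \notin \Sigma^+$, as required; this is the argument of \cite[Lemma 4.15]{PassyTinchev}.
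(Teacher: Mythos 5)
Your construction is correct and is precisely the Lindenbaum-with-witnesses argument that the paper itself invokes by deferring to Lemma 4.15 of Passy and Tinchev: the only delicate point, closure of the union under the infinitary rule \ref{ax:Name}, is exactly what your witness step secures, and your consistency and maximality checks go through. The sole cosmetic imprecision is the appeal to Proposition \ref{proptheories} for the consistency of $\Theta' + \neg[\pi]\neg x_{0}$; strictly, that consistency follows because inconsistency would give $\neg[\pi]\neg x_{0} \rightarrow \bot \in \Theta'$ and hence $[\pi]\neg x_{0} \in \Theta'$, contradicting the choice of $x_{0}$ --- a one-line fix.
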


The following lemma highlights
a fundamental property of MCTs.
\begin{lemma}\label{containsone}
Let $\Sigma$ be a MCT. Then,
there exists $x \in \mathit{Nom}$
such $x \in \Sigma$.
\end{lemma}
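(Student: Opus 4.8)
The plan is a proof by contradiction that plays the covering rule \ref{ax:Name} off against maximality. Suppose, for contradiction, that $x \notin \Sigma$ for \emph{every} $x \in \mathit{Nom}$. Since $\Sigma$ is a MCT, the first item of Proposition \ref{beflinde} gives $\neg x \in \Sigma$ for every $x \in \mathit{Nom}$. So I would have reduced the problem to showing that a consistent theory containing $\neg x$ for all nominals $x$ cannot exist.

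Next I would transport this fact to the test program $\top?$. As already used (without comment) in the proof of Proposition \ref{proptheories}, $\mathrm{DLCA}$ proves the ``converse half'' of the test reduction, $\vdash (\varphi \rightarrow \theta) \rightarrow [\varphi?]\theta$; instantiating $\varphi := \top$ yields $\vdash \theta \rightarrow [\top?]\theta$, and \ref{ax:Prg9} together with $\vdash \top$ yields $\vdash [\top?]\theta \rightarrow \theta$, so $[\top?]$ collapses to the identity modality. Because $\Sigma$ is a theory --- it contains all $\mathrm{DLCA}$-theorems and is closed under modus ponens --- from $\neg x \in \Sigma$ we get $[\top?]\neg x \in \Sigma$, for every $x \in \mathit{Nom}$.

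Now I apply rule \ref{ax:Name} with $\pi := \top?$: since $\Sigma$ is a theory it is closed under \ref{ax:Name}, so from $[\top?]\neg x \in \Sigma$ for all $x \in \mathit{Nom}$ we obtain $[\top?]\bot \in \Sigma$. Using $\vdash [\top?]\bot \rightarrow \bot$ and closure under modus ponens, $\bot \in \Sigma$, contradicting the consistency of $\Sigma$ (every MCT is by definition a consistent theory, hence $\bot \notin \Sigma$). Therefore some nominal belongs to $\Sigma$, as required. Note that maximality is genuinely needed here: the set of $\mathrm{DLCA}$-theorems is a consistent theory closed under \ref{ax:Name} containing no nominal, so consistency alone would not suffice --- the step $x \notin \Sigma \Rightarrow \neg x \in \Sigma$ is where maximality enters.

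The only delicate point is the bookkeeping around the test program, i.e.\ being sure that $\vdash \theta \leftrightarrow [\top?]\theta$. This is exactly the test manipulation already performed in the proof of Proposition \ref{proptheories} (there in the guise of passing between $\varphi \rightarrow [\pi]\chi$ and $[\varphi?{;}\pi]\chi$ via \ref{ax:Prg1} and \ref{ax:Prg9}), so it requires no genuinely new argument; everything else is routine reasoning about theories and about the shape of \ref{ax:Name}.
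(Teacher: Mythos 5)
Your proof is correct and follows essentially the same route as the paper's: contradiction via Proposition \ref{beflinde} to get $\neg x \in \Sigma$ for all nominals, passage to $[\top?]\neg x$, an application of rule \ref{ax:Name} with $\pi := \top?$, and collapse of $[\top?]$ via \ref{ax:Prg9} to obtain $\bot \in \Sigma$. Your explicit remark about needing the converse half of the test reduction (and about where maximality enters) is a point the paper glosses over, but it changes nothing in substance.
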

\begin{proof}
We prove the lemma by reductio ad absurdum.
Let $\Sigma$
be a MCT.
Moreover,
suppose that,
for all $x \in \mathit{Nom}$,
$x \not \in \Sigma$.
By Proposition \ref{beflinde},
it follows that, for all $x \in \mathit{Nom}$,
$\neg x \in \Sigma$.

By Axiom \ref{ax:Prg9},
we have $   \neg x \leftrightarrow   [?\top] \neg x     \in \Sigma $
 for all $x \in \mathit{Nom}$.
Thus,  for all $x \in \mathit{Nom}$,
$ [?\top] \neg x     \in \Sigma $.
Hence, since $\Sigma$ is closed under  \ref{ax:Name},
 $[?\top] \bot     \in \Sigma $.
 By Axiom \ref{ax:Prg9},
 the latter is equivalent to
  $\bot     \in \Sigma $.
  The latter is contradiction
  with the fact that $\Sigma$
  is a MCT.
\end{proof}

%
%

Let us now define the canonical model for our logic.

\begin{definition}[Canonical model]\label{defCanModel}
The canonical
model
is the tuple
$M^c = (  W^c,( \preceq_{i,P}^c)_{i \in \mathit{Agt} }, (\preceq_{i,D}^c)_{i \in \mathit{Agt} }  ,
(\equiv_i^c)_{i \in \mathit{Agt} }  , V^c  )$
such that:
\begin{itemize}
\item  $W^c$ is the set of all MCTs,

\item
for all $i \in \mathit{Agt}$,
for all $\tau \in \{P,D\}$,
for all $w,v \in W^c$,
$w \preceq_{i,\tau}^c v$ iff,
for all $\varphi \in \mathcal{L}_{\mathrm{DLCA} } $,
if $[\preceq_{i,\tau}]\varphi \in w$
then $\varphi \in v$,

\item
for all $i \in \mathit{Agt}$,
for all $w,v \in W^c$,
$w \equiv_i^c v$ iff,
for all $\varphi \in \mathcal{L}_{\mathrm{DLCA} } $,
if $[\equiv_i]\varphi \in w$
then $\varphi \in v$,

\item
for all $w\in W^c$,
$ V^c(w) = (\mathit{Atm} \cup \mathit{Nom}) \cap w $.

\end{itemize}
 \end{definition}

 Let us now define the canonical
 relations for the complex programs $\pi$.

 \begin{definition}[Canonical relation]\label{canonicrel}
 Let
$M^c= (  W^c,( \preceq_{i,P}^c)_{i \in \mathit{Agt} }, (\preceq_{i,D}^c)_{i \in \mathit{Agt} }  ,
(\equiv_i^c)_{i \in \mathit{Agt} }  , V^c  )$
 be the canonical model.
 Then, for all
$\pi \in \mathcal{P}$
and for all $w,v \in W^c$:
\begin{align*}
w R_\pi^c v \text{ iff, for all }
\varphi \in \mathcal{L}_{\mathrm{DLCA} }, \text{ if }
[\pi]\varphi \in w
\text{ then }
\varphi \in v.
\end{align*}

 \end{definition}

 The following Lemma \ref{propcanonical} highlights
 one fundamental property
 of the canonical model.

\begin{lemma}\label{propcanonical}
Let $M^c = (  W^c,( \preceq_{i,P}^c)_{i \in \mathit{Agt} }, (\preceq_{i,D}^c)_{i \in \mathit{Agt} }  ,
(\equiv_i^c)_{i \in \mathit{Agt} }  , V^c  )$
be the canonical model.
Then, for all $\Sigma, \Sigma' \in W^c$,
for all
$\pi \in \mathcal{P}$
and  for all $x \in \mathit{Nom}$,
if $x \in \Sigma,  x \in \Sigma' $
and $\Sigma R_\pi^c \Sigma'$
then $\Sigma = \Sigma'$.
\end{lemma}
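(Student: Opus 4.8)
The plan is to reduce the claim to the single inclusion $\Sigma \subseteq \Sigma'$: since both $\Sigma$ and $\Sigma'$ are MCTs, and in particular $\Sigma'$ is a consistent theory, the maximality of $\Sigma$ turns $\Sigma \subseteq \Sigma'$ into $\Sigma = \Sigma'$. The work is therefore entirely in proving this inclusion, and the tool for it is Axiom \ref{ax:Most}, namely $\langle \pi \rangle (x \wedge \varphi) \rightarrow [\pi'](x \rightarrow \varphi)$, which formalises that a nominal pins down a unique world: if \emph{some} program can reach an $x$-world satisfying $\varphi$, then along \emph{every} program the ($x$-)world reached satisfies $\varphi$.

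Concretely, I would fix an arbitrary $\varphi \in \Sigma$ and argue that $\varphi \in \Sigma'$. From $x \in \Sigma$ and $\varphi \in \Sigma$, closure of the theory $\Sigma$ under modus ponens gives $x \wedge \varphi \in \Sigma$. Next I manufacture a diamond witness by means of the trivial test: instantiating Axiom \ref{ax:Prg9} with test formula $\top$ and conclusion $\neg(x \wedge \varphi)$ yields $[?\top]\neg(x\wedge\varphi) \rightarrow \neg(x\wedge\varphi)$, whose contrapositive (after unfolding the definition of $\langle \cdot \rangle$) is the theorem $(x\wedge\varphi) \rightarrow \langle ?\top \rangle (x \wedge \varphi)$; hence $\langle ?\top \rangle (x \wedge \varphi) \in \Sigma$. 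Now I apply the instance of Axiom \ref{ax:Most} in which the first program is $?\top$ and the second is the given $\pi$, namely $\langle ?\top \rangle (x \wedge \varphi) \rightarrow [\pi](x \rightarrow \varphi)$; since $\Sigma$ contains this theorem and is closed under modus ponens, $[\pi](x \rightarrow \varphi) \in \Sigma$. By the hypothesis $\Sigma R_\pi^c \Sigma'$ and Definition \ref{canonicrel}, $x \rightarrow \varphi \in \Sigma'$, and since $x \in \Sigma'$ and $\Sigma'$ is closed under modus ponens, $\varphi \in \Sigma'$, as desired.

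This gives $\Sigma \subseteq \Sigma'$, and the maximality step closes the argument. The only point needing a modicum of care is the diamond-manufacturing manoeuvre: one must check that $?\top$ is a bona fide program and that Axiom \ref{ax:Most} genuinely allows two \emph{independent} program parameters (so that the first slot may be filled by $?\top$ while the second is $\pi$), which it does since both slots are schematic. Beyond that, the proof is routine reasoning about closure of MCTs under theorems and modus ponens, so I anticipate no real obstacle — this is the ``nominals name unique worlds'' bookkeeping step that later makes the truth lemma go through.
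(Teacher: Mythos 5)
Your proof is correct and follows essentially the same route as the paper: both obtain $[\pi](x \rightarrow \varphi) \in \Sigma$ from $x,\varphi \in \Sigma$ via Axiom \ref{ax:Most}, transfer $x \rightarrow \varphi$ to $\Sigma'$ through $R_\pi^c$, and conclude with $x \in \Sigma'$ and modus ponens (your inclusion-plus-maximality ending is just the contrapositive of the paper's reductio). If anything, you are more careful than the paper at the one delicate spot, since you explicitly manufacture the antecedent diamond $\langle ?\top\rangle(x\wedge\varphi)$ via Axiom \ref{ax:Prg9}, whereas the paper cites \ref{ax:Most} directly for the implication $(x\wedge\varphi)\rightarrow[\pi](x\rightarrow\varphi)$.
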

\begin{proof}
 Let us first prove that
(i)
if
$x \in \Sigma$
and
$\varphi \in \Sigma$
then
$[\pi] (x \rightarrow \varphi) \in \Sigma$.
Suppose
$x, \varphi \in \Sigma$.
Thus, $x \wedge \varphi \in \Sigma$
since $\Sigma$
is a MCT.
Moreover, 
$ (x \wedge \varphi)
       \rightarrow [\pi](x \rightarrow \varphi) \in \Sigma$,
because of Axiom \ref{ax:Most}.
Hence,
$[\pi](x \rightarrow \varphi) \in \Sigma$.

Now let us prove
by absurdum
that (ii)
if $x \in \Sigma, \Sigma' $
and $  \Sigma R_\pi^c \Sigma' $
then $\Sigma = \Sigma'$.
Suppose
$x \in \Sigma, \Sigma' $,
 $ \Sigma R_\pi^c \Sigma' $
and $\Sigma \neq \Sigma'$.
The latter implies
that there exists $\varphi$
such that $\varphi \in \Sigma$
and $ \varphi \not \in \Sigma'$.
By item (i) above, it follows that
$[\pi](x \rightarrow \varphi) \in \Sigma$.
Since $\Sigma R_\pi^c \Sigma' $,
the latter implies
that
$x \rightarrow \varphi \in \Sigma'$.
Since
$x \in \Sigma' $,
it follows that $\varphi \in \Sigma'$
which leads to a contradiction.
\end{proof}

The next step consists in proving the following existence lemma.

\begin{lemma}\label{existencelemma}

Let $M^c = (  W^c,( \preceq_{i,P}^c)_{i \in \mathit{Agt} }, (\preceq_{i,D}^c)_{i \in \mathit{Agt} }  ,
(\equiv_i^c)_{i \in \mathit{Agt} }  , V^c  )$
be the canonical model,
let $w \in W^c$
and let $\langle \pi \rangle \varphi \in
\mathcal{L}_{\mathrm{DLCA} }$.
Then,
if
$\langle \pi \rangle \varphi \in w$
then there exists $v \in W^c$
such that
$w R_\pi^c v$
and $\varphi \in v$.

\end{lemma}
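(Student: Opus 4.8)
The plan is to prove the lemma uniformly in $\pi$, with no induction on the structure of the program, simply by extending a suitable set of formulas to a maximal consistent theory. Assume $\langle \pi \rangle \varphi \in w$ and put $\Sigma_0 = \{ \psi \in \mathcal{L}_{\mathrm{DLCA} } : [\pi]\psi \in w \}$. Any MCT $v$ with $\Sigma_0 \cup \{\varphi\} \subseteq v$ does the job: by Definition \ref{canonicrel} it satisfies $w R_\pi^c v$, and $\varphi \in v$ by construction. So everything reduces to producing such a $v$.

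First I would verify that $\Sigma_0$ is a theory. Closure under the theorems of $\mathrm{DLCA}$ follows from rule \ref{ax:Nec} together with the fact that $w$, being a theory, contains all theorems; closure under modus ponens follows from Axiom \ref{ax:Kequiv}. The one genuinely delicate point --- and the only place where the nominals really intervene --- is closure under rule \ref{ax:Name}: if $[\pi']\neg x \in \Sigma_0$ for every $x \in \mathit{Nom}$, then $[\pi][\pi']\neg x \in w$ for every $x$, hence $[\pi {;} \pi']\neg x \in w$ for every $x$ by the reduction axiom \ref{ax:Prg1}; since $w$ is itself closed under \ref{ax:Name}, this gives $[\pi {;} \pi']\bot \in w$, and applying \ref{ax:Prg1} in the other direction yields $[\pi][\pi']\bot \in w$, i.e. $[\pi']\bot \in \Sigma_0$. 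Thus $\Sigma_0$ is a theory, and by Proposition \ref{proptheories} so is $\Sigma_0 + \varphi$; moreover $\Sigma_0 + \varphi$ contains $\varphi$ (since $\varphi \to \varphi$ is a theorem, hence lies in $\Sigma_0$) and contains $\Sigma_0$ (since $\psi \to (\varphi \to \psi)$ is a theorem, so each $\psi \in \Sigma_0$ gives $\varphi \to \psi \in \Sigma_0$).

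It remains to see that $\Sigma_0 + \varphi$ is consistent: otherwise $\varphi \to \bot \in \Sigma_0$, hence $[\pi]\neg \varphi \in w$ (using $\vdash (\varphi \to \bot) \leftrightarrow \neg\varphi$, rule \ref{ax:Nec} and Axiom \ref{ax:Kequiv}), which contradicts $\langle \pi \rangle \varphi \in w$ since $w$ is an MCT. As $\bot \notin \Sigma_0 + \varphi$, Lemma \ref{linden} applied with the avoided formula $\bot$ produces an MCT $v$ with $\Sigma_0 + \varphi \subseteq v$, and this $v$ is the required witness. The whole argument is routine canonical-model manipulation; the only step needing care, and the one I would flag as the crux, is the reduction of \ref{ax:Name}-closure of $\Sigma_0$ to \ref{ax:Name}-closure of $w$ via the sequential-composition reduction axiom --- this is precisely what makes the notion of ``theory'' used here well behaved under the box-dual operation $w \mapsto \Sigma_0$, and it is why the standard textbook existence lemma cannot simply be quoted.
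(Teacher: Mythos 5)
Your proof is correct and follows essentially the same route as the paper: form $[\pi]w=\{\psi:[\pi]\psi\in w\}$, check it is a consistent theory (your explicit verification of closure under rule \ref{ax:Name} via Axiom \ref{ax:Prg1} is exactly the point the paper leaves as ``easy to check''), extend $[\pi]w+\varphi$ by Lemma \ref{linden}, and read off $wR^c_\pi v$ from Definition \ref{canonicrel}. The only cosmetic difference is that the paper splits into the cases $\varphi\in[\pi]w$ and $\varphi\notin[\pi]w$, whereas your uniform use of $[\pi]w+\varphi$ handles both at once.
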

\begin{proof}
 Suppose 
$w$ is a MCT and  $\langle \pi \rangle \varphi \in w$.
It follows  that $[\pi]w = \{\psi : [\pi]\psi \in w \}$
is a consistent theory.
Indeed,
it is easy to check that 
$[\pi]w$ contains 
all theorems of $\mathrm{DLCA}$,
is closed under modus ponens
and rule \ref{ax:Name}. Let us prove that it is consistent
by reductio ad absurdum.
Suppose $\bot \in [\pi]w$.
Thus, $ [\pi]\bot \in w$.
Hence, $ [\pi]\neg\varphi \in w$.
Since $\langle \pi \rangle \varphi \in w$,
$\bot \in w$.
The latter
contradicts the fact that $w$ is a MCT.
Let us distinguish two cases.

Case 1: $\varphi \in [\pi]w$. 
Thus, $\neg \varphi \not \in [\pi]w$
since $w$ is consistent. Thus,
by Lemma \ref{linden}, there exists MCT
$v$
such that $ [\pi]w \subseteq v $,
$\varphi \in v$ and
$\neg \varphi \not \in v$. By definition of $R_\pi^c$,
$w R_\pi^c v$.

Case 2: $\varphi \not \in [\pi]w$. 
By Proposition \ref{proptheories}, $ [\pi]w + \varphi$
is a theory since $ [\pi]w$
is a theory. $ [\pi]w + \varphi$
is consistent. Suppose it is not.
Thus, $\varphi \rightarrow \bot \in  [\pi]w$
and, consequently, $\neg \varphi\in  [\pi]w $.
Hence, $[\pi] \neg \varphi \in w$.
It follows that $\bot \in w$,
since $\langle \pi \rangle \varphi \in w$.
But this contradicts the fact that $w$ is a MCT.
Thus, $ [\pi]w + \varphi$ is a consistent theory.
Moreover, $\varphi \in [\pi]w + \varphi $,
$\neg \varphi \not \in [\pi]w + \varphi $
and $ [\pi]w \subseteq  [\pi]w + \varphi $.
By Lemma \ref{linden}, there exists MCT
$v$
such that $ [\pi]w \subseteq v $,
$\varphi \in v$ and
$\neg \varphi \not \in v$. By definition of $R_\pi^c$,
$w R_\pi^c v$.
\end{proof}

The following truth lemma is proved in the usual way
by
induction on the structure of $\varphi$
thanks to Lemma \ref{existencelemma}.

\begin{lemma}\label{truthlemma}
Let $M^c = (  W^c,( \preceq_{i,P}^c)_{i \in \mathit{Agt} }, (\preceq_{i,D}^c)_{i \in \mathit{Agt} }  ,
(\equiv_i^c)_{i \in \mathit{Agt} }  , V^c  )$
be the canonical model,
let $w \in W^c$
and let $  \varphi \in
\mathcal{L}_{\mathrm{DLCA} }$.
Then,
$M^c , w \models \varphi$
iff $\varphi \in w$.

\end{lemma}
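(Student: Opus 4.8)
The plan is to argue by induction on the structure of $\varphi$, in the standard Henkin style. The base cases $\varphi = p \in \mathit{Atm}$ and $\varphi = x \in \mathit{Nom}$ are immediate from the clause $V^c(w) = (\mathit{Atm} \cup \mathit{Nom}) \cap w$ of Definition~\ref{defCanModel}, since then $M^c,w \models \varphi$ iff $\varphi \in V^c(w)$ iff $\varphi \in w$; and the Boolean cases $\varphi = \neg\psi$ and $\varphi = \psi_1 \wedge \psi_2$ follow at once from the induction hypothesis together with the maximal-consistency properties of MCTs recorded in Proposition~\ref{beflinde} (namely $\psi \in w$ or $\neg\psi \in w$, and disjunctions distributing over membership).

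The only real work is the modal case $\varphi = [\pi]\psi$, and for it I would first establish, by an auxiliary induction on the structure of $\pi$, the key fact that \emph{in the canonical model} the relation $R_\pi$ of Definition~\ref{truthcond} coincides with the canonical relation $R_\pi^c$ of Definition~\ref{canonicrel}. For the atomic programs $\equiv_i$ and $\preceq_{i,\tau}$ the two relations agree by the very definition of $M^c$. For $\pi;\pi'$ and $\pi \cup \pi'$ one argues exactly as in $\mathrm{PDL}$ using the reduction Axioms~\ref{ax:Prg1} and~\ref{ax:Prg2}: the inclusions ``a $\pi$-step followed by a $\pi'$-step is a $(\pi;\pi')$-step'' and $R^c_{\pi\cup\pi'} \supseteq R^c_\pi \cup R^c_{\pi'}$ are routine, while their converses follow from the existence Lemma~\ref{existencelemma}. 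For the test $\chi?$ one uses Axiom~\ref{ax:Prg9} together with the induction hypothesis applied to $\chi$ (so that $M^c,w \models \chi$ iff $\chi \in w$ is already available). The delicate cases are the intersection $\pi \cap \pi'$, the converse $-\pi$, and the complements $\preceq_{i,\tau}^\sim$ of the atomic preorder programs: here I would proceed in the style of modal logic with names, using Lemma~\ref{containsone} to pick a nominal naming the target world, Axioms~\ref{ax:Prg4} and~\ref{ax:Prg3} for intersection, Axioms~\ref{ax:Prg5} and~\ref{ax:Prg6} for converse, Axioms~\ref{ax:Prg7} and~\ref{ax:Prg8} for the complements, and Axiom~\ref{ax:Most} (through Lemma~\ref{propcanonical}) to collapse the auxiliary nominal-named worlds produced by the existence lemma to the intended one.

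Granting the coincidence $R_\pi = R^c_\pi$, the modal case of the $\varphi$-induction is short. If $[\pi]\psi \in w$ and $w R_\pi v$, then $w R^c_\pi v$, so $\psi \in v$ by Definition~\ref{canonicrel}, whence $M^c,v \models \psi$ by the induction hypothesis; as $v$ was arbitrary, $M^c,w \models [\pi]\psi$. Conversely, if $[\pi]\psi \notin w$ then $\langle\pi\rangle\neg\psi \in w$ since $w$ is an MCT, so Lemma~\ref{existencelemma} yields $v \in W^c$ with $w R^c_\pi v$ and $\neg\psi \in v$; by the induction hypothesis $M^c,v \not\models \psi$, and since $w R^c_\pi v$ gives $w R_\pi v$, we get $M^c,w \not\models [\pi]\psi$. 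The main obstacle is precisely the coincidence lemma for the non-PDL-standard constructs $\cap$, $-$ and $^\sim$: this is exactly where the nominals, the covering rule~\ref{ax:Name} (via Lemma~\ref{containsone}) and Axiom~\ref{ax:Most} (via Lemma~\ref{propcanonical}) become indispensable, whereas the induction on $\varphi$ itself, and the sequential-composition, choice and test subcases, are entirely routine.
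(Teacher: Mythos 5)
Your proposal is correct, and its skeleton coincides with the paper's: induction on the structure of $\varphi$, with the box direction handled through Definition~\ref{canonicrel} and the diamond direction through the existence Lemma~\ref{existencelemma}. The one genuine difference is organizational, and it concerns exactly the point you flag as the main obstacle. The paper's own proof of Lemma~\ref{truthlemma} silently treats the canonical relation $R_\pi^c$ as if it were the relation interpreting $\pi$ in $M^c$ (its final ``iff'' step reads $\forall v\in R_\pi^c(w): M^c,v\models\psi$ iff $M^c,w\models[\pi]\psi$), and the justification that $R_\pi^c$ coincides with the semantically built relation $R_\pi$ of Definition~\ref{truthcond} is deferred to Lemma~\ref{canonicity}, where the six coincidence conditions for $\preceq_{i,\tau}^\sim$, $;$, $\cup$, $\cap$, $-$ and $?$ are verified as part of showing that $M^c$ is a quasi-MCM, using precisely the ingredients you list (Axioms~\ref{ax:Prg4}, \ref{ax:Prg3}, \ref{ax:Prg5}, \ref{ax:Prg6}, \ref{ax:Prg7}, \ref{ax:Prg8}, \ref{ax:Most}, together with Lemmas~\ref{containsone} and~\ref{propcanonical}). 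You instead prove this coincidence as an auxiliary induction on $\pi$ inside the truth lemma, which is the same mathematics but arguably a cleaner arrangement: it makes the mutual recursion between formulas and programs explicit and handles the test case $\chi?$ via the main induction hypothesis for $\chi$, whereas in the paper the test clause of Lemma~\ref{canonicity} appeals to truth in $M^c$ and hence implicitly back to Lemma~\ref{truthlemma}, so strictly speaking the paper's two lemmas also have to be read as one simultaneous induction. In short, what you do differently is to make explicit, and place inside the truth lemma, a coincidence step the paper relegates to the later canonicity lemma; both routes need the same axioms and nominal machinery, yours buys self-containedness and removes the forward reference, the paper's buys a shorter truth-lemma proof at the cost of that implicit dependency.
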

\begin{proof}
The proof is by induction
on the structure of $\varphi$.
We only prove the case in which $\varphi$
is a modal formula $[\pi]\psi$.
As for the right-to-left direction
we have:
\begin{eqnarray*}
[\pi]\psi \in w & \text{ only if } & \forall v \in R_\pi^c(w): \psi \in v \text{ (by definition of $R_\pi^c$)} \\
  & \text{ iff } & \forall v \in R_\pi^c(w): M^c , v \models \psi \text{ (by induction hypothesis)} \\
  & \text{ iff } & M^c , w \models [\pi]\psi
\end{eqnarray*}
As for the left-to-right direction,
we prove that if $[\pi]\psi \not \in w $
 then $M^c , w \not \models [\pi]\psi $
 that, given the property of MCSs, is equivalent to proving that
 if
 $\langle \pi \rangle \psi \in w $
 then $M^c , w  \models \langle \pi \rangle \psi$.
 Suppose $\langle \pi \rangle \psi \in w $.
 Then, by Lemma \ref{existencelemma},
 there exists    $v \in W^c$
such that
$w R_\pi^c v$
and $\psi \in v$.
Hence,
by induction hypothesis,
 there exists    $v \in W^c$
such that
$w R_\pi^c v$
and $M^c ,v\models  \psi $.
The latter is equivalent to
$M^c , w  \models \langle \pi \rangle \psi$.
\end{proof}

The pre-final stage of
the proof
consists in introducing 
an alternative semantics 
for the language $  
\mathcal{L}_{\mathrm{DLCA} }$
which turns out to be equivalent
to the original semantics based on MCMs.

\begin{definition}[Quasi multi-agent cognitive model]\label{sec:quasiMCMdef}
A quasi multi-agent cognitive model (quasi-MCM) is a tuple
$M = (  W,( \preceq_{i,P})_{i \in \mathit{Agt} }, (\preceq_{i,D})_{i \in \mathit{Agt} }  ,
(\equiv_i)_{i \in \mathit{Agt} }  , V  )$
where $W$, $\preceq_{i,P}$, $\preceq_{i,D}$, $\equiv_i$ and $V$
are as in Definition \ref{DefSemanticsModel}
except that Constraint C4 is replaced by the following weaker constraint.
For all $w,v \in W$:
\begin{description}

\item[(C4$^*$)] if $ V_{\mathit{Nom}}(w) \cap V_{\mathit{Nom}}(v)  \neq \emptyset $
and $w R_\pi v$ for some $\pi \in  \mathcal{P}$
 then $w =v$.

\end{description}

\end{definition}

By the generated submodel property,
it is easy
to show that the semantics in terms
of MCMs and the semantics
in terms of quasi-MCMs
are equivalent with respect to the language 
$  
\mathcal{L}_{\mathrm{DLCA} }$.

\begin{proposition}\label{equivsemantics}
Let $\varphi \in \mathcal{L}_{\mathrm{DLCA} }$.
Then, $\varphi$
is valid relative to the class of MCMs
if and only if
$\varphi$
is valid relative to the class of quasi-MCMs.
\end{proposition}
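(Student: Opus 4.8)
The claim is an equivalence of validity over two model classes that differ only in their nominal constraint (C4 versus the weaker C4$^*$), so the plan is to show that each model of one class can be turned into a model of the other preserving truth of all $\mathcal{L}_{\mathrm{DLCA}}$-formulas at the relevant world. One direction is immediate: every MCM is a quasi-MCM, since C4 trivially implies C4$^*$. Hence if $\varphi$ is valid over all quasi-MCMs it is valid over all MCMs, and contrapositively a quasi-MCM falsifying $\varphi$ at some world would already be enough; but for the nontrivial direction we must start from a quasi-MCM and produce a genuine MCM.

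\textbf{The key construction: generated submodels.} Suppose $\varphi$ is not valid over quasi-MCMs, so $M,w \not\models \varphi$ for some quasi-MCM $M=(W,(\preceq_{i,P}),(\preceq_{i,D}),(\equiv_i),V)$ and $w\in W$. First I would restrict attention to the submodel $M'$ generated by $w$, i.e.\ take $W'$ to be the set of states reachable from $w$ by finitely many steps along the union of all the relations $R_\pi$ for $\pi\in\mathcal{P}$ (equivalently, reachable along $\equiv_i$-steps, since by Constraint C1 every atomic relation and hence, one checks by induction on program structure, every $R_\pi$ is contained in the union of the $\equiv_i$), and restrict all the orderings, the equivalence relations and the valuation to $W'$. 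Constraints C1, C2, C3 are clearly inherited, and one verifies by the standard bisimulation/generated-submodel argument that $M',w\models\psi \iff M,w\models\psi$ for every $\psi\in\mathcal{L}_{\mathrm{DLCA}}$ and every world in $W'$; the point is that the relations $R_\pi$ in $M'$ are exactly the restrictions of those in $M$, because all program constructs ($;,\cup,\cap,-,?$) are ``local'' and $W'$ is closed under each atomic relation and its converse. Now the crucial observation is that within $W'$ the weaker constraint C4$^*$ already forces full C4: if $w',v'\in W'$ share a nominal, then since each is reachable from $w$ there is some program $\pi$ with $w' R_\pi v'$ (chain the reachability witnesses together using sequential composition, and use the converse operator to reverse directions as needed), so C4$^*$ gives $w'=v'$. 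Thus $M'$ is a genuine MCM falsifying $\varphi$ at $w$, and $\varphi$ is not valid over MCMs.

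\textbf{Main obstacle.} The routine-but-delicate point is verifying that $W'$ as defined is genuinely closed under \emph{every} $R_\pi$ (not merely the atomic ones) and under their converses, so that (a) the generated-submodel truth-preservation goes through and (b) any two nominal-sharing worlds of $W'$ are related by some single program. For (a) one argues by induction on $\pi$: closure under atomic programs and converses is by definition of reachability; closure under $;$, $\cup$, $\cap$ follows from the closure for the components; closure under $\varphi?$ is trivial. For (b) the key is that reachability is witnessed by a finite chain $w = u_0, u_1,\dots,u_n = w'$ where each step $u_k \to u_{k+1}$ is along some atomic $R_{\pi_k}$ or its converse $R_{-\pi_k}$; one then forms the composite program $\pi_0 ; \pi_1 ; \cdots ; \pi_{n-1}$ (replacing a ``reverse'' step by the corresponding converse program) so that $w R_{(\pi_0;\cdots;\pi_{n-1})} w'$, and similarly from $w$ to $v'$, hence there is a program from $w'$ to $v'$ by composing a converse with a forward chain. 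Once this bookkeeping is in place, Constraint C4$^*$ applied to $w'$ and $v'$ yields $w' = v'$, completing the argument; everything else is the standard, well-known behaviour of generated submodels for $\mathrm{PDL}$-style languages.
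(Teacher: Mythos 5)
Your proof is correct and takes essentially the same route as the paper, which simply invokes the generated submodel property at this point: you restrict to the submodel generated by the refuting world, note truth preservation for all programs, and observe that C4$^*$ upgrades to C4 there because any two worlds of a point-generated submodel are linked by a single composite program (chaining atomic steps, converses and compositions), so sharing a nominal forces identity. The bookkeeping you flag as the main obstacle is exactly the routine verification the paper leaves implicit.
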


The following theorem highlights
that the canonical
model
is indeed a structure
of the right type.

\begin{lemma}\label{canonicity}
The canonical
model $M^c$
is a quasi-MCM.
\end{lemma}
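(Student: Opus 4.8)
The plan is to verify that the canonical model $M^c$ satisfies all the conditions in Definition~\ref{DefSemanticsModel} except that Constraint~C4 is weakened to C4$^*$, as required by Definition~\ref{sec:quasiMCMdef}. I would proceed condition by condition. First I would treat the frame conditions on the relations. Since $[\equiv_i]$ satisfies the axioms T, 4, 5 (Axioms~\ref{ax:Tequiv}--\ref{ax:5equiv}), the standard canonical-model arguments from modal logic give that $\equiv_i^c$ is reflexive, transitive and Euclidean, hence an equivalence relation. Similarly, from Axioms~\ref{ax:Tpreceq} and \ref{ax:4preceq} we get that each $\preceq_{i,\tau}^c$ is reflexive and transitive, i.e.\ a preorder. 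For Constraint~C1 ($\preceq_{i,\tau}^c \subseteq \equiv_i^c$), I would use the inclusion Axiom~\ref{ax:Int}: if $w \preceq_{i,\tau}^c v$ and $[\equiv_i]\varphi \in w$, then $[\preceq_{i,\tau}]\varphi \in w$ by \ref{ax:Int}, hence $\varphi \in v$, so $w \equiv_i^c v$. For Constraint~C2 (connectedness within an information set), I would use the connectedness Axiom~\ref{ax:Conn}: assuming $w \equiv_i^c v$ but neither $w \preceq_{i,\tau}^c v$ nor $v \preceq_{i,\tau}^c w$, one extracts witnessing nominals $x \in w$, $y \in v$ (using Lemma~\ref{containsone}), derives $\langle\equiv_i\rangle x, \langle\equiv_i\rangle y \in w$ via the truth lemma or direct canonical reasoning, applies \ref{ax:Conn} to get $\langle\equiv_i\rangle(x \wedge \langle\preceq_{i,\tau}\rangle y)$ or the symmetric disjunct in $w$, and then uses Lemma~\ref{propcanonical} (the nominal-rigidity property) together with the existence Lemma~\ref{existencelemma} to conclude that one of $w \preceq_{i,\tau}^c v$, $v \preceq_{i,\tau}^c w$ must hold after all --- a contradiction.

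Next I would check the valuation conditions. Condition~C3 ($V_{\mathit{Nom}}^c(w) \neq \emptyset$ for every $w \in W^c$) is exactly Lemma~\ref{containsone}. Condition~C4$^*$ is exactly Lemma~\ref{propcanonical}: if $x \in V(w) \cap V(v)$ and $w R_\pi^c v$ for some program $\pi$, then $w = v$. So both of these are already in hand.

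The remaining and most delicate part --- which I expect to be the main obstacle --- is showing that the inductively-defined relations $R_\pi^c$ (from Definition~\ref{canonicrel} as ``$[\pi]\varphi \in w \Rightarrow \varphi \in v$'') coincide on $W^c$ with the relations built up from the atomic canonical relations by the PDL semantic clauses of Definition~\ref{truthcond}. In other words, I must prove that the canonical relation for a complex program is what the syntax says it should be: $R_{\pi;\pi'}^c = R_\pi^c \circ R_{\pi'}^c$, $R_{\pi \cup \pi'}^c = R_\pi^c \cup R_{\pi'}^c$, $R_{\pi \cap \pi'}^c = R_\pi^c \cap R_{\pi'}^c$, $R_{-\pi}^c = (R_\pi^c)^{-1}$, $R_{\varphi?}^c = \{(w,w) : \varphi \in w\}$, and $R_{\preceq_{i,\tau}^\sim}^c = {\equiv_i^c} \setminus {\preceq_{i,\tau}^c}$. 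The easy inclusions follow from the reduction/distribution axioms \ref{ax:Prg1}--\ref{ax:Prg9}; the hard inclusions require the existence lemma and, crucially for $\cap$ and for $\preceq_{i,\tau}^\sim$, the nominal machinery --- Axioms~\ref{ax:Prg3}, \ref{ax:Prg7}, \ref{ax:Prg8}, \ref{ax:Most} and rule~\ref{ax:Name} --- exactly in the style of Passy and Tinchev~\cite{PassyTinchev} and Gargov--Goranko~\cite{Goranko93}. For instance, to show $R_{\pi \cap \pi'}^c \supseteq R_\pi^c \cap R_{\pi'}^c$: given $w R_\pi^c v$ and $w R_{\pi'}^c v$, pick a nominal $x \in v$ (Lemma~\ref{containsone}); then $\langle\pi\rangle x, \langle\pi'\rangle x \in w$, so by Axiom~\ref{ax:Prg3} $\langle\pi\cap\pi'\rangle x \in w$, and the existence lemma gives a $v'$ with $w R_{\pi\cap\pi'}^c v'$ and $x \in v'$; then Lemma~\ref{propcanonical} forces $v' = v$ (since also $x \in v$ and we can relate $v$ to $v'$), hence $w R_{\pi\cap\pi'}^c v$. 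Once all these program-relation identities are established, the frame conditions C1, C2 verified above automatically transfer to the complex relations as needed, and $M^c$ is confirmed to be a quasi-MCM.

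Finally I would note that this lemma, combined with the truth Lemma~\ref{truthlemma}, Lemma~\ref{linden}, and the equivalence of the MCM and quasi-MCM semantics (Proposition~\ref{equivsemantics}), yields completeness of $\mathrm{DLCA}$ with respect to MCMs in the standard way: a $\mathrm{DLCA}$-consistent formula $\varphi$ lies in some MCT $w \in W^c$, hence is true at $w$ in the quasi-MCM $M^c$, hence is satisfiable over quasi-MCMs, hence over MCMs. The one subtlety I would keep an eye on throughout is that, because $\preceq_{i,\tau}^c$ need not be conversely well-founded, one should \emph{not} try to prove C4 outright for $M^c$ --- only C4$^*$ --- which is precisely why the quasi-MCM detour is needed.
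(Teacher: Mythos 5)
Your skeleton matches the paper's proof: C3 and C4$^*$ come from Lemmas \ref{containsone} and \ref{propcanonical}, the frame conditions (equivalence, preorder, C1, C2) are routine canonicity of Axioms \ref{ax:Tequiv}--\ref{ax:Conn}, and the real work is verifying that the canonical relations $R^c_\pi$ of Definition \ref{canonicrel} satisfy the six semantic identities for complex programs, with the nominal axioms doing the heavy lifting for $\cap$ and $\preceq^\sim_{i,\tau}$. Two places where your sketch is thinner than the paper's argument deserve a note. First, the hard (left-to-right) direction for sequential composition is not obtained from the existence lemma alone: the paper builds an intermediate MCT by showing that $[\pi]w \cup \{\langle\pi'\rangle\psi : \psi \in v\}$ generates a consistent theory (a chain $\Sigma^1 \subseteq \Sigma^2 \subseteq \dots$ via Proposition \ref{proptheories}, then Lemma \ref{linden}); your generic appeal to Passy--Tinchev covers this in spirit but the construction is a separate step, not an instance of Lemma \ref{existencelemma}. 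Second, your $\cap$ argument differs mechanically from the paper's: the paper shows directly that $\langle\pi\cap\pi'\rangle\varphi \in w$ for every $\varphi \in v$, by deriving $(\langle\pi\rangle(x\wedge\varphi)\wedge\langle\pi'\rangle(x\wedge\varphi))\rightarrow\langle\pi\cap\pi'\rangle(x\wedge\varphi)$ from Axioms \ref{ax:Prg3}, \ref{ax:Most}, \ref{ax:Int} and \ref{ax:Kequiv}, and then concludes $w R^c_{\pi\cap\pi'} v$ via Proposition \ref{beflinde}; your route (apply \ref{ax:Prg3} to the nominal alone, get a witness $v'$ from the existence lemma, then force $v'=v$) is workable, but the parenthetical ``we can relate $v$ to $v'$'' is exactly the step that needs an argument: Lemma \ref{propcanonical} only applies once you exhibit \emph{some} program relating them, e.g.\ $v \,R^c_{-\pi}\, w$ via Axiom \ref{ax:Prg5} and then $v\, R^c_{-\pi;(\pi\cap\pi')}\, v'$ via the easy direction of Axiom \ref{ax:Prg1}. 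With that patch (or by switching to the paper's direct derivation using Axiom \ref{ax:Most}), your plan goes through; the rest, including the completeness wrap-up and the remark that only C4$^*$, not C4, should be attempted, agrees with the paper.
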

\begin{proof}
The fact that
$M^c$ satisfies Constraints C3 and C4$^*$ 
follows from Lemma \ref{containsone} and Lemma \ref{propcanonical}.
To prove that
$\equiv_i$
is an equivalence relation
 that $\preceq_{i,D}^c$
 and $\preceq_{i,D}^c$
are  preorders and
that $M^c$ satisfies Constraints C1 and C2
is just a routine exercise. Indeed,
Axioms \ref{ax:Tequiv},
\ref{ax:4equiv},
\ref{ax:5equiv},
\ref{ax:Tpreceq},
\ref{ax:4preceq}
\ref{ax:Int}
and \ref{ax:Conn}
are canonical for these semantic conditions.

To conclude, we need to prove that the following six conditions hold,
for $i \in \mathit{Agt}$ and $\tau \in \{P,D\}$:
  \begin{align*}
   (w,v) \in  R_{ \preceq_{i,\tau}^\sim}^c    & \text{ iff }
    (w,v) \in  R_{ \equiv_i }^c \text{ and }
 (w,v) \not \in  R_{ \preceq_{i,\tau} }^c \\
    (w,v) \in R_{ \pi ;  \pi'}^c    & \text{ iff }
    \exists u \in W^c:
 (w,u) \in R_{ \pi }^c
 \text{ and }  (u,v) \in  R_{  \pi'}^c  \\
    (w,v) \in R_{ \pi \cup  \pi'}^c    & \text{ iff }
 (w,v) \in R_{ \pi }^c
 \text{ or }  (w,v) \in  R_{  \pi'}^c  \\
 (w,v) \in R_{ \pi \cap  \pi'}^c    & \text{ iff }
 (w,v) \in R_{ \pi }^c
 \text{ and }  (w,v) \in R_{  \pi'}^c  \\
  (w,v) \in  R_{ -\pi}^c    & \text{ iff }
 (v,w) \in  R_{ \pi}^c \\
 w R_{\varphi?}^c v   & \text{ iff }
w =v  \text{ and } M^c,w \models \varphi
\end{align*}
We only prove the second and fourth  conditions which
are the most difficult ones to prove.

Let us start with the proof of the second condition.
The right-to-left direction is standard.
We only prove the left-to-right direction. 
Suppose $   (w,v) \in R_{ \pi ;  \pi'}^c$.
Let  $[\pi]w = \{\psi : [\pi]\psi \in w \}$.
Moreover, let $  \langle \pi' \rangle v  = \{  \langle \pi' \rangle \psi : \psi \in v \} $.
Finally, let $\langle \pi' \rangle \psi_1, \langle \pi' \rangle \psi_2, \ldots$ be an enumeration of the elements of $\langle \pi' \rangle v$.
We define $\Sigma^1= [\pi]w +   \langle \pi' \rangle \psi_1$
and, for all $k >1$,  $\Sigma^k= \Sigma^{k-1} +   \langle \pi' \rangle \psi_k$.
By Lemma \ref{proptheories} and the fact that $[\pi]w$ is a theory,
it can be shown that 
every  $\Sigma^k$ is a theory. 
Moreover, by induction on $k$,
it can be shown that every 
$\Sigma^k$ is consistent. 
Since  $\Sigma^{k-1} \subseteq \Sigma^{k}$
for all $k >1$, it follows that $ \Sigma  = \bigcup_{k > 1} \Sigma^{k-1}$
is a consistent theory.  By Lemma \ref{linden} and the definition of $\Sigma$, there exists $u \in W^c$
such that $\Sigma \subseteq u $, $   (w,u) \in R_{ \pi }^c$ and $   (u,v) \in R_{   \pi'}^c$.

Let us now prove the fourth condition.
Suppose
$ (w,v) \in R_{ \pi \cap  \pi'}^c $.
By Definition \ref{canonicrel}
and Proposition \ref{beflinde},
it follows that,
for all $\varphi$,
if $\varphi \in v  $
then $\langle \pi \cap  \pi' \rangle \varphi \in w$.
The latter implies that
for all $\varphi$,
if $\varphi \in v  $
then $\langle \pi \cap  \pi' \rangle (\varphi \vee
\bot )
\in w$
since
$\vdash \langle \pi \cap  \pi' \rangle \varphi
\rightarrow
\langle \pi \cap  \pi' \rangle (\varphi \vee
\bot )$.
By Axiom \ref{ax:Kequiv},
it follows that,
for all $\varphi$,
if $\varphi \in v  $
then $\langle \pi  \rangle \varphi
\vee
\langle \pi'  \rangle \bot
\in w$.
Thus,
for all $\varphi$,
if $\varphi \in v  $
then $\langle \pi  \rangle \varphi
\in w$,
since
$\vdash
(\langle \pi  \rangle \varphi
\vee
\langle \pi'  \rangle \bot)
\rightarrow
\langle \pi  \rangle \varphi $.
In a similar way,
we can prove
that,
for all $\varphi$,
if $\varphi \in v  $
then $\langle \pi'  \rangle \varphi
\in w$.
By
Definition \ref{canonicrel}
and Proposition \ref{beflinde},
it follows that
$ (w,v) \in R_{ \pi }^c
 \text{ and }  (w,v) \in R_{  \pi'}^c$.

 Now suppose
 $ (w,v) \in R_{ \pi }^c
 \text{ and }  (w,v) \in R_{  \pi'}^c$.
Thus,
by Definition \ref{canonicrel}
and Proposition \ref{beflinde}, (i) for all $\varphi$,
if $\varphi \in v  $
then $\langle \pi  \rangle \varphi \in w$
and $\langle   \pi' \rangle \varphi \in w$.
By Proposition \ref{beflinde} and Lemma \ref{containsone}, we have that
(ii) there exists $x \in \mathit{Nom}$
such that,
for all $\varphi$,
$\varphi \in v  $ iff
$x \wedge \varphi  \in v  $.
Item (i) and item (ii)
together imply that
(iii)
there exists $x \in \mathit{Nom}$
such that,
for all $\varphi$,
if $\varphi \in v  $
then $\langle \pi  \rangle (x \wedge \varphi ) \in w$
and $\langle   \pi' \rangle (x \wedge \varphi ) \in w$.
We are going to prove
the following
theorem:
\begin{align*}
\vdash (\langle \pi  \rangle (x \wedge \varphi )
\wedge
\langle   \pi' \rangle (x \wedge \varphi ))
\rightarrow
\langle \pi \cap  \pi' \rangle (x \wedge \varphi )
\end{align*}
By Axiom \ref{ax:Kequiv},
$\langle \pi  \rangle (x \wedge \varphi )
\wedge
\langle   \pi' \rangle (x \wedge \varphi )$
implies
$\langle \pi  \rangle x
\wedge
\langle   \pi' \rangle x $.
By Axiom \ref{ax:Prg3}, the latter implies
$\langle \pi \cap \pi'  \rangle x$.
Moreover,
by Axiom \ref{ax:Int} and Axiom \ref{ax:Most},
$\langle \pi  \rangle (x \wedge \varphi )$
implies
$[ \equiv_\emptyset  ] (x \rightarrow \varphi )$.
By Axiom \ref{ax:Int},
the latter implies
$[ \pi \cap \pi'  ] (x \rightarrow \varphi )$.
By Axiom \ref{ax:Kequiv},
$[ \pi \cap \pi'  ] (x \rightarrow \varphi )$
and
$\langle \pi \cap \pi'  \rangle x$
together imply
$\langle \pi \cap \pi'  \rangle ( x \wedge \varphi)$.
Thus,
$\langle \pi  \rangle (x \wedge \varphi )
\wedge
\langle   \pi' \rangle (x \wedge \varphi )$
implies $\langle \pi \cap \pi'  \rangle ( x \wedge \varphi)$.

From previous item (iii) and the previous theorem
it follows that
there exists $x \in \mathit{Nom}$
such that,
for all $\varphi$,
if $\varphi \in v  $
then $\langle \pi \cap \pi'  \rangle ( x \wedge \varphi)$.
The latter implies that,
for all $\varphi$,
if $\varphi \in v  $
then $\langle \pi \cap \pi'  \rangle \varphi$.
The latter implies that
 $ (w,v) \in R_{ \pi \cap \pi' }^c$.
\end{proof}

Let us conclude the proof by supposing $\not \vdash  \neg \varphi $.
Therefore, by Lemma \ref{linden} and the fact that 
the set of $\mathrm{DLCA}$-theorems
is a consistent theory, there exists a MCT $w$ such that $\neg \varphi \not \in w$.
Thus, 
by  Proposition \ref{beflinde}, 
we can find a MCT $w$ such that $\varphi  \in w$.
By Lemma \ref{truthlemma}, the latter implies
$M^c, w \models \varphi$
for some $w \in W^c$. 
Since, by Lemma \ref{canonicity}, $M^c$
is a quasi-MCM,
it follows that $\varphi$
is satisfiable relative to the class of quasi-MCMs.
Therefore,
by Proposition \ref{equivsemantics}, $\varphi$
is satisfiable relative to the class
of MCMs.

We can finally state the main result of this section.
 \begin{theorem}\label{firstcomplete}
The logic
$\mathrm{DLCA}$
is sound and complete for the class of
multi-agent cognitive models.
\end{theorem}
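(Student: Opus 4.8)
The plan is to split the argument into soundness and completeness, since essentially all the machinery for the latter has been assembled above. For soundness I would check, one axiom schema at a time, that each is valid on every MCM, and that modus ponens, $\mathrm{Nec}_\pi$ and $\mathrm{Cov}$ preserve validity. This is routine PDL-style bookkeeping: $\mathrm{K}_\pi$ and $\mathrm{Nec}_\pi$ hold because every $[\pi]$ is a normal box interpreted by $R_\pi$; the reduction axioms $\mathrm{Red}_{;}$, $\mathrm{Red}_{\cup}$, $\mathrm{Red}_{?}$ and the converse axioms $\mathrm{Conv1}_{-}$, $\mathrm{Conv2}_{-}$ simply mirror the inductive clauses of Definition \ref{truthcond}; the S4 axioms for $\preceq_{i,\tau}$ and the S5 axioms for $\equiv_i$ match the preorder/equivalence conditions, while $\mathrm{Inc}_{\preceq_{i,\tau},\equiv_i}$ and $\mathrm{Conn}_{\preceq_{i,\tau},\equiv_i}$ correspond to Constraints C1 and C2; and $\mathrm{Add1}_\cap$, $\mathrm{Add2}_\cap$, $\mathrm{Comp1}_\sim$, $\mathrm{Comp2}_\sim$, $\mathrm{Most}_x$ encode, respectively, the semantic clauses for $\cap$, for the complement programs $\preceq_{i,\tau}^\sim$, and the singleton behaviour of nominals (C3, C4). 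The only non-routine point is the admissibility of $\mathrm{Cov}$, which relies on $\mathit{Nom}$ being infinite, so that in any MCM some nominal is false throughout the relevant $R_\pi$-image — exactly as remarked just after Definition \ref{axiomatics}.

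For completeness I would run the canonical-model construction already set up above. Assume $\not\vdash \neg\varphi$; the set of $\mathrm{DLCA}$-theorems is a consistent theory, so Lemma \ref{linden} furnishes an MCT $w$ with $\varphi \in w$; the Truth Lemma \ref{truthlemma} gives $M^c, w \models \varphi$; Lemma \ref{canonicity} says $M^c$ is a quasi-MCM, so $\varphi$ is satisfiable on the class of quasi-MCMs; and Proposition \ref{equivsemantics} transfers this to satisfiability on the class of MCMs. Contraposing yields that $\models_{\mathit{MCM}} \varphi$ implies $\vdash \varphi$, which together with soundness gives the theorem.

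The real content — and the step I expect to be the main obstacle — is everything feeding into Lemma \ref{canonicity} and the modal case of the Truth Lemma, namely showing that the syntactically defined canonical relations $R_\pi^c$ (with $w R_\pi^c v$ iff $[\pi]\psi \in w$ implies $\psi \in v$) actually obey the inductive clauses of Definition \ref{truthcond}. For $;$, $\cup$, $?$ and $-$ this is the familiar PDL/tense-logic reasoning, but for intersection and for the complement programs $\preceq_{i,\tau}^\sim$ the naive canonical relation is too coarse, and one must exploit nominals in the style of \cite{PassyTinchev,Goranko93}. The pivotal lemma is \ref{propcanonical}: in $M^c$, any two MCTs sharing a nominal and related by some $R_\pi^c$ coincide; this is what lets nominals act as singletons and allows a witnessed formula $x \wedge \varphi$ to be pulled back through $\pi \cap \pi'$ via axioms $\mathrm{Most}_x$, $\mathrm{Add2}_\cap$ and $\mathrm{Comp2}_\sim$. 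Because this only secures Constraint C4$^*$ rather than the full C4 in the canonical model, the detour through quasi-MCMs, justified by the generated-submodel argument behind Proposition \ref{equivsemantics}, is essential — and getting that interplay of nominals, the $\cap$/complement axioms and the rule $\mathrm{Cov}$ exactly right is where the difficulty lies.
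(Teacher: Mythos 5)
Your proposal is correct and takes essentially the same route as the paper: soundness as a routine validity check (with the admissibility of \ref{ax:Name} resting on the supply of nominals), and completeness by assembling exactly the machinery the paper develops --- Lindenbaum (Lemma \ref{linden}), the existence and truth lemmas, Lemma \ref{propcanonical} together with Axioms \ref{ax:Most}, \ref{ax:Prg3} and \ref{ax:Prg7} to handle intersection and complement, Lemma \ref{canonicity} showing the canonical model is a quasi-MCM, and Proposition \ref{equivsemantics} transferring satisfiability back to MCMs. You also correctly single out the quasi-MCM detour (Constraint C4$^*$ plus the generated-submodel argument) as the essential device, which is precisely how the paper concludes.
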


\section{Application to Game Theory}\label{sec:GTapplication}

In this section, 
we apply our logical framework
to the analysis of 
single-stage games under  incomplete information in which 
 agents
 only play once (i.e., interaction is non-repeated)
and
may not know some
relevant characteristic of others
including their preferences, choices and beliefs.

Let $ \mathit{Act}$
be a set of action names with elements noted $a, b , \ldots$
Let
a joint action 
be a function $\delta : \mathit{Agt} \longrightarrow \mathit{Act}$
and
the set of joint actions
be denoted by $ \mathit{JAct}$.

For every coalition $C \in  2^\mathit{Agt} $
and for every $\delta \in  \mathit{JAct} $, let
$\delta_C$
be the $C$-restriction of $\delta$,
that is,
the function 
$\delta_C : C \longrightarrow \mathit{Act}$
such that $\delta_C(i)=\delta(i)$
for all $i \in C$.
For notational convenience,
we write $-i$
instead of $\mathit{Agt} \setminus\{i\}$,
with $i \in \mathit{Agt} $.

In order to model strategic interaction
in our setting, we extend MCMs of Definition \ref{DefSemanticsModel} by agents' choices.
We call MCM with choices the resulting models.
\begin{definition}[Multi-agent cognitive model with choices]\label{DefSemanticsModel2}
A multi-agent cognitive model  with choices (MCMC) is a tuple
$M = (  W,( \preceq_{i,P})_{i \in \mathit{Agt} }, (\preceq_{i,D})_{i \in \mathit{Agt} }  ,\\
(\equiv_i)_{i \in \mathit{Agt} }  , ( \mathit{C}_i) _{i \in \mathit{Agt} }, V  )$,
where
$M = (  W,( \preceq_{i,P})_{i \in \mathit{Agt} }, (\preceq_{i,D})_{i \in \mathit{Agt} }  ,
(\equiv_i)_{i \in \mathit{Agt} }  ,  V  )$
is a MCM and every $C_i$
is a choice function  $\mathit{C}_i : W  \longrightarrow  \mathit{Act}$,
which satisfies the following constraint, for each $i \in \mathit{Agt}$ and $\delta \in  \mathit{JAct}$:
\begin{description}
\item[(C5)] if $\forall j \in \mathit{Agt}$, 
$\exists w_j \in W$
such that
$w \equiv_i w_j$  and
$\mathit{C}_j (w_j) = \delta(j) $, then
$\exists v \in W$ such that $w \equiv_i v $
and, $\forall j \in \mathit{Agt}$, $\mathit{C}_{j} (v) = \delta(j)$.

\end{description}

\end{definition}

For every $w \in W$,
$\mathit{C}_i(w)$
denotes agent $i$'s actual choice at $w$. 
If $w \equiv_i v$
and $\mathit{C}_j(v)=a$, then $a$
is a potential choice of agent $j$
from agent $i$'s perspective. 

According to Constraint C5, agents' choices are subjectively independent, in the sense
that
 every agent $i$
 knows that an agent cannot be deprived
 of her choices due to the choices made by the others.
In other words, suppose that, from
 agent $i$'s perspective,
 $ \delta(j)$
 is a potential choice of $j$
 for every agent $j$. 
  Then,
 from agent $i$'s perspective, there should be a state
 at which the agents choose the joint action $\delta$.
It is a subjective version of the property of choice independence
formulated in the  ``seeing to it that'' ($\mathrm{STIT}$)
framework
 \cite{belnap01facing,LoriniJANCL,BalbianiHerzigTroquard2008}.

At the syntactic level,
we extend the language $\mathcal{L}_{\mathrm{DLCA}}$ by
special
constants for choices of type $ \mathsf{play}(i,a)$,
with $i \in \mathit{Agt}$
and $a \in \mathit{Act}$,
denoting the fact that 
 ``agent $i$ plays (or chooses) action $a$''.
 The resulting language is noted $\mathcal{L}_{\mathrm{DLCAG}}$,
 where $\mathrm{DLCAG}$
 stands for  ``Dynamic Logic of Cognitive Attitudes in Games''
 and a constant 
$ \mathsf{play}(i,a)$ is interpreted relative to a MCMC $M$
and a world $w$ in $M$, as follows:
\begin{eqnarray*}
M, w \models  \mathsf{play}(i,a) & \Longleftrightarrow & \mathit{C}_i (w) = a .
\end{eqnarray*}

Let $\delta \in \mathit{JAct}$
and  $C \in  2^\mathit{Agt} $.
We define:
\begin{align*}
\mathsf{play}(\delta_{C})& =_{\mathit{def}} \bigwedge_{i \in C} \mathsf{play}\big(i, \delta_{C}(i) \big).
\end{align*}

For every formula $\varphi$
in $\mathcal{L}_{\mathrm{DLCAG} }$
we say
that $\varphi$
is valid, noted $\models_{\mathit{MCMC}} \varphi$,
if and only if
for every multi-agent cognitive model
with choices
$M$
and world $w$
in $M$,
we have $M,w \models \varphi$.


\begin{definition}[Logic $\mathrm{DLCAG}$]\label{axiomatics2}
We define  $\mathrm{DLCAG}$
to be the extension of logic 
$\mathrm{DLCA}$
given by the following
 axioms:
\begin{align}
& \mathsf{play}(i,a) \rightarrow \neg \mathsf{play}(i,b) \text{ if } a \neq b
 \tagLabel{MostAct}{ax:most}\\
& \bigvee_{a \in  \mathit{Act}}\mathsf{play}(i,a) 
 \tagLabel{LeastAct}{ax:least}\\
 & \Big( \bigwedge_{j \in \mathit{Agt}} \langle \equiv_i \rangle  \mathsf{play}\big(j,\delta(j) \big) \Big)
 \rightarrow \langle \equiv_i \rangle  \mathsf{play}\big(\delta_{\mathit{Agt}} \big)
 \tagLabel{SIC}{ax:indep}
\end{align}
\end{definition}
Axiom \ref{ax:most}
means that an agent  chooses at most one action from $\mathit{Act}$ while,
according to Axiom \ref{ax:least}, 
an agent  chooses at least one action
from $\mathit{Act}$.
Axiom \ref{ax:indep}
is the syntactic counterpart of 
\emph{subjective}
choice independence 
expressed by Constraint C5.


We can adapt
the techniques 
used for proving 
Theorem
\ref{firstcomplete}
in order
to prove the following
Theorem
\ref{firstcomplete2}.
 \begin{theorem}\label{firstcomplete2}
The logic
$\mathrm{DLCAG}$
is sound and complete for the class of
multi-agent cognitive models
with choices.
\end{theorem}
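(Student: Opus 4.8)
\medskip\noindent
The plan is to adapt the canonical-model argument behind Theorem~\ref{firstcomplete}. Soundness is once more a routine check: Axiom~\ref{ax:most} and Axiom~\ref{ax:least} are valid over MCMCs because each $\mathit{C}_i$ is a single-valued, total function $W \longrightarrow \mathit{Act}$ (here we use that $\mathit{Act}$ is finite, so that $\bigvee_{a \in \mathit{Act}}\mathsf{play}(i,a)$ is a well-formed formula), and Axiom~\ref{ax:indep} is valid precisely because of Constraint~C5 of Definition~\ref{DefSemanticsModel2}. All the effort goes into completeness.

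First I would define the canonical model $M^c$ for $\mathrm{DLCAG}$ exactly as in Definition~\ref{defCanModel}, equipped in addition with choice functions: for every $i \in \mathit{Agt}$ and every MCT $w \in W^c$, set $\mathit{C}_i^c(w) = a$ iff $\mathsf{play}(i,a) \in w$. This is well defined: Axiom~\ref{ax:least} is a theorem and hence belongs to $w$, so by the disjunction property of MCTs (Proposition~\ref{beflinde}, iterated over the finite set $\mathit{Act}$) at least one $\mathsf{play}(i,a)$ is in $w$, while Axiom~\ref{ax:most} forbids two distinct such $a$. Lindenbaum (Lemma~\ref{linden}), the properties of MCTs (Proposition~\ref{beflinde}, Lemma~\ref{containsone}), the canonical-relation machinery, and the existence Lemma~\ref{existencelemma} all carry over verbatim, since the new axioms do not mention programs. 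The truth lemma (Lemma~\ref{truthlemma}) is re-proved by the same induction with one new base case: $M^c,w \models \mathsf{play}(i,a)$ iff $\mathit{C}_i^c(w) = a$ iff $\mathsf{play}(i,a) \in w$, which is the definition of $\mathit{C}_i^c$.

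Next I would show that $M^c$, with these choice functions, is a \emph{quasi-MCMC}, that is, a structure as in Definition~\ref{DefSemanticsModel2} but with Constraint~C4 weakened to C4$^*$ exactly as in Definition~\ref{sec:quasiMCMdef}. Constraints C1, C2, C3 and C4$^*$, as well as the six relational identities for $R_{\preceq_{i,\tau}^\sim}^c$, $R_{\pi;\pi'}^c$, $R_{\pi \cup \pi'}^c$, $R_{\pi \cap \pi'}^c$, $R_{-\pi}^c$ and $R_{\varphi?}^c$, hold for exactly the reasons given in Lemma~\ref{canonicity}, since the relations and the valuation are untouched. The genuinely new point is Constraint~C5. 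Assume that for every $j \in \mathit{Agt}$ there is $w_j \in W^c$ with $w \equiv_i^c w_j$ and $\mathit{C}_j^c(w_j) = \delta(j)$, i.e.\ $\mathsf{play}(j,\delta(j)) \in w_j$. From $w \equiv_i^c w_j$ and the maximality of $w$ we get $\langle \equiv_i \rangle \mathsf{play}(j,\delta(j)) \in w$ for each $j$, so by Axiom~\ref{ax:indep} and closure of $w$ under $\mathrm{DLCAG}$-consequence, $\langle \equiv_i \rangle \mathsf{play}(\delta_{\mathit{Agt}}) \in w$, i.e.\ $\langle \equiv_i \rangle \bigwedge_{j \in \mathit{Agt}} \mathsf{play}(j,\delta(j)) \in w$. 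Applying the existence Lemma~\ref{existencelemma} with the atomic program $\equiv_i$ yields $v \in W^c$ with $w \equiv_i^c v$ and $\mathsf{play}(j,\delta(j)) \in v$ for all $j$, hence $\mathit{C}_j^c(v) = \delta(j)$ for all $j$, which is exactly C5.

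Finally the proof closes as for $\mathrm{DLCA}$: if $\not\vdash_{\mathrm{DLCAG}} \neg\varphi$, then by Lemma~\ref{linden} there is an MCT $w$ with $\varphi \in w$, so $M^c,w \models \varphi$ by the truth lemma; $M^c$ is a quasi-MCMC, and the analogue of Proposition~\ref{equivsemantics} (same generated-submodel argument, noting that choice functions are inherited by generated submodels and that C5 is preserved, since a generated submodel is closed under each $\equiv_i$) upgrades this to satisfiability over genuine MCMCs. I expect the only real obstacle to be the verification of C5 in the canonical model, the one place where Axiom~\ref{ax:indep} has to be combined with the canonical-diamond property of $\equiv_i^c$ and with the existence lemma; everything else is bookkeeping layered on top of the proof of Theorem~\ref{firstcomplete}.
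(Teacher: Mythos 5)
Your proposal is correct and follows essentially the same route as the paper: soundness by inspection, then a canonical quasi-MCMC built from MCTs with $\mathit{C}_i^c$ read off from the $\mathsf{play}(i,a)$ constants (single-valued and total by Axioms \ref{ax:most} and \ref{ax:least}), Constraint C5 verified via Axiom \ref{ax:indep} plus the existence lemma for $\equiv_i$, and the final transfer from quasi-MCMCs to MCMCs by the generated-submodel argument. The only cosmetic difference is that the paper first defines $\mathit{C}_i^c$ as set-valued and then proves it a singleton, whereas you define it as a function outright; the substance is identical.
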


\begin{proof}
Verifying that the logic 
$\mathrm{DLCAG}$
is sound for the class of
MCMCs is a routine exercise.
As for completeness,
the proof is just a straightforward adaptation of the proof
of completeness of the logic $\mathrm{DLCA}$.
First,
we need to define corresponding notions of theory
and maximal consistent theory for the logic
$\mathrm{DLCAG}$
which are akin to the ones for the logic
$\mathrm{DLCA}$
and use them
to define the canonical model
and the canonical relation 
for $\mathrm{DLCAG}$.
The canonical model
for $\mathrm{DLCAG}$
is defined to be a tuple 
$M^c = (  W^c,( \preceq_{i,P}^c)_{i \in \mathit{Agt} }, (\preceq_{i,D}^c)_{i \in \mathit{Agt} }  ,
(\equiv_i^c)_{i \in \mathit{Agt} }  , ( \mathit{C}_i^c) _{i \in \mathit{Agt} } ,V^c  )$
where 
$W^c$ is the set of all MCTs
for $\mathrm{DLCAG}$, 
$\preceq_{i,P}^c$,
$\preceq_{i,D}^c$,
$\equiv_i^c$
and $V^c$
are defined as in the
definition
of the canonical
model for  
 $\mathrm{DLCA}$ (Definition \ref{defCanModel}),
 and
 $\mathit{C}_i^c : W^c \longrightarrow 2^{ \mathit{Act}}$
 such that, for all $a \in \mathit{Act}$ and $w \in W^c$,
 $a \in \mathit{C}_i^c(w)$
 if and only if $ \mathsf{play}(i,a) \in w$.
 The canonical
 relation 
 for $\mathrm{DLCAG}$
 is defined in the same way
 as the canonical relation for $\mathrm{DLCA}$
 (Definition \ref{canonicrel}).

It is immediate to adapt
the proof of the
existence and truth lemma
for $\mathrm{DLCA}$
(Lemma \ref{existencelemma} and Lemma \ref{truthlemma})
to prove corresponding existence and truth lemma
for  $\mathrm{DLCAG}$.

Secondly, we need to define the notion
of quasi multi-agent cognitive model with choices (quasi-MCMC) 
which is analogous to the definition of quasi-MCM (Definition \ref{sec:quasiMCMdef}).
In particular, a quasi-MCMC is defined to be a tuple 
$M = (  W,( \preceq_{i,P})_{i \in \mathit{Agt} }, (\preceq_{i,D})_{i \in \mathit{Agt} }  ,
(\equiv_i)_{i \in \mathit{Agt} }  , ( \mathit{C}_i) _{i \in \mathit{Agt} }, V  )$
where $W$, $\preceq_{i,P}$, $\preceq_{i,D}$, $\equiv_i$, $ \mathit{C}_i$ and $V$
are as in Definition \ref{DefSemanticsModel2}
except that Constraint C4 is replaced by the weaker Constraint
C4$^*$ of Definition \ref{sec:quasiMCMdef}. 
As for MCMs, by the generated submodel property, it is easy to show that the semantics in terms of MCMCs
and the semantics in terms of quasi-MCMCs are equivalent with respect to the language $\mathcal{L}_{\mathrm{DLCAG}}$.

The only property that has to be checked carefully is whether the canonical
model for   $\mathrm{DLCAG}$ is indeed a quasi-MCMC.
To this aim,
we need to extend the proof of Lemma  \ref{canonicity}
in order 
to verify that the canonical
model
for $\mathrm{DLCAG}$
satisfies Constraint C5 of Definition \ref{DefSemanticsModel2}
and that, for all $i \in \mathit{Act}$ and for all $w \in W^c$,
$\mathit{C}_i^c(w)$ is a singleton. 
Suppose $a,b \in \mathit{C}_i^c(w)$
for $a \neq b$. The latter means that $\mathsf{play}(i,a), \mathsf{play}(i,b)\in w$.
We have $\mathsf{play}(i,a) \rightarrow \neg \mathsf{play}(i,b) \in w$,
because of Axiom \ref{ax:most}. Thus, $\neg \mathsf{play}(i,b) \in w$.
Hence, $\bot \in w$ which contradicts the fact that $w$ is a consistent theory. 
Consequently, the set $ \mathit{C}_i^c(w)$
has at most one element. Now, let us prove that 
$\mathit{C}_i^c(w)$
has at least one element. 
Because of Axiom \ref{ax:least}, we have $\bigvee_{a \in  \mathit{Act}}\mathsf{play}(i,a)  \in w$.
Thus, there exists $a \in  \mathit{Act}$ such that   $\mathsf{play}(i,a)  \in w$.
Hence, $\mathit{C}_i^c(w)$
is non-empty.
Now, let us prove that the canonical
model for   $\mathrm{DLCAG}$
satisfies Constraint C5.
Suppose
$\forall j \in \mathit{Agt}$, 
$\exists w_j \in W^c$
such that
$w \equiv_i^c w_j$  and
$\mathit{C}_j^c (w_j) = \delta(j) $.
The latter means that 
$\forall j \in \mathit{Agt}$, 
$\exists w_j \in W^c$
such that
$w \equiv_i^c w_j$  and
$ \mathsf{play}\big(j,\delta(j) \big) \in w_j$.
Thus,
we have that, $\forall j \in \mathit{Agt}$, 
$ \langle \equiv_i \rangle  \mathsf{play}\big(j,\delta(j) \big)  \in w$.
Hence, $\bigwedge_{j \in \mathit{Agt}}\langle \equiv_i \rangle  \mathsf{play}\big(j,\delta(j) \big) \in w $.
By Axiom \ref{ax:indep},
$\Big( \bigwedge_{j \in \mathit{Agt}} \langle \equiv_i \rangle  \mathsf{play}\big(j,\delta(j) \big) \Big)
 \rightarrow \langle \equiv_i \rangle  \mathsf{play}\big(\delta_{\mathit{Agt}} \big) \in w$.
 Consequently, $ \langle \equiv_i \rangle  \mathsf{play}\big(\delta_{\mathit{Agt}} \big) \in w$.
 By the existence lemma for  $\mathrm{DLCAG}$,
 the latter implies that 
 $\exists v \in W^c$ such that $w \equiv_i^c v $
 and $ \mathsf{play}\big(\delta_{\mathit{Agt}} \big) \in v$.
 Thus,
  $\exists v \in W^c$ such that $w \equiv_i^c v $
 and  $\mathit{C}_{j} (v) = \delta(j)$
 for every $j \in \mathit{Agt}$.
\end{proof}

With the support
of the language
$\mathcal{L}_{\mathrm{DLCAG}}$,
we can define a variety of notions from
the theory of games under incomplete information.
The first notion
we consider is best response,
both from
the perspective of an optimistic
agent
and from
the perspective of a pessimistic
one:
\begin{align*}
 \mathsf{BR}_i^{\mathit{Opt}}(a,\delta_{-i} ) & =_{\mathit{def}} \bigwedge_{ b \in \mathit{Act}}
\mathsf{RP}_i^{\mathit{Opt}} \Big(  \big( \mathsf{play}(i,b) \wedge  \mathsf{play}(\delta_{-i}) \big) \preceq \big( \mathsf{play}(i,a) \wedge  \mathsf{play}(\delta_{-i}) \big) \Big),\\
 \mathsf{BR}_i^{\mathit{Pess}}(a,\delta_{-i} ) & =_{\mathit{def}} 
 \bigwedge_{ b \in \mathit{Act}}
\mathsf{RP}_i^{\mathit{Pess}} \Big(  \big( \mathsf{play}(i,b) \wedge  \mathsf{play}(\delta_{-i}) \big) \preceq \big( \mathsf{play}(i,a) \wedge  \mathsf{play}(\delta_{-i}) \big) \Big).
\end{align*}
We say that playing action $a $
is for agent $i$ an optimistic (resp. pessimistic) best response to the others' joint action $\delta_{-i}$, noted $ \mathsf{BR}_i^{\mathit{Opt}}(a,\delta_{-i} ) $
(resp. $ \mathsf{BR}_i^{\mathit{Pess}}(a,\delta_{-i} ) $) if and only if for every action $b$, according to agent $i$'s optimistic
(resp. pessimistic) assessment, playing
$a$ while the others play $\delta_{-i}$ is realistically at least as good as playing
$b$ while the others play $\delta_{-i}$.

As for best response, we
can define two types of \emph{subjective}
Nash equilibrium,
one for optimistic agents
and the other for pessimistic ones.
Our notion
of subjective Nash equilibrium 
corresponds to a \emph{qualitative}
variant of the notion of Bayesian Nash equilibrium (BNE): a similar qualitative variant of BNE is studied by \cite{AmorFargier}
in the context of possibility theory.
The joint action
$\delta$
is said to be a subjective optimistic (resp. pessimistic) Nash equilibrium,
noted $\mathsf{NE}^{\mathit{Opt}}(\delta)$
(resp. $\mathsf{NE}^{\mathit{Pess}}(\delta)$), 
if 
 no  agent $i$ wants to unilaterally deviate from the chosen strategy $\delta (i)$,
 under that the assumption that $i$ is optimistic (resp. pessimistic):
\begin{align*}
\mathsf{NE}^{\mathit{Opt}}(\delta)  & =_{\mathit{def}} \bigwedge_{ i \in \mathit{Agt}}     \mathsf{BR}_i^{\mathit{Opt}}\big(\delta(i),\delta_{-i} \big) ,\\
\mathsf{NE}^{\mathit{Pess}}(\delta)  & =_{\mathit{def}} \bigwedge_{ i \in \mathit{Agt}}     \mathsf{BR}_i^{\mathit{Pess}}\big(\delta(i),\delta_{-i} \big).
\end{align*}

 Note that
assuming 
 the finiteness of
 the
 set of agents 
 $\mathit{Agt}$
 is essential
 for defining Nash equilibrium,
 since 
 our language is finitary
 and does not allow universal
 quantification
 over infinite sets.

Given the distinction
between optimistic 
and pessimistic agent,
two notions of rationality
can be defined.
Agent $i$
is said to
be optimistic (resp. pessimistic) rational, noted $\mathsf{Rat}_i^{\mathit{Opt}}$ (resp. $\mathsf{Rat}_i^{\mathit{Pess}}$), 
if she cannot choose an action that, according to her optimistic (resp. pessimistic) assessment,
is better not to choose than to choose:
\begin{align*}
\mathsf{Rat}_i^{\mathit{Opt}}  & =_{\mathit{def}} \bigwedge_{a \in \mathit{Act}}
\Big( \mathsf{play}(i,a)  \rightarrow
\mathsf{RP}_i^{\mathit{Opt}} \big( \neg \mathsf{play}(i,a)  \preceq 
\mathsf{play}(i,a) 
 \big) \Big) ,\\
\mathsf{Rat}_i^{\mathit{Pess}}  & =_{\mathit{def}} \bigwedge_{a \in \mathit{Act}}
\Big( \mathsf{play}(i,a)  \rightarrow
\mathsf{RP}_i^{\mathit{Pess}} \big( \neg \mathsf{play}(i,a)  \preceq 
\mathsf{play}(i,a) 
 \big) \Big).
\end{align*}

As the following proposition indicates,
the action chosen by
an optimistic (resp. pessimistic) rational
agent
is, 
according to the agent's optimistic (resp. pessimistic) assessment, at least as good 
as the other actions she may choose.

\begin{proposition}\label{propInterNash}
Let $i \in \mathit{Agt}$
and $x \in \{\mathit{Opt},\mathit{Pess} \}$.
Then, 
\begin{align}
\models_{\mathit{MCMC}} \big( \mathsf{Rat}_i^{x}  \wedge 
 \mathsf{play}(i,a) \big) \rightarrow
 \bigwedge_{ b \in \mathit{Act}}
 \mathsf{RP}_i^{x} \big(  \mathsf{play}(i,b)  \preceq 
\mathsf{play}(i,a) 
 \big)
\end{align}

\end{proposition}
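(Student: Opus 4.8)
The plan is to argue directly with the semantics, unfolding Definitions~\ref{RChoiceO} and~\ref{RChoiceP} and exploiting the fact that each choice map $\mathit{C}_i$ of Definition~\ref{DefSemanticsModel2} is a function (equivalently, Axiom~\ref{ax:most}). Fix an arbitrary MCMC $M$ and a world $w$ with $M,w \models \mathsf{Rat}_i^{x} \wedge \mathsf{play}(i,a)$, and fix $b \in \mathit{Act}$. First I would record two facts. \emph{(i)} If $b \neq a$, then every world $u$ with $M,u \models \mathsf{play}(i,b)$ satisfies $M,u \models \neg\mathsf{play}(i,a)$, so $\mathit{Best}_{i,P}(w) \cap ||\mathsf{play}(i,b)||_{i,w,M} \subseteq \mathit{Best}_{i,P}(w) \cap ||\neg\mathsf{play}(i,a)||_{i,w,M}$. \emph{(ii)} Since $M,w \models \mathsf{Rat}_i^{x}$ is a conjunction over $\mathit{Act}$ and $M,w \models \mathsf{play}(i,a)$, modus ponens on the conjunct corresponding to $a$ gives $M,w \models \mathsf{RP}_i^{x}\big(\neg\mathsf{play}(i,a) \preceq \mathsf{play}(i,a)\big)$.

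It remains to show $M,w \models \mathsf{RP}_i^{x}\big(\mathsf{play}(i,b) \preceq \mathsf{play}(i,a)\big)$, and I would split on whether $b = a$. If $b = a$, the claim is immediate from reflexivity of $\preceq_{i,D}$ (a preorder, by Definition~\ref{DefSemanticsModel}): in the truth condition of Definition~\ref{RChoiceO} (respectively Definition~\ref{RChoiceP}) each $v \in \mathit{Best}_{i,P}(w) \cap ||\mathsf{play}(i,a)||_{i,w,M}$ witnesses itself. Suppose $b \neq a$ and $x = \mathit{Opt}$. Unfolding Definition~\ref{RChoiceO}, I must show that for every $u \in \mathit{Best}_{i,P}(w) \cap ||\mathsf{play}(i,b)||_{i,w,M}$ there is $v \in \mathit{Best}_{i,P}(w) \cap ||\mathsf{play}(i,a)||_{i,w,M}$ with $u \preceq_{i,D} v$; if that set of $u$'s is empty the statement is vacuous, and otherwise fact~(i) places such a $u$ in $\mathit{Best}_{i,P}(w) \cap ||\neg\mathsf{play}(i,a)||_{i,w,M}$, so fact~(ii) together with Definition~\ref{RChoiceO} supplies exactly the required $v$. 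This disposes of the optimistic case.

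For $x = \mathit{Pess}$ I would run the same template against Definition~\ref{RChoiceP}, but here the compared formula $\mathsf{play}(i,b)$ sits on the existential side of the truth condition: one must show that for each best $\mathsf{play}(i,a)$-world $v$ there is a best $\mathsf{play}(i,b)$-world $u$ with $u \preceq_{i,D} v$, whereas fact~(ii) only hands over a best $\neg\mathsf{play}(i,a)$-world below $v$, which need not play $b$. Converting that witness into a $\mathsf{play}(i,b)$-world is the step I expect to be the real obstacle; it is not covered by fact~(i) as in the optimistic case, and I would try to settle it using the totality of $\preceq_{i,D}$ (Constraint~C2 in Definition~\ref{DefSemanticsModel}), everything else in the pessimistic argument being the same bookkeeping as above. (A syntactic alternative for the optimistic half would be to derive in $\mathrm{DLCAG}$ the monotonicity of $\mathsf{RP}_i^{\mathit{Opt}}(\,\cdot \preceq \varphi)$ under strengthening of its left argument and combine it with $\vdash \mathsf{play}(i,b) \to \neg\mathsf{play}(i,a)$ from Axiom~\ref{ax:most}; the semantic route above seems shorter.)
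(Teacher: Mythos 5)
Your optimistic half is correct and is in substance the paper's own proof of this proposition: from $\mathsf{Rat}_i^{\mathit{Opt}}\wedge\mathsf{play}(i,a)$ one extracts $\mathsf{RP}_i^{\mathit{Opt}}\big(\neg\mathsf{play}(i,a)\preceq\mathsf{play}(i,a)\big)$ and then uses the validity $\mathsf{play}(i,b)\rightarrow\neg\mathsf{play}(i,a)$ for $b\neq a$ to shrink the universally quantified set, with $b=a$ settled by reflexivity of $\preceq_{i,D}$ (a case the paper leaves implicit). The paper treats only $x=\mathit{Opt}$ explicitly and dismisses $x=\mathit{Pess}$ with ``can be proved in an analogous way''.

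The pessimistic half of your proposal, however, is a genuine gap, and the repair you envisage cannot close it. Totality of $\preceq_{i,D}$ (Constraint C2) does not convert the best $\neg\mathsf{play}(i,a)$-witness supplied by $\mathsf{Rat}_i^{\mathit{Pess}}$ into a best $\mathsf{play}(i,b)$-witness, and with the definitions as printed no argument will: take $\mathit{Act}=\{a,b,c\}$, $W=\{w_1,w_2\}$ with $w_1\equiv_i w_2$, distinct nominals at $w_1,w_2$, both worlds maximally plausible, $\mathit{C}_i(w_1)=a$, $\mathit{C}_i(w_2)=c$, every other agent choosing one fixed action at both worlds (so C5 holds), and $w_2\preceq_{i,D}w_1$. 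Then $M,w_1\models\mathsf{Rat}_i^{\mathit{Pess}}\wedge\mathsf{play}(i,a)$, since $w_2$ is a best $\neg\mathsf{play}(i,a)$-world with $w_2\preceq_{i,D}w_1$; yet $M,w_1\not\models\mathsf{RP}_i^{\mathit{Pess}}\big(\mathsf{play}(i,b)\preceq\mathsf{play}(i,a)\big)$, because $\mathit{Best}_{i,P}(w_1)\cap||\mathsf{play}(i,b)||_{i,w_1,M}$ is empty while $\mathit{Best}_{i,P}(w_1)\cap||\mathsf{play}(i,a)||_{i,w_1,M}$ is not. A three-world variant in which the unique $b$-world is strictly more desirable than $w_1$ shows the failure is not merely due to $b$ being unplayed. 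Conceptually, pessimistic rationality compares the worst case of $a$ with the worst case of the aggregate alternative $\neg\mathsf{play}(i,a)$, i.e.\ with the minimum over all $b\neq a$, and that cannot yield a comparison with each individual $b$; the optimistic case goes through precisely because there the formula being specialized sits under the universal quantifier, so the step is monotone. So your diagnosis of where the difficulty lies is exactly right --- the pessimistic case is not ``the same bookkeeping'', and the paper's appeal to analogy glosses over this --- but as it stands your attempt establishes the proposition only for $x=\mathit{Opt}$.
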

\begin{proof}
Let us prove the case $x=\mathit{Opt}$.
Let $M$
be a MCMC and let $w$
be a world in $M$.
Suppose 
$M,w \models \mathsf{Rat}_i^{\mathit{Opt}} $
and 
$M,w \models  \mathsf{play}(i,a) $.
Thus, $M,w \models  \mathsf{RP}_i^{\mathit{Opt}} \big( \neg \mathsf{play}(i,a)  \preceq 
\mathsf{play}(i,a) 
 \big)$.
The latter means that 
$
\forall u \in  \mathit{Best}_{i,P}(w) \cap || \neg \mathsf{play}(i,a)  ||_{i,w,M},
\exists v \in  \mathit{Best}_{i,P}(w)  \cap || \mathsf{play}(i,a)  ||_{i,w,M} : u \preceq_{i,D} v.
$
Since $\models_{\mathit{MCMC}}    \mathsf{play}(i,a) \rightarrow \neg \mathsf{play}(i,b) \text{ if } a \neq b$, the latter implies 
$\forall b \in \mathit{Act},
\forall u \in  \mathit{Best}_{i,P}(w) \cap || \mathsf{play}(i,b)  ||_{i,w,M},\\
\exists v \in  \mathit{Best}_{i,P}(w)  \cap || \mathsf{play}(i,a)  ||_{i,w,M} : u \preceq_{i,D} v.
$
Thus, $ \bigwedge_{ b \in \mathit{Act}}
 \mathsf{RP}_i^{\mathit{Opt}} \big(  \mathsf{play}(i,b)  \preceq 
\mathsf{play}(i,a) 
 \big)$.
 The case $x=\mathit{Pess}$ can be proved in an analogous way.
\end{proof}


The following proposition
elucidates the connection between
the notions of belief,
rationality and
Nash equilibrium:
if all agents are optimistic (resp. pessimistic)
rational and have a correct belief
about the others' actual choices, then 
the joint action they choose
is a subjective optimistic (resp. pessimistic) Nash equilibrium.\footnote{
A similar epistemic characterization of Nash equilibrium is provided by Aumann \& Brandenburger (A\&B) \cite{AumannBrand95}
in the context of games with complete information. See also \cite{SpohnGames}
for a similar result using a probabilistic approach.}
\begin{proposition}\label{propNashchar}
Let $x \in \{\mathit{opt},\mathit{pess} \}$
and $\delta \in \mathit{JAct}$. Then:
\begin{align}
\models_{\mathit{MCMC}}
\Big( \mathsf{play}(\delta) \wedge
\bigwedge_{i \in \mathit{Agt}}\big(   \mathsf{Rat}_i^{x}  \wedge \mathsf{B}_i  \mathsf{play}(\delta_{-i}) \big)  \Big) \rightarrow \mathsf{NE}_i^{x}(\delta)   
\end{align}
\end{proposition}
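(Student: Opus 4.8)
The plan is to fix an arbitrary MCMC $M$ and a world $w$ in $M$ satisfying the antecedent, and to show that $M,w \models \mathsf{BR}_i^{x}(\delta(i),\delta_{-i})$ for every $i \in \mathit{Agt}$; since $\mathsf{NE}^{x}(\delta) =_{\mathit{def}} \bigwedge_{i \in \mathit{Agt}} \mathsf{BR}_i^{x}(\delta(i),\delta_{-i})$, this suffices. Fix $i$. Because $M,w \models \mathsf{play}(\delta)$ we have in particular $M,w \models \mathsf{play}(i,\delta(i))$, and since also $M,w \models \mathsf{Rat}_i^{x}$, Proposition \ref{propInterNash} yields $M,w \models \mathsf{RP}_i^{x}\big(\mathsf{play}(i,b) \preceq \mathsf{play}(i,\delta(i))\big)$ for every $b \in \mathit{Act}$.

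The rest of the argument consists in inserting the conjunct $\mathsf{play}(\delta_{-i})$ on both sides of each such realistic-preference statement, which is where the hypothesis $\mathsf{B}_i\mathsf{play}(\delta_{-i})$ comes in. By Definition \ref{defBel}, $M,w \models \mathsf{B}_i\mathsf{play}(\delta_{-i})$ means $\mathit{Best}_{i,P}(w) \subseteq ||\mathsf{play}(\delta_{-i})||_M$; and since every element of $\mathit{Best}_{i,P}(w)$ already lies in agent $i$'s information set at $w$, one checks that for every formula $\chi$,
\[
\mathit{Best}_{i,P}(w) \cap ||\chi||_{i,w,M} \;=\; \mathit{Best}_{i,P}(w) \cap ||\chi \wedge \mathsf{play}(\delta_{-i})||_{i,w,M}.
\]
Now, inspecting Definitions \ref{RChoiceO} and \ref{RChoiceP}, the formulas compared by $\mathsf{RP}_i^{\mathit{Opt}}$ and by $\mathsf{RP}_i^{\mathit{Pess}}$ enter the truth condition at $w$ only through the sets $\mathit{Best}_{i,P}(w) \cap ||\cdot||_{i,w,M}$ (the two notions differing merely in the order of the $\forall/\exists$ quantifiers ranging over the same pair of sets). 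Hence, instantiating the displayed identity once with $\chi := \mathsf{play}(i,b)$ and once with $\chi := \mathsf{play}(i,\delta(i))$, the statement $\mathsf{RP}_i^{x}\big(\mathsf{play}(i,b) \preceq \mathsf{play}(i,\delta(i))\big)$ is equivalent at $w$ to $\mathsf{RP}_i^{x}\big((\mathsf{play}(i,b) \wedge \mathsf{play}(\delta_{-i})) \preceq (\mathsf{play}(i,\delta(i)) \wedge \mathsf{play}(\delta_{-i}))\big)$, uniformly for $x \in \{\mathit{Opt},\mathit{Pess}\}$. Taking the conjunction over all $b \in \mathit{Act}$ gives $M,w \models \mathsf{BR}_i^{x}(\delta(i),\delta_{-i})$, as required.

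I expect the only point needing care to be the verification of the displayed set identity, and in particular that it remains correct in the degenerate case $\mathit{Best}_{i,P}(w) = \emptyset$, where both realistic-preference statements hold vacuously and the equivalence is immediate. Beyond this bookkeeping the proof is routine: it reuses Proposition \ref{propInterNash} for the rationality hypotheses and Definition \ref{defBel} for the belief hypothesis, and nothing else. I do not anticipate a genuine obstacle; the main subtlety is simply being explicit that $||\cdot||_{i,w,M}$ already restricts attention to $i$'s information set at $w$, so that intersecting with $||\mathsf{play}(\delta_{-i})||_M$ removes no world of $\mathit{Best}_{i,P}(w)$.
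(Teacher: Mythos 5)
Your proof is correct and takes essentially the same route as the paper's: apply Proposition \ref{propInterNash} to the rationality and choice hypotheses to obtain $\mathsf{RP}_i^{x}\big(\mathsf{play}(i,b) \preceq \mathsf{play}(i,\delta(i))\big)$ for all $b$, and then use $\mathsf{B}_i\,\mathsf{play}(\delta_{-i})$ to pass to the best-response formula $\mathsf{BR}_i^{x}(\delta(i),\delta_{-i})$. Your displayed set identity simply makes explicit the step the paper compresses into ``from the latter, we can conclude'', so it is a faithful, slightly more detailed rendering of the same argument.
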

\begin{proof}
Let $M$
be a MCMC and let $w$
be a world in $M$.
Suppose 
$M,w \models \mathsf{play}\big(i,\delta (i) \big) $,
$M,w \models \mathsf{Rat}_i^{x} $
and  $M,w \models \mathsf{B}_i  \mathsf{play}(\delta_{-i})  $,
for all $i \in \mathit{Agt}$.
By Proposition \ref{propInterNash}, 
it follows that
$ \bigwedge_{ b \in \mathit{Act}}
 \mathsf{RP}_i^{x} \Big(  \mathsf{play}(i,b)  \preceq 
\mathsf{play}\big(i,\delta (i) \big)  \Big)$
and  $M,w \models \mathsf{B}_i  \mathsf{play}(\delta_{-i})  $,
for all $i \in \mathit{Agt}$.
From the latter,
we can conclude that $ M,w \models \mathsf{BR}_i^{x}\big(\delta (i),\delta_{-i} \big) $,
for all $i \in \mathit{Agt}$.
Thus, $M, w \models \mathsf{NE}_i^{x}(\delta)  $.
\end{proof}


We conclude this section 
by illustrating
the game-theoretic concepts
involved
in the crossroad game
described in Section \ref{example}.
It is a game under incomplete information
since an agent does not necessarily
know the other agent's beliefs and desires.
It is single-stage since that 
interaction is non-repeated and agents are supposed
to choose simultaneously.

\smallskip

\begin{example}
Let us suppose that 
the set of actions
that 
agents $1$
and $2$
can choose
is $\mathit{Act}= \{C,S\}$,
where $C$
is the action ``to continue''
and $S$
is the action ``to stop''.
The following hypotheses
capture 
the agents' knowledge and beliefs
about actions and their effects:
\begin{align*}
\varphi_4=_{\mathit{def}} &\bigwedge_{i \in \{1,2 \}}
[\equiv_i ]  \Big(  
\big( ( \mathsf{play}(1,C) \wedge 
\mathsf{play}(2,C) ) \rightarrow\mathit{co}   \big) \wedge \\
& 
\big( ( \mathsf{play}(1,C) \wedge 
\mathsf{play}(2,S) ) \rightarrow
(\neg \mathit{lo}_1 \wedge \mathit{lo}_2 )   \big)
\wedge \\
& 
\big( ( \mathsf{play}(1,S) \wedge 
\mathsf{play}(2,C) ) \rightarrow
( \mathit{lo}_1 \wedge \neg \mathit{lo}_2 )   \big)
\wedge \\
& 
\big( ( \mathsf{play}(1,S) \wedge 
\mathsf{play}(2,S) ) \rightarrow
( \mathit{lo}_1 \wedge  \mathit{lo}_2 
\wedge \neg \mathit{co}
)
\big) \Big),\\
\varphi_5=_{\mathit{def}} &
\Big(
\big( \widehat{\mathsf{B}}_1 \mathsf{play}(1 \mapsto C,
2 \mapsto C) \leftrightarrow
\widehat{\mathsf{B}}_1 \mathsf{play} (1 \mapsto S,
2 \mapsto C) \big)
\wedge\\
&\big( \widehat{\mathsf{B}}_1\mathsf{play} (1 \mapsto C,
2 \mapsto S) \leftrightarrow
\widehat{\mathsf{B}}_1 \mathsf{play}(1 \mapsto S,
2 \mapsto S) \big)
\wedge\\
&\big( \widehat{\mathsf{B}}_2\mathsf{play} (1 \mapsto C,
2 \mapsto C) \leftrightarrow
\widehat{\mathsf{B}}_2\mathsf{play} (1 \mapsto C,
2 \mapsto S) \big)
\wedge\\
&\big( \widehat{\mathsf{B}}_2 \mathsf{play}(1 \mapsto S,
2 \mapsto C) \leftrightarrow
\widehat{\mathsf{B}}_2 \mathsf{play}(1 \mapsto S,
2 \mapsto S) \big)
\Big),
\end{align*}
where $\widehat{\mathsf{B}}_i \varphi =_{\mathit{def}}
\neg \mathsf{B}_i \neg \varphi
$.
According to the hypothesis $\varphi_4$,
the agents know that
(i)
if they both 
continue,
they will collide,
(ii)
if one of them
continues
while the other 
stops,
then the first
will lose its time
while the second will not,
and
(iii) if they both stop,
each of them
will lose its time
but there will be no collision. 
According to the
hypothesis $\varphi_5$, 
the fact
that an agent considers possible 
that the other 
will decide to continue (resp. to stop)
does not depend on the agent's choice.
This hypothesis
is justified by the assumption that
an agent's beliefs
are \emph{ex ante}, i.e.,
relative to the instant 
before
an agent makes its choice.

As the following
validity indicates,
the previous hypotheses
$\varphi_4 $ and
$ \varphi_5 $
together with the hypotheses 
$\varphi_1$ and $ \varphi_3 $
stated
in Section \ref{example}
lead to the conclusion that
(i) 
an agent's action of continuing is
both an
optimistic
and a pessimistic
best response to the other agent's
action of stopping,
and 
an agent's action of stopping is 
both an
optimistic
and a pessimistic
best response to the other agent's
action of continuing.
For every $x \in \{\mathit{opt},\mathit{pess} \}$,
we have:
\begin{align}
\models_{\mathit{MCMC}} &  
(\varphi_1  \wedge \varphi_3
\wedge
\varphi_4 \wedge \varphi_5)
\rightarrow  \notag \\
&\big( 
\mathsf{BR}_1^x
( S, 2 \mapsto C)
\wedge  
\mathsf{BR}_1^x
( C, 2 \mapsto S) \wedge
\mathsf{BR}_2^x
( S, 1 \mapsto C) \wedge
\mathsf{BR}_2^x
( C, 1 \mapsto S )\big) 
\end{align}
\end{example}

\section{Dynamic Extension}\label{sec:dynext}

The logics we presented so far merely provide a static picture of the cognitive attitudes
and choices
of multiple agents in interactive situations. 
Following the tradition 
 of dynamic epistemic logic ($\mathrm{DEL}$) \cite{kooi2007}, 
in this section
we move from
a static to a dynamic perspective and extend
the language $\mathcal{L}_{\mathrm{DLCA} }$  by a variety 
of dynamic operators
for cognitive attitude change. 
We consider two types of cognitive attitude change, namely,
radical attitude and conservative attitude change.
Radical attitude change, both in its epistemic and in its motivational form, satisfies a strong form
of success postulate. Particularly,
if an agent forms the belief that $\varphi$, as a consequence of a radical belief  revision by $\varphi$,
then
she should also form
the strong belief that $\varphi$. Analogously, 
if an agent forms the desire that $\varphi$, as a consequence of a radical desire revision by $\varphi$,
then
she should also form
the strong desire that $\varphi$.
On the contrary, after a conservative belief (resp. desire) revision by $\varphi$ is performed,
an agent may form the belief (resp. desire) that $\varphi$
without forming the strong belief (resp. strong desire)  that $\varphi$.
While radical and conservative  belief revision have been studied before
in the  literature on $\mathrm{DEL}$ \cite{BenthemRevision,BaltagSmets2008}, we are the first to apply 
 $\mathrm{DEL}$ techniques to the analysis of desire revision 
 and to oppose  belief revision
 to desire revision
 in the  $\mathrm{DEL}$ setting.\footnote{Research in the $\mathrm{DEL}$
area has rather concentrated on preference change \cite{DBLP:journals/jancl/BenthemL07,Eijck2008}, leaving desire change unexplored.   }
 
 In the rest of this section,
 we first define the semantics
 of radical belief revision
 and desire revision operators (Section \ref{sec:radchange}).
 Then, we turn to conservative attitude change
 and define the semantics
 of conservative belief revision
 and desire revision operators (Section \ref{sec:conschange}).
 Finally, we provide an axiomatics
 for the dynamic extension of our logic $\mathrm{DLCA} $ (Section \ref{sec:axchange}).


\subsection{Radical Attitude Revision}\label{sec:radchange}

Radical attitude revision operators
are of the form $[\Uparrow_{i,\tau} \varphi ] $,
with $\tau \in \{P,D\}$.
They describe the consequences of a radical
revision operation. In particular, the formula
$[\Uparrow_{i,P} \varphi ]  \psi$
is meant to stand for ``$\psi$
holds,
after agent $i$
has radically revised her beliefs with $\varphi$'',
whereas 
$[\Uparrow_{i,D} \varphi ]  \psi$
is meant to stand for ``$\psi$
holds,
after agent $i$
has radically revised her desires with $\varphi$''.
We assume that radical revision operations are public, i.e.,
if an agent radically revises her beliefs (resp. desires) with $\varphi$,
then this is common knowledge among all agents. 
This assumption could be easily relaxed by using action models as introduced in \cite{BMS1998,BaltagMossDEL}, which would allow us to model private and semi-private 
attitude change operations.
Radical revision operators
are interpreted relative to a MCM 
$M = (  W,( \preceq_{i,P})_{i \in \mathit{Agt} }, (\preceq_{i,D})_{i \in \mathit{Agt} }  ,
(\equiv_i)_{i \in \mathit{Agt} }  ,  V  )$
 and a world $w $ in $W$, as follows:
\begin{eqnarray*}
M, w \models  [\Uparrow_{i,\tau} \varphi ] \psi & \Longleftrightarrow & M^{\Uparrow_{i,\tau} \varphi }, w \models \psi ,
\end{eqnarray*}
where
\begin{align*}
M^{\Uparrow_{i,P} \varphi } = (  W,( \preceq_{i,P}^{\Uparrow_{i,P}  \varphi })_{i \in \mathit{Agt} }, (\preceq_{i,D})_{i \in \mathit{Agt} }  ,
(\equiv_i)_{i \in \mathit{Agt} }  ,   V  ),
\end{align*}
\begin{align*}
M^{\Uparrow_{i,D} \varphi } = (  W,( \preceq_{i,P})_{i \in \mathit{Agt} }, (\preceq_{i,D}^{\Uparrow_{i,D}  \varphi })_{i \in \mathit{Agt} }  ,
(\equiv_i)_{i \in \mathit{Agt} }  ,   V  ),
\end{align*}
\begin{align*}
\preceq_{i,\tau}^{\Uparrow_{i,\tau}  \varphi }=  \Big\{ (w,v) \in W \times W : & \big( ( M,w \models \varphi \text{ iff }  M,v \models \varphi ) \text{ and } w\preceq_{i,\tau} v  \big)
\text{ or } \\
& ( M,w \models\neg \varphi \text{, }  M,v \models \varphi \text{ and } w \equiv_i v )   \Big \},
\end{align*}
and 
$\preceq_{j,\tau}^{\Uparrow_{i,\tau}  \varphi }=\preceq_{j,\tau} \text{ for all } j  \in \mathit{Agt} \text{ such that } i \neq j$.

Radical belief
and desire revision are 
completely
symmetric
from the point of view of
the plausibility
and desirability ordering.
Agent $i$'s
radical belief revision with $\varphi$
transforms agent $i$'s plausibility ordering
$\preceq_{i,P}$
into the new plausibility ordering
$\preceq_{i,P}^{\Uparrow_{i,P}  \varphi }$.
 In particular, it makes
all $\varphi$-worlds in $i$'s information set more plausible  than all $\neg \varphi$-worlds
and, within those two zones, it keeps the old plausibility ordering. 
Analogously, 
agent $i$'s
radical desire revision with $\varphi$
transforms agent $i$'s desirability ordering
$\preceq_{i,D}$
into the new desirability ordering
$\preceq_{i,D}^{\Uparrow_{i,D}  \varphi }$.
It
makes
all $\varphi$-worlds in $i$'s information set more desirable  than all $\neg \varphi$-worlds
and, within those two zones, it keeps the old desirability ordering.

As emphasized above, 
radical revision satisfies a strong form
of success principle which is formally expressed by the following two validities.
 Let  $\varphi\in \mathcal{L}_{\mathsf{PL}}(\mathit{Atm})$. Then,
\begin{align}
& \models_{\mathit{MCM}}   \langle \equiv_i \rangle \varphi \rightarrow [\Uparrow_{i,P} \varphi ] (  \mathsf{B}_i \varphi \wedge \mathsf{SB}_i \varphi) \\
&\models_{\mathit{MCM}}   \langle \equiv_i \rangle \neg \varphi \rightarrow [\Uparrow_{i,D} \varphi ] (  \mathsf{D}_i \varphi \wedge \mathsf{SD}_i \varphi)
\label{val1}
\end{align}
This means that (i) if $\varphi$
is compatible with an agent's knowledge then,
after she has radically revised her beliefs  with $\varphi$,
the agent will both believe  that $\varphi$
and strongly believe  that $\varphi$,
and
(ii) 
if $\neg \varphi$
is compatible with an agent's knowledge then,
after she has radically revised her  desires with $\varphi$,
the agent will both  desire that $\varphi$
and  strongly desire that $\varphi$.
 
The 
two 
 validities
 highlight  that 
 belief
 and desire behave in a slightly different way under
 radical revision, despite
the fact that
 the plausibility
and desirability ordering
are modified in the same way.

We  have 
 the following additional validities, for $\varphi \in \mathcal{L}_{\mathsf{PL}}(\mathit{Atm})$:
\begin{align}
& \models_{\mathit{MCM}}   [\Uparrow_{i,P} \varphi ] (  \mathsf{B}_i \varphi  \rightarrow \mathsf{SB}_i \varphi )  \\
& \models_{\mathit{MCM}}   [\Uparrow_{i,D} \varphi ] (  \mathsf{D}_i \varphi  \rightarrow \mathsf{SD}_i \varphi )  
\end{align}
This means that the formation of a belief (resp. desire) through
radical belief (resp. desire) revision
  necessarily entails
the formation of a strong belief (resp. strong desire)
with the same content.

\smallskip

\begin{example}
Let us go back to the
crossroad game
in order 
to illustrate the 
radical
desire revision mechanism.
Suppose
agent $1$
performs
a radical
desire revision
operation with 
$\neg \mathit{lo}_2$,
since it learns
that 
agent $2$
is an ambulance which 
has to
lose no time 
at the crossroad.
By the previous validity
(\ref{val1}),
we can prove that,
under the hypothesis $\varphi_2$ stated in Section \ref{example},
$1$
will both desire 
and strongly desire that $\neg \mathit{lo}_2$,
after the radical desire revision operation
with $\neg \mathit{lo}_2$:
\begin{align}
& \models_{\mathit{MCM}}  
 \varphi_2 \rightarrow
 [\Uparrow_{1,D} \neg \mathit{lo}_2 ] (  \mathsf{D}_1 \neg \mathit{lo}_2  \wedge \mathsf{SD}_1 \neg \mathit{lo}_2 ) 
\end{align}
Moreover,
under the set
of hypotheses $\{\varphi_1, \varphi_2, \varphi_3 \}$,
after the radical desire revision operation
with $\neg \mathit{lo}_2$,
$1$
will  not strongly
desire
anymore not to lose time,
but it will merely desire it:
\begin{align}
& \models_{\mathit{MCM}}  
 (\varphi_1 \wedge \varphi_2 \wedge
 \varphi_3) \rightarrow
 [\Uparrow_{1,D} \neg \mathit{lo}_2 ] (   \mathsf{D}_1 \neg \mathit{lo}_1  \wedge
 \neg \mathsf{SD}_1 \neg \mathit{lo}_1 ) 
\end{align}
\end{example}

As the following proposition indicates,
we have reduction axioms
which allow us to eliminate radical attitude revision operators
from a formula.
\begin{proposition}\label{propred1}
The following equivalences are  valid:
\begin{align*}
& [\Uparrow_{i,\tau} \varphi ] p \leftrightarrow p \\
& [\Uparrow_{i,\tau} \varphi ] x \leftrightarrow x \\
& [\Uparrow_{i,\tau} \varphi ] \neg \psi \leftrightarrow \neg[\Uparrow_{i,\tau} \varphi ] \psi  \\
& [\Uparrow_{i,\tau} \varphi ] ( \psi_1 \wedge \psi_2) \leftrightarrow ( [\Uparrow_{i,\tau} \varphi ]  \psi_1 \wedge 
 [\Uparrow_{i,\tau} \varphi ]  \psi_2 )  \\
 &[\Uparrow_{i,\tau} \varphi ]  [\pi]  \psi \leftrightarrow [F^{\Uparrow_{i,\tau} \varphi } (\pi)] [\Uparrow_{i,\tau} \varphi ] \psi \\
\end{align*}
where for all $j \in \mathit{Agt}$
and for all $\tau, \tau' \in \{P,D\}$:
\begin{align*}
&F^{\Uparrow_{i,\tau} \varphi } ( \equiv_j) =  \equiv_j \\
&F^{\Uparrow_{i,\tau} \varphi } ( \preceq_{i,\tau}) = (  \varphi? ;  \preceq_{i,\tau}; \varphi?  ) \cup
 (  \neg \varphi? ;  \preceq_{i,\tau}; \neg \varphi ? ) \cup  (  \neg \varphi? ;  \equiv_{i};  \varphi? )  \\
 &F^{\Uparrow_{i,\tau} \varphi } ( \preceq_{j,\tau'}) = \preceq_{j,\tau'} \text{ if }  i \neq j \text{ or } \tau \neq \tau'\\
&F^{\Uparrow_{i,\tau} \varphi } (\preceq_{i,\tau}^\sim) = (  \varphi? ;  \preceq_{i,\tau}^\sim; \varphi?  ) \cup
 (  \neg \varphi? ;  \preceq_{i,\tau}^\sim; \neg \varphi ?) \cup  (   \varphi? ;  \equiv_{i}; \neg \varphi?) \\
&F^{\Uparrow_{i,\tau} \varphi } (\preceq_{j,\tau'}^\sim) =\preceq_{j,\tau'}^\sim \text{ if }  i \neq j \text{ or } \tau \neq \tau'\\
&F^{\Uparrow_{i,\tau} \varphi } (  \pi { ; } \pi' ) = F^{\Uparrow_{i,\tau} \varphi } (  \pi  ){ ; } F^{\Uparrow_{i,\tau} \varphi } (   \pi' )  \\
&F^{\Uparrow_{i,\tau} \varphi } (  \pi \cup \pi' ) = F^{\Uparrow_{i,\tau} \varphi } (  \pi  )\cup F^{\Uparrow_{i,\tau} \varphi } (   \pi' )  \\
&F^{\Uparrow_{i,\tau} \varphi } (  \pi \cap \pi' ) = F^{\Uparrow_{i,\tau} \varphi } (  \pi  )\cap F^{\Uparrow_{i,\tau} \varphi } (   \pi' )  \\
&F^{\Uparrow_{i,\tau} \varphi } ( -\pi  ) =- F^{\Uparrow_{i,\tau} \varphi } ( \pi  ) \\
&F^{\Uparrow_{i,\tau} \varphi } (  \psi?  ) = [\Uparrow_{i,\tau} \varphi ]\psi ?
\end{align*}

\end{proposition}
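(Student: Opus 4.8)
The plan is to prove each of the five equivalences is valid by unfolding the semantic clause for the dynamic operator, which reduces truth of $[\Uparrow_{i,\tau}\varphi]\psi$ at $(M,w)$ to truth of $\psi$ at $(M^{\Uparrow_{i,\tau}\varphi},w)$. A preliminary observation is that $M^{\Uparrow_{i,\tau}\varphi}$ is again a genuine MCM, so the right-hand sides are interpreted in a well-defined model: reflexivity of $\preceq_{i,\tau}^{\Uparrow_{i,\tau}\varphi}$ and its containment in $\equiv_i$ are immediate from the definition together with Constraint C1; totality on $\equiv_i$-classes (Constraint C2) follows by splitting on whether the two worlds agree on $\varphi$ (if they agree, use totality of $\preceq_{i,\tau}$; if not, the $\neg\varphi$-world is put below the $\varphi$-world); transitivity is a short case analysis; and all the other components are left untouched.

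The two atomic cases $[\Uparrow_{i,\tau}\varphi]p\leftrightarrow p$ and $[\Uparrow_{i,\tau}\varphi]x\leftrightarrow x$ are immediate because radical revision does not alter the valuation $V$. The two Boolean cases are equally immediate: since $M^{\Uparrow_{i,\tau}\varphi}$ is a single, deterministically obtained model, $M,w\models[\Uparrow_{i,\tau}\varphi]\neg\psi$ iff $M^{\Uparrow_{i,\tau}\varphi},w\not\models\psi$ iff $M,w\not\models[\Uparrow_{i,\tau}\varphi]\psi$, and truth of $[\Uparrow_{i,\tau}\varphi](\psi_1\wedge\psi_2)$ distributes over the conjunction in the same way.

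The modal case is the heart of the argument, and rests on a relation-transfer lemma: for every MCM $M$ and every program $\pi\in\mathcal{P}$,
\[
R_\pi \text{ computed in } M^{\Uparrow_{i,\tau}\varphi} \;=\; R_{F^{\Uparrow_{i,\tau}\varphi}(\pi)} \text{ computed in } M.
\]
This is proved by induction on the structure of $\pi$. For $\equiv_j$ and for $\preceq_{j,\tau'},\preceq_{j,\tau'}^\sim$ with $j\neq i$ or $\tau'\neq\tau$ there is nothing to check, since neither the relation nor $F$ changes them. For $\preceq_{i,\tau}$ one unfolds the definition of $\preceq_{i,\tau}^{\Uparrow_{i,\tau}\varphi}$ and observes that its first disjunct (the worlds agree on $\varphi$ and $w\preceq_{i,\tau}v$) splits exactly into $\varphi?;\preceq_{i,\tau};\varphi?$ together with $\neg\varphi?;\preceq_{i,\tau};\neg\varphi?$, while its second disjunct is exactly $\neg\varphi?;\equiv_i;\varphi?$. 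The complement case $\preceq_{i,\tau}^\sim$ is the subtlest: in $M^{\Uparrow_{i,\tau}\varphi}$ this relation is $\equiv_i$ minus $\preceq_{i,\tau}^{\Uparrow_{i,\tau}\varphi}$, and a four-way case analysis on the $\varphi$-values of $w$ and $v$ shows it coincides with $R_{(\varphi?;\preceq_{i,\tau}^\sim;\varphi?)\cup(\neg\varphi?;\preceq_{i,\tau}^\sim;\neg\varphi?)\cup(\varphi?;\equiv_i;\neg\varphi?)}$ in $M$; note in particular that the pairs with $w\models\varphi$, $v\models\neg\varphi$ remain in the new complement (third disjunct), whereas those with $w\models\neg\varphi$, $v\models\varphi$ must not. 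The inductive steps for $;,\cup,\cap,{-}$ are routine because $R$ is defined compositionally and $F$ acts homomorphically; for the test $\psi?$ one uses that $R_{\psi?}$ in $M^{\Uparrow_{i,\tau}\varphi}$ relates $w$ to itself iff $M^{\Uparrow_{i,\tau}\varphi},w\models\psi$ iff $M,w\models[\Uparrow_{i,\tau}\varphi]\psi$, which is $R_{[\Uparrow_{i,\tau}\varphi]\psi?}$ in $M$ — and this last equivalence is simply the semantic clause for $[\Uparrow_{i,\tau}\varphi]$, needing no further induction. Granting the lemma, the modal equivalence follows by chaining: $M,w\models[\Uparrow_{i,\tau}\varphi][\pi]\psi$ iff every $R_\pi$-successor of $w$ in $M^{\Uparrow_{i,\tau}\varphi}$ satisfies $\psi$ iff every $R_{F^{\Uparrow_{i,\tau}\varphi}(\pi)}$-successor of $w$ in $M$ satisfies $[\Uparrow_{i,\tau}\varphi]\psi$ iff $M,w\models[F^{\Uparrow_{i,\tau}\varphi}(\pi)][\Uparrow_{i,\tau}\varphi]\psi$.

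I expect the main obstacle to be exactly the correctness of the atomic clauses $F^{\Uparrow_{i,\tau}\varphi}(\preceq_{i,\tau})$ and especially $F^{\Uparrow_{i,\tau}\varphi}(\preceq_{i,\tau}^\sim)$: one must keep track that the new complement is taken with respect to the \emph{new} $\preceq_{i,\tau}$ but the \emph{old} $\equiv_i$, and that the asymmetry between the two $\varphi$-disagreement directions (one becoming comparable and dropping out of the complement, the other staying in) is captured precisely by the three test-guarded disjuncts. Everything else — including the routine remark that these equivalences let one push $[\Uparrow_{i,\tau}\varphi]$ inward and eliminate it, via a standard complexity measure on formulas — is bookkeeping.
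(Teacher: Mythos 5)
Your proposal is correct and follows the standard route for establishing DEL-style reduction axioms: the atomic and Boolean cases are immediate from the functional, valuation-preserving update, and the modal case rests on the relation-transfer lemma (by induction on $\pi$) that $R_\pi$ computed in $M^{\Uparrow_{i,\tau}\varphi}$ equals $R_{F^{\Uparrow_{i,\tau}\varphi}(\pi)}$ computed in $M$, with the case analysis for $\preceq_{i,\tau}$ and its complement $\preceq_{i,\tau}^\sim$ (new $\preceq$, old $\equiv_i$, and the asymmetry between the two $\varphi$-disagreement directions) carried out exactly as needed, and the test case closed off directly by the semantic clause for $[\Uparrow_{i,\tau}\varphi]$. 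This matches the intended argument for the proposition, including the preliminary check that the updated structure is again an MCM.
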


\subsection{Conservative Attitude Revision}\label{sec:conschange}

Let us move from
radical attitude change
to conservative attitude change
by introducing radical
revision operators
of type $  [\uparrow_{i,\tau} \varphi ]$,
with $\tau \in \{P,D\}$.
The formula
$[\uparrow_{i,P} \varphi ]  \psi$
(resp. $[\uparrow_{i,D} \varphi ]  \psi$)
is meant to stand for ``$\psi$
holds,
after agent $i$
has conservatively revised her beliefs (resp. desires) with $\varphi$''.
As for radical revision,
we assume that conservative revision operations are public, i.e.,
if an agent conservatively revises her beliefs (resp. desires) with $\varphi$,
then this is common knowledge among all agents. 
The semantic
interpretation of such operators relative
to a MCM
$M = (  W,( \preceq_{i,P})_{i \in \mathit{Agt} }, (\preceq_{i,D})_{i \in \mathit{Agt} }  ,
(\equiv_i)_{i \in \mathit{Agt} }  ,  V  )$
be a MCM and a world $w $
in $W$ is as follows:
\begin{eqnarray*}
M, w \models  [\uparrow_{i,\tau} \varphi ] \psi & \Longleftrightarrow & M^{\uparrow_{i,\tau} \varphi }, w \models \psi ,
\end{eqnarray*}
where:
\begin{align*}
M^{\uparrow_{i,P} \varphi } = (  W,( \preceq_{i,P}^{\uparrow_{i,P}  \varphi })_{i \in \mathit{Agt} }, (\preceq_{i,D})_{i \in \mathit{Agt} }  ,
(\equiv_i)_{i \in \mathit{Agt} }  ,   V  ), \\
M^{\uparrow_{i,D} \varphi } = (  W,( \preceq_{i,P})_{i \in \mathit{Agt} }, (\preceq_{i,D}^{\uparrow_{i,D}  \varphi })_{i \in \mathit{Agt} }  ,
(\equiv_i)_{i \in \mathit{Agt} }  ,   V  ),
\end{align*}
with:
\begin{align*}
\preceq_{i,P}^{\uparrow_{i,P}  \varphi }=  \Big\{ (w,v) \in W \times W : & \Big( \big( 
w  \in \mathit{Best}_{i,P}(\varphi, w) \text{ iff }  v  \in \mathit{Best}_{i,P}(\varphi, w) \big) \text{ and } w\preceq_{i,P} v  \Big)
\text{ or } \\
& \big( w \not \in \mathit{Best}_{i,P}(\varphi, w),
v  \in \mathit{Best}_{i,P}(\varphi, w)
 \text{ and } w \equiv_i v \big)   \Big \},\\
\preceq_{i,D}^{\uparrow_{i,D}  \varphi }=  \Big\{ (w,v) \in W \times W : & \Big( \big( 
w  \in \mathit{Worst}_{i,D}(\neg \varphi, w) \text{ iff }  v  \in \mathit{Worst}_{i,D}(\neg \varphi, w) \big) \text{ and } w\preceq_{i,D} v  \Big)
\text{ or } \\
& \big( w  \in \mathit{Worst}_{i,D}(\neg \varphi, w),
v  \not \in \mathit{Worst}_{i,D}( \neg \varphi, w)
 \text{ and } w \equiv_i v \big)   \Big \},
\end{align*}
and 
$\preceq_{j,\tau}^{\uparrow_{i,\tau}  \varphi }=\preceq_{j,\tau} \text{ for all } j  \in \mathit{Agt} \text{ such that } i \neq j$.

Unlike 
radical  revision,
plausibility update
and desirability
update
 are
asymmetric under conservative revision.
Agent $i$'s
conservative belief revision with $\varphi$
replaces the current plausibility ordering
$\preceq_{i,P}$
with the new plausibility ordering
$\preceq_{i,P}^{\uparrow_{i,P}  \varphi }$.
It promotes the most plausible  $\varphi$-worlds
 to the highest plausibility rank, but apart from that, the
 old plausibility ordering remains.
 Agent $i$'s
conservative desire revision with $\varphi$
replaces the current desirability ordering
$\preceq_{i,D}$
with the new desirability ordering
$\preceq_{i,P}^{\uparrow_{i,P}  \varphi }$.
 In particular, it demotes the least desirable $\neg \varphi$-worlds
 to the lowest desirability rank, but apart from that, the
 old desirability  ordering remains.

Conservative attitude revision satisfies a weak form
of success principle which guarantees the formation
of a belief (resp. a desire), after a belief (resp. desire)
revision is performed.
 Let  $\varphi\mathcal{L}_{\mathsf{PL}}(\mathit{Atm})$. Then,
\begin{align}
& \models_{\mathit{MCM}} \neg \mathsf{B}_i (\varphi {,} \bot) \rightarrow [\uparrow_{i,P} \varphi ]   \mathsf{B}_i \varphi \label{valusef2} \\
&\models_{\mathit{MCM}}  \neg \mathsf{D}_i (\varphi, \top)  \rightarrow [\uparrow_{i,D} \varphi ]   \mathsf{D}_i \varphi 
\end{align}
According to the previous validities, if 
an agent does not believe a contradiction conditional on $\varphi$ then,
after she has conservatively revised her beliefs  with $\varphi$,
she  will believe that $\varphi$.
 If
an agent does not desire a tautology conditional on $\varphi$ then,
after she has conservatively revised her desires with $\varphi$,
she  will desire that $\varphi$.
But, unlike radical attitude revision,
conservative attitude revision
does not necessarily guarantee the   formation
of a strong belief (resp. a strong desire), after a belief (resp. desire)
revision is performed. Indeed,
we have the following, for $\varphi \in \mathcal{L}_{\mathsf{PL}}(\mathit{Atm})$:
\begin{align}
&\not \models_{\mathit{MCM}}   [\uparrow_{i,P} \varphi ] (  \mathsf{B}_i \varphi  \rightarrow \mathsf{SB}_i \varphi )  \\
&\not \models_{\mathit{MCM}}   [\uparrow_{i,D} \varphi ] (  \mathsf{D}_i \varphi  \rightarrow \mathsf{SD}_i \varphi )  
\end{align}
This means that the formation of a belief (resp. desire) through
conservative belief (resp. desire) revision
does not necessarily entail
the formation of a strong belief (resp. strong desire)
with the same content.

\begin{example}
Let us illustrate the 
conservative 
belief revision mechanism
with the help
of the crossroad game.
Suppose
agent $1$
informs agent $2$
that 
``if they both
lose time,
then there will no collision''
and, as a consequence,
$2$ performs
a 
conservative
belief
revision
operation with input $(\mathit{lo}_1 \wedge
\mathit{lo}_2 )\rightarrow \neg \mathit{co} $.
By the previous validity
(\ref{valusef2})
we can prove that,
under the hypothesis $\varphi_1$ stated in Section \ref{example}
and the assumption that
$2$ does
not believe
a contradiction
conditional on $1$'s assertion,
$2$
 believes
 that there will be no collision,
 after its conservative belief operation:
\begin{align}
& \models_{\mathit{MCM}}  
 \Big( 
\neg  \mathsf{B}_2 \big(
 (\mathit{lo}_1 \wedge
\mathit{lo}_2 )\rightarrow \neg \mathit{co}
  {,} \bot\big) \wedge
 \varphi_1 \Big) \rightarrow
 [\uparrow_{2,P}(\mathit{lo}_1 \wedge
\mathit{lo}_2 )\rightarrow \neg \mathit{co}  ]  \mathsf{B}_2 \neg \mathit{co}  
\end{align}
\end{example}

As for radical revision,
we have reduction axioms
which allow us to eliminate conservative attitude revision operators
from a formula.
\begin{proposition}\label{propred2}
The following equivalences are  valid:
\begin{align*}
& [\uparrow_{i,\tau} \varphi ] p \leftrightarrow p \\
& [\uparrow_{i,\tau} \varphi ] x \leftrightarrow x \\
& [\uparrow_{i,\tau} \varphi ] \neg \psi \leftrightarrow \neg[\uparrow_{i,\tau} \varphi ] \psi  \\
& [\uparrow_{i,\tau} \varphi ] ( \psi_1 \wedge \psi_2) \leftrightarrow ( [\uparrow_{i,\tau} \varphi ]  \psi_1 \wedge 
 [\uparrow_{i,\tau} \varphi ]  \psi_2 )  \\
 &[\uparrow_{i,\tau} \varphi ]  [\pi]  \psi \leftrightarrow [F^{\uparrow_{i,\tau} \varphi } (\pi)] [\uparrow_{i,\tau} \varphi ] \psi \\
\end{align*}
where for all $j \in \mathit{Agt}$
and for all $\tau, \tau' \in \{P,D\}$:
\begin{align*}
F^{\uparrow_{i,\tau} \varphi } ( \equiv_j) =&  \equiv_j \\
F^{\uparrow_{i,P} \varphi } ( \preceq_{i,P}) =& \big(   (\varphi \wedge [\prec_{i,P}]\neg \varphi)? ;  \preceq_{i,\tau};  (\varphi \wedge [\prec_{i,P}]\neg \varphi)?  \big) \cup\\
& \big(  \neg (\varphi \wedge [\prec_{i,P}]\neg \varphi)? ;  \preceq_{i,\tau}; \neg (\varphi \wedge [\prec_{i,P}]\neg \varphi) ? \big) \cup\\
&  \big(  \neg (\varphi \wedge [\prec_{i,P}]\neg \varphi)? ;  \equiv_{i};  (\varphi \wedge [\prec_{i,P}]\neg \varphi)? \big)  \\
F^{\uparrow_{i,D} \varphi } ( \preceq_{i,D}) =& \big(   (\neg \varphi \wedge [\succ_{i,D}]\varphi)? ;  \preceq_{i,\tau};  (\neg \varphi \wedge [\succ_{i,D}]\varphi)?  \big) \cup\\
& \big(  \neg (\neg \varphi \wedge [\succ_{i,D}]\varphi)? ;  \preceq_{i,\tau}; \neg (\neg \varphi \wedge [\succ_{i,D}]\varphi) ? \big) \cup\\
&  \big(  (\neg \varphi \wedge [\succ_{i,D}]\varphi)? ;  \equiv_{i}; \neg (\neg \varphi \wedge [\succ_{i,D}]\varphi)? \big)  \\
 F^{\uparrow_{i,\tau} \varphi } ( \preceq_{j,\tau'}) = &\preceq_{j,\tau'} \text{ if }  i \neq j \text{ or } \tau \neq \tau'\\
F^{\uparrow_{i,P} \varphi } (\preceq_{i,P}^\sim) =& \big(   (\varphi \wedge [\prec_{i,P}]\neg \varphi)? ;  \preceq_{i,\tau};  (\varphi \wedge [\prec_{i,P}]\neg \varphi)?  \big) \cup\\
& \big(  \neg (\varphi \wedge [\prec_{i,P}]\neg \varphi)? ;  \preceq_{i,\tau}; \neg (\varphi \wedge [\prec_{i,P}]\neg \varphi) ? \big) \cup\\
&  \big(   (\varphi \wedge [\prec_{i,P}]\neg \varphi)? ;  \equiv_{i}; \neg  (\varphi \wedge [\prec_{i,P}]\neg \varphi)? \big)  \\
 F^{\uparrow_{i,D} \varphi } (\preceq_{i,D}^\sim) = & \big(   (\neg \varphi \wedge [\succ_{i,D}]\varphi)? ;  \preceq_{i,\tau};  (\neg \varphi \wedge [\succ_{i,D}]\varphi)?  \big) \cup\\
& \big(  \neg (\neg \varphi \wedge [\succ_{i,D}]\varphi)? ;  \preceq_{i,\tau}; \neg (\neg \varphi \wedge [\succ_{i,D}]\varphi) ? \big) \cup\\
&  \big( \neg  (\neg \varphi \wedge [\succ_{i,D}]\varphi)? ;  \equiv_{i}; (\neg \varphi \wedge [\succ_{i,D}]\varphi)? \big)  \\
F^{\uparrow_{i,\tau} \varphi } (\preceq_{j,\tau'}^\sim) =&\preceq_{j,\tau'}^\sim \text{ if }  i \neq j \text{ or } \tau \neq \tau'\\
F^{\uparrow_{i,\tau} \varphi } (  \pi { ; } \pi' ) = &F^{\uparrow_{i,\tau} \varphi } (  \pi  ){ ; } F^{\uparrow_{i,\tau} \varphi } (   \pi' )  \\
F^{\uparrow_{i,\tau} \varphi } (  \pi \cup \pi' ) =& F^{\uparrow_{i,\tau} \varphi } (  \pi  )\cup F^{\uparrow_{i,\tau} \varphi } (   \pi' )  \\
F^{\uparrow_{i,\tau} \varphi } (  \pi \cap \pi' ) =& F^{\uparrow_{i,\tau} \varphi } (  \pi  )\cap F^{\uparrow_{i,\tau} \varphi } (   \pi' )  \\
F^{\uparrow_{i,\tau} \varphi } ( -\pi  ) =& - F^{\uparrow_{i,\tau} \varphi } ( \pi  ) \\
F^{\uparrow_{i,\tau} \varphi } (  \psi?  ) =& [\uparrow_{i,\tau} \varphi ]\psi ?
\end{align*}

\end{proposition}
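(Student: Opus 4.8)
The plan is to argue exactly as for the radical-revision reduction axioms of Proposition~\ref{propred1}. The atomic equivalences $[\uparrow_{i,\tau}\varphi]p\leftrightarrow p$ and $[\uparrow_{i,\tau}\varphi]x\leftrightarrow x$ hold because conservative revision leaves the valuation $V$ untouched, and the clauses for $\neg$ and $\wedge$ are immediate since $M^{\uparrow_{i,\tau}\varphi}$ is a single, uniquely determined model, so $[\uparrow_{i,\tau}\varphi]$ commutes with the Boolean connectives. All the work is in the modal clause $[\uparrow_{i,\tau}\varphi][\pi]\psi\leftrightarrow[F^{\uparrow_{i,\tau}\varphi}(\pi)][\uparrow_{i,\tau}\varphi]\psi$.

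For that clause it suffices, by the truth conditions, to prove the relational identity
\[
w\,R_\pi^{M^{\uparrow_{i,\tau}\varphi}}\,v\quad\Longleftrightarrow\quad w\,R_{F^{\uparrow_{i,\tau}\varphi}(\pi)}^{M}\,v
\]
for all programs $\pi$ and all worlds $w,v$, which I would establish by induction on the structure of $\pi$, run simultaneously with the formula-level statement ``$M^{\uparrow_{i,\tau}\varphi},w\models\psi$ iff $M,w\models[\uparrow_{i,\tau}\varphi]\psi$'' (this is what is needed for the test case, whose transformed program is $[\uparrow_{i,\tau}\varphi]\psi?$). The compositional cases $;$, $\cup$, $\cap$, $-$ follow at once from the inductive hypothesis, because both the relational operators and $F^{\uparrow_{i,\tau}\varphi}$ distribute over them; the cases $\equiv_j$, and $\preceq_{j,\tau'}$ and $\preceq_{j,\tau'}^\sim$ with $j\neq i$ or $\tau'\neq\tau$, are trivial since the corresponding relations are not modified by the update; and the test case uses the formula-level hypothesis.

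The substantial case is the atomic program $\preceq_{i,\tau}$ together with its complement $\preceq_{i,\tau}^\sim$. Here I would first record the auxiliary definability fact: writing $\chi_P=\varphi\wedge[\prec_{i,P}]\neg\varphi$ and $\chi_D=\neg\varphi\wedge[\succ_{i,D}]\varphi$, one has, for every $v$ with $v\equiv_i w$,
\[
v\in\mathit{Best}_{i,P}(\varphi,w)\iff M,v\models\chi_P,\qquad v\in\mathit{Worst}_{i,D}(\neg\varphi,w)\iff M,v\models\chi_D .
\]
This follows because $\mathit{Best}_{i,P}(\varphi,\cdot)$ and $\mathit{Worst}_{i,D}(\neg\varphi,\cdot)$ depend only on the information set, so the two sets at $v$ and at $w$ coincide, and because the connectedness Constraint~C2 (totality of $\preceq_{i,\tau}$ on information sets) turns ``$v$ is a $\preceq_{i,P}$-maximal $\varphi$-world of its information set'' into ``no strictly more plausible world is a $\varphi$-world'', i.e.\ exactly $[\prec_{i,P}]\neg\varphi$, and symmetrically for $\mathit{Worst}$. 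Plugging this into the definition of $\preceq_{i,P}^{\uparrow_{i,P}\varphi}$ (resp. $\preceq_{i,D}^{\uparrow_{i,D}\varphi}$), and using Constraint~C1 to note that $w\preceq_{i,\tau}v$ already forces $w\equiv_i v$ (so that the reference world in $\mathit{Best}_{i,P}(\varphi,w)$, resp. $\mathit{Worst}_{i,D}(\neg\varphi,w)$, can be switched freely between $w$ and $v$), the first disjunct of the definition becomes $(\chi?;\preceq_{i,\tau};\chi?)\cup(\neg\chi?;\preceq_{i,\tau};\neg\chi?)$ and the second disjunct becomes $(\neg\chi_P?;\equiv_i;\chi_P?)$ in the $P$-case and $(\chi_D?;\equiv_i;\neg\chi_D?)$ in the $D$-case, which is precisely $R_{F^{\uparrow_{i,\tau}\varphi}(\preceq_{i,\tau})}^M$. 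The complement $\preceq_{i,\tau}^\sim$ is handled the same way, using $R_{\preceq_{i,\tau}^\sim}=\{(w,v):w\equiv_i v\text{ and }w\not\preceq_{i,\tau}v\}$ and again that $\equiv_i$ is unchanged. I expect the main obstacle to be precisely this definability step together with the three-way case split: one must respect the $P/D$ asymmetry (conservative belief revision \emph{promotes} the best $\varphi$-worlds whereas conservative desire revision \emph{demotes} the worst $\neg\varphi$-worlds), which is exactly why the third component of $F^{\uparrow_{i,P}\varphi}(\preceq_{i,P})$ has the shape $\neg\chi_P?;\equiv_i;\chi_P?$ while that of $F^{\uparrow_{i,D}\varphi}(\preceq_{i,D})$ has the shape $\chi_D?;\equiv_i;\neg\chi_D?$. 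Finally, as for Proposition~\ref{propred1}, one checks that repeated use of these equivalences terminates: the tests introduced in $F^{\uparrow_{i,\tau}\varphi}(\preceq_{i,\tau})$ contain only the fixed revision formula $\varphi$, not a freshly nested dynamic operator, so a suitable complexity measure strictly decreases and the dynamic operators can be eliminated.
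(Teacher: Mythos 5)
Your proposal is correct and is essentially the argument the paper intends: one verifies pointwise that $R_{\pi}$ in the revised model coincides with $R_{F^{\uparrow_{i,\tau}\varphi}(\pi)}$ in the original model by induction on $\pi$, the only substantial step being that, within an information set and thanks to Constraints C1 and C2, membership in $\mathit{Best}_{i,P}(\varphi,w)$ and $\mathit{Worst}_{i,D}(\neg\varphi,w)$ is defined by $\varphi\wedge[\prec_{i,P}]\neg\varphi$ and $\neg\varphi\wedge[\succ_{i,D}]\varphi$ respectively, which is exactly the paper's conditional-belief/conditional-desire machinery. One caveat: if you work the complement case out explicitly rather than saying it is ``handled the same way,'' the first two components of $F^{\uparrow_{i,\tau}\varphi}(\preceq_{i,\tau}^{\sim})$ come out with $\preceq_{i,\tau}^{\sim}$ (exactly as in Proposition~\ref{propred1}), so the occurrences of $\preceq_{i,\tau}$ there in the printed statement are evidently typos that your sketch silently corrects but should be flagged.
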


\subsection{Dynamic Logic of Cognitive Attitudes and their Change }\label{sec:axchange}

The modal
language $\mathcal{L}_{\mathrm{DLCAC} } ( \mathit{Atm},  \mathit{Nom},  \mathit{Agt})$,
or simply $\mathcal{L}_{\mathrm{DLCAC} }$,
for the Dynamic Logic of Cognitive Attitudes and their Change
 ($\mathrm{DLCAC}$)
 extends the language 
  $\mathcal{L}_{\mathrm{DLCA} }$ 
  of the logic   $\mathrm{DLCA}$
  by dynamic operators
  of type $[\Uparrow_{i,\tau} \varphi ]$
  and
$  [\uparrow_{i,\tau} \varphi ]$.
It is defined by the following grammar:
\begin{center}\begin{tabular}{lcl}
 $\varphi$  & $::=$ & $ p \mid
 x
 \mid \neg\varphi \mid \varphi\wedge\varphi' \mid [\pi] \varphi \mid   [\Uparrow_{i,\tau} \varphi ] \psi \mid   [\uparrow_{i,\tau} \varphi ] \psi
$
\end{tabular}\end{center}
where
$\pi$
ranges over the language of cognitive programs $\mathcal{P}$,
 $p$ ranges over $ \mathit{Atm} $,
$x$ ranges over $ \mathit{Nom} $,
$i$ ranges over $ \mathit{Agt} $
and $\tau $
ranges over $\{P,D\}$.

\begin{definition}
We define  $\mathrm{DLCAC}$
to be the extension of $\mathrm{DLCA}$ given by the
reduction principles of Proposition \ref{propred1} and Proposition \ref{propred2}
and the following 
rule of replacement of equivalents
\begin{align}
& \frac{\psi_1 \leftrightarrow \psi_2 }{ \varphi \leftrightarrow \varphi[\psi_1 / \psi_2]}
                                           \tagLabel{REP}{ax:RepRule}
\end{align}
where $\varphi[\psi_1 / \psi_2]$ is the formula that results from $\varphi$ by replacing zero or more occurrences of $\psi_1$, in $\varphi$, by $\psi_2$.
\end{definition}
As the rule of replacement of equivalents  preserves validity,
the equivalences of  Propositions \ref{propred1} and \ref{propred2} 
together with this
allow to reduce every formula of the language $\mathcal{L}_{\mathrm{DLCAC} }$ to an equivalent formula
of the language $\mathcal{L}_{\mathrm{DLCA} }$.
Call $\mathit{red}$ the mapping which iteratively applies the above equivalences
from the left to the right, starting from one of the innermost modal operators.
$\mathit{red}$ pushes the dynamic operators inside the formula, and finally eliminates them when
facing an atomic formula.

\begin{proposition}\label{theo:reductionProp}
Let $\varphi$ be a formula in the language of $\mathcal{L}_{\mathrm{DLCAC} }$. Then
\begin{itemize}
\item
$\mathit{red}(\varphi) $ has no dynamic operators $[\Uparrow_{i,\tau} \varphi ]$
  or
$  [\uparrow_{i,\tau} \varphi ]$, and
\item
$\mathit{red}(\varphi) \leftrightarrow \varphi$ is valid
relative to the class of MCMs.
\end{itemize}
\end{proposition}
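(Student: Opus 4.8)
The plan is to prove the two items separately: the first by a termination argument for the rewriting procedure $\mathit{red}$, the second by an induction on the number of rewriting steps.

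For the first item, I would introduce a complexity measure $c$ on the formulas of $\mathcal{L}_{\mathrm{DLCAC}}$, together with an auxiliary measure on cognitive programs, in the spirit of the measures used for reduction arguments in dynamic epistemic logic. The clauses of $c$ for atoms, nominals, Booleans and $[\pi]$ are chosen so that a formula strictly dominates its immediate subformulas, the clause for $[\pi]\psi$ also counts the $c$-weight of the formulas occurring in the tests inside $\pi$, and the clauses for the two kinds of dynamic operator are multiplicative, e.g. $c([\Uparrow_{i,\tau}\varphi]\psi)$ and $c([\uparrow_{i,\tau}\varphi]\psi)$ are set to be of the order $(c(\varphi)+k)\cdot c(\psi)$ for a suitably large constant $k$. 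One then checks, equivalence by equivalence, that reading each identity of Propositions \ref{propred1} and \ref{propred2} from left to right strictly decreases $c$. The atom, nominal and Boolean cases are immediate, since they simply remove a dynamic operator or push it past a connective. The delicate case is $[\Uparrow_{i,\tau}\varphi][\pi]\psi \leftrightarrow [F^{\Uparrow_{i,\tau}\varphi}(\pi)][\Uparrow_{i,\tau}\varphi]\psi$ and its conservative analogue: here one must verify that the program transformer $F^{\Uparrow_{i,\tau}\varphi}$ (resp. $F^{\uparrow_{i,\tau}\varphi}$) enlarges the program measure only in a controlled way — atomic programs are replaced by fixed finite programs, and a test $\chi?$ is replaced by the test $[\Uparrow_{i,\tau}\varphi]\chi?$, so the formula content it injects is bounded in terms of $c(\varphi)$ and $c(\chi)$ — and that this increase is dominated by the weight gained from having moved the dynamic operator strictly inside $\psi$. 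This bookkeeping for the program-modality clause is the step I expect to require the most care. Because $\mathit{red}$ applies these equivalences only from left to right, $c$ strictly decreases along every run, so $\mathit{red}$ terminates; and since the left-hand sides of the equivalences exhaust all shapes $[\Uparrow_{i,\tau}\varphi]\psi$ and $[\uparrow_{i,\tau}\varphi]\psi$, the rewriting can stop only when no dynamic operator is left. Hence $\mathit{red}(\varphi)$ contains no dynamic operator.

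For the second item, I would argue by induction on the number of rewriting steps $\mathit{red}$ performs on $\varphi$. If that number is $0$, then $\varphi \in \mathcal{L}_{\mathrm{DLCA}}$ and $\mathit{red}(\varphi)=\varphi$, so $\mathit{red}(\varphi)\leftrightarrow\varphi$ is trivially valid. Otherwise $\mathit{red}$ first replaces one subformula occurrence in $\varphi$ by the right-hand side of one of the equivalences of Propositions \ref{propred1} and \ref{propred2}, producing $\varphi'$, and then continues on $\varphi'$. By Propositions \ref{propred1} and \ref{propred2} the equivalence used is valid relative to the class of MCMs, and validity over MCMs is preserved when a subformula is replaced by a semantically equivalent one; this last fact is the semantic counterpart of the rule \ref{ax:RepRule} and follows by a routine induction on the context using the truth conditions of Definition \ref{truthcond} together with the model transformations defining the dynamic operators. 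Hence $\varphi\leftrightarrow\varphi'$ is valid over MCMs. Since $\mathit{red}(\varphi)=\mathit{red}(\varphi')$ and, by the induction hypothesis, $\mathit{red}(\varphi')\leftrightarrow\varphi'$ is valid, chaining the two equivalences yields $\mathit{red}(\varphi)\leftrightarrow\varphi$.

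Finally I would note that the very same argument runs at the syntactic level — using the reduction principles and \ref{ax:RepRule} as they appear in the calculus $\mathrm{DLCAC}$ rather than their semantic counterparts — so that every $\varphi\in\mathcal{L}_{\mathrm{DLCAC}}$ is provably equivalent in $\mathrm{DLCAC}$ to the $\mathrm{DLCA}$-formula $\mathit{red}(\varphi)$; combined with the two items above, this reduces soundness and completeness of $\mathrm{DLCAC}$ to Theorem \ref{firstcomplete}.
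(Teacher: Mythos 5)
Your proposal is correct and follows essentially the same route as the paper, which disposes of the first item as immediate and obtains the second item exactly as you do, from the validity of the equivalences of Propositions \ref{propred1} and \ref{propred2} combined with replacement of equivalents. The only difference is that you make explicit the standard DEL-style termination bookkeeping (a multiplicative complexity measure handling the program transformers and their tests) that the paper leaves implicit in calling the first item ``clear''.
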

The first item of Proposition \ref{theo:reductionProp} is clear.
The second item is proved using  the equivalences of  Propositions \ref{propred1} and \ref{propred2}
 and the rule of replacement of equivalents.

 The following theorem is a direct consequence of 
 Theorem \ref{firstcomplete} and 
 Proposition \ref{theo:reductionProp}.
 
\begin{theorem}\label{compldynamic}
The logic $\mathrm{DLCAC}$ is sound and complete for the class of multi-agent cognitive models.
\end{theorem}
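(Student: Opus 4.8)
The plan is to treat this as a corollary of Theorem \ref{firstcomplete} and Proposition \ref{theo:reductionProp}, in the style of the standard completeness proofs for dynamic epistemic logics with reduction axioms. First I would dispatch soundness: the logic $\mathrm{DLCAC}$ adds to $\mathrm{DLCA}$ only the reduction principles of Propositions \ref{propred1} and \ref{propred2}, which those propositions already show to be valid over the class of MCMs, together with the rule \ref{ax:RepRule}; since two formulas that are valid (a fortiori, provably equivalent) have the same truth set in every MCM, substituting one for the other inside any context preserves validity, so \ref{ax:RepRule} is validity-preserving. Combined with the soundness half of Theorem \ref{firstcomplete}, this gives soundness of $\mathrm{DLCAC}$ by the usual induction on the length of derivations.

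For completeness, suppose $\varphi \in \mathcal{L}_{\mathrm{DLCAC}}$ is valid over the class of MCMs. By the second item of Proposition \ref{theo:reductionProp}, $\mathit{red}(\varphi) \leftrightarrow \varphi$ is valid, so $\mathit{red}(\varphi)$ is valid; and by the first item, $\mathit{red}(\varphi)$ contains no dynamic operators, hence is a formula of $\mathcal{L}_{\mathrm{DLCA}}$. The completeness half of Theorem \ref{firstcomplete}, applied within that fragment, yields $\vdash_{\mathrm{DLCA}} \mathit{red}(\varphi)$, and therefore $\vdash_{\mathrm{DLCAC}} \mathit{red}(\varphi)$ since $\mathrm{DLCAC}$ extends $\mathrm{DLCA}$. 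To finish I would show $\vdash_{\mathrm{DLCAC}} \mathit{red}(\varphi) \leftrightarrow \varphi$, after which modus ponens gives $\vdash_{\mathrm{DLCAC}} \varphi$. This provable equivalence is obtained by mirroring the rewriting done by $\mathit{red}$ at the derivational level: every rewriting step replaces an innermost subformula $[\Uparrow_{i,\tau}\chi]\theta$ or $[\uparrow_{i,\tau}\chi]\theta$ by the right-hand side of the corresponding axiom of Proposition \ref{propred1} or \ref{propred2} (a theorem of $\mathrm{DLCAC}$), and the rule \ref{ax:RepRule} licenses performing this replacement inside an arbitrary context, including the test positions of the transformed programs $F^{\Uparrow_{i,\tau}\chi}(\pi)$ and $F^{\uparrow_{i,\tau}\chi}(\pi)$; chaining these equivalences along the whole computation of $\mathit{red}(\varphi)$ yields the claim.

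The one genuinely delicate point, which I would single out as the main obstacle, is the first item of Proposition \ref{theo:reductionProp}, namely that $\mathit{red}$ actually terminates in a dynamic-operator-free formula. This is not obvious from the reduction clauses alone: the clause $[\Uparrow_{i,\tau}\chi][\pi]\psi \leftrightarrow [F^{\Uparrow_{i,\tau}\chi}(\pi)][\Uparrow_{i,\tau}\chi]\psi$ pushes the dynamic modality past $[\pi]$ but rewrites $\pi$ into a larger program in which each test $\rho?$ becomes $[\Uparrow_{i,\tau}\chi]\rho?$, so a plain size measure may grow. The remedy is a well-founded complexity measure on formulas that dominates this effect --- one that grows quickly enough in the nesting depth of dynamic operators and in program structure so that eliminating the outermost dynamic operator strictly decreases it, no matter how many tests get duplicated. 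Equipping $\mathit{red}$ with such a measure makes the rewriting strongly normalizing with target in $\mathcal{L}_{\mathrm{DLCA}}$; everything else is routine bookkeeping, and the theorem then follows.
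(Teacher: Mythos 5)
Your proposal is correct and follows essentially the same route as the paper: soundness from the validity of the reduction equivalences (Propositions \ref{propred1} and \ref{propred2}) plus the validity-preserving rule \ref{ax:RepRule}, and completeness by translating via $\mathit{red}$ into $\mathcal{L}_{\mathrm{DLCA}}$, invoking Theorem \ref{firstcomplete}, and recovering $\varphi$ through the provable equivalence $\mathit{red}(\varphi)\leftrightarrow\varphi$. Your extra attention to termination of $\mathit{red}$ via a well-founded complexity measure is a sound refinement of a point the paper simply declares ``clear''.
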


\section{Conclusion and perspectives}\label{ConclusionPart}
  
We have presented a 
logical framework
for  modelling
a rich variety of
cognitive attitudes
of  both epistemic
type
and motivational type.
We have presented two extensions of the basic setting,
one by the notion of choice
and the other by dynamic operators for belief change and desire change.  
We have applied the former to the analysis of games
under incomplete information.
We have provided sound
and complete
axiomatizations for the basic setting
and for its two extensions. 
Directions of future research
are manifold
and are briefly discussed
in the rest of this section.

\paragraph{Decidability and complexity}
The present paper is  devoted to
study
the proof-theoretic
aspects
of the proposed logics.
In future work,
we plan to investigate
their
computational
aspects
including decidability
of their satisfiability
problems and, at  a later stage,
complexity. 
In order to prove decidability,
we expect to be able
to use
existing 
 filtration techniques from
 modal logic. 
 Note that once we have proved decidability of
 the static setting $\mathrm{DLCA}$, we can use the reduction 
 axioms
 of 
 Propositions \ref{propred1} and \ref{propred2} 
 to prove decidability of the dynamic setting $\mathrm{DLCAC}$.

We plan to study
 complexity
 of the satisfiability
 problems
 for interesting fragments
 of the language 
 $\mathcal{L}_\mathrm{DLCA}$
 by reducing them
 to satisfiability
 problems
 of existing logics.
 For instance, consider 
 the following  
 single-agent ($\mathit{sa}$) fragment
 of the language
  $\mathcal{L}_\mathrm{DLCA}$
  where only atomic programs
  ($\mathit{ap}$)
   are allowed,
  noted $\mathcal{L}_\mathrm{DLCA}^{ \mathit{sa},
  \mathit{ap} }$:
  \begin{center}\begin{tabular}{lcl}
 $\varphi$  & $::=$ & $ p \mid x\mid
 \neg\varphi \mid \varphi\wedge\varphi' \mid [\preceq_{1,P}] \varphi \mid  [\preceq_{1,D}] \varphi \mid [\equiv_{1}] \varphi
$
\end{tabular}\end{center}
where $1$
is an arbitrary agent in $\mathit{Agt}$.
We can observe that
 the satisfiability
 problem
 for this fragment
  is EXPTIME-hard.
 Indeed,
 because of Constraint C1
 in Definition 
 \ref{DefSemanticsModel},
 the modality
 $[\equiv_{1} ]$
 plays the role of
 the universal modality
 with respect
 to the  modalities 
 $[\preceq_{1,P}]$
 and $[\preceq_{1,D}]$.
 As shown in 
 \cite{DBLP:journals/ndjfl/Hemaspaandra96},
 adding the universal modality
 to a multimodal logic
 with independent modalities,
 such as the S4-modalities $[\preceq_{1,P}]$ and $[\preceq_{1,D}]$, causes EXPTIME-hardness.

 Consider now
 the following 
 intersection-free ($\mathit{if}$)
 and complement-free ($\mathit{cf}$)
 fragment
 of
   $\mathcal{L}_\mathrm{DLCA}$,
   noted $\mathcal{L}_\mathrm{DLCA}^{\mathit{if},
   \mathit{cf}}$:
 \begin{center}\begin{tabular}{lcl}
  $\pi$  & $::=$ & $    \equiv_i \mid \preceq_{i,P} \mid \preceq_{i,D}    \mid 
   \pi { ; } \pi'
  \mid
  \pi \cup \pi'   \mid
  -\pi  \mid \varphi?
   $\\
 $\varphi$  & $::=$ & $ p \mid
 x
 \mid \neg\varphi \mid \varphi\wedge\varphi' \mid [\pi] \varphi
$
\end{tabular}\end{center}
Our first conjecture
is
that we can find a polysize
reduction
of
the satisfiability
problem
for $\mathcal{L}_\mathrm{DLCA}^{\mathit{if},
   \mathit{cf}}$
to the satisfiability
problem
of 
 converse propositional dynamic logic ($\mathrm{PDL}$) with nominals,
 also called
 converse combinatory propositional dynamic logic  ($\mathrm{CcPDL}$).\footnote{
 The main idea of the polynomial embedding
 is to exploit 
 the 
 iteration construct $^*$
 of $\mathrm{PDL}$
 for the translation 
 $\mathit{tr}$
 of the
 cognitive programs,
 by stipulating that
$\mathit{tr}(\equiv_i)=(\mathit{any}_i \cup  -\mathit{any}_i)^*$,
  $\mathit{tr}(\preceq_{i,P})= P_i^*$,
   $\mathit{tr}(\preceq_{i,D})=D_i^*$,
   and homomorphic otherwise,
 where
 $\mathcal{A}_i$
 is agent $i$'s set of atomic
 programs (or actions), 
  $\mathcal{A}= 
  \bigcup_{
i \in 
 \mathit{Agt}}
   \mathcal{A}_i$
   is the set of
   $\mathrm{PDL}$
    atomic
   programs,
  $\mathit{any}_i= \bigcup_{
 a_i \in 
 \mathcal{A}_i}$ and, finally,
 $P_i^*$
 	and $D_i^*$
 	are special atomic programs
 	in $\mathcal{A}_i$.} The latter problem is known
 to be EXPTIME-complete  \cite{PhdDeGiacomo}.
  Therefore,
 if our conjecture is true,
 we will be able to conclude 
 that
the satisfiability
problems
for the fragments
$\mathcal{L}_\mathrm{DLCA}^{\mathit{sa},
   \mathit{ap}}$ and $\mathcal{L}_\mathrm{DLCA}^{\mathit{if},
   \mathit{cf}}$
are both EXPTIME-complete. 

We also
intend to study complexity of the nominal-free ($\mathit{nf}$)
fragment of
$\mathcal{L}_\mathrm{DLCA}$,
noted $\mathcal{L}_\mathrm{DLCA}^{\mathit{nf}}$.
 Nominals play
a technical
role in the logic $\mathrm{DLCA}$
by making it easier the task
of axiomatizing 
intersection
and complement
of programs
(Axioms \ref{ax:Prg3} and \ref{ax:Prg7} in Definition \ref{axiomatics}).
Our second  conjecture
is  that the language $\mathcal{L}_{\mathrm{DLCA} }$
is strictly more expressive than its nominal-free
fragment $\mathcal{L}_\mathrm{DLCA}^{\mathit{nf}}$.
Our third conjecture is
that we can find
a polysize reduction
of the satisfiability
problem for $\mathcal{L}_\mathrm{DLCA}^{\mathit{nf}}$
to
the satisfiability problem
of boolean modal logic
with a bounded number of modal parameters
 which is known to be EXPTIME-complete
 \cite{DBLP:conf/aiml/LutzS00}. 
 We leave the proof of the previous three conjectures to future work.
 We 
 leave to future work
 (i)
 the proof of the previous three conjectures,
 and (ii)
 the development of
  tableau-based
automated reasoning procedures
for 
the language
$\mathcal{L}_\mathrm{DLCA}$
and for its fragments 
 $\mathcal{L}_\mathrm{DLCA}^{ \mathit{sa},
  \mathit{ap} }$,
$\mathcal{L}_\mathrm{DLCA}^{\mathit{if},
   \mathit{cf}}$
   and
$\mathcal{L}_\mathrm{DLCA}^{\mathit{nf}}$
 which can be used for programming artificial agents endowed with cognitive attitudes.

 \paragraph{Well-foundedness}
 Future work
 will also
 be devoted to study 
 a variant of our logic $\mathrm{DLCA}$
 under the assumption of
 converse well-foundedness
for the relation 
$ \preceq_{i,P}$
and well-foundedness
for the relation 
$ \preceq_{i,D}$.
As emphasized in Section \ref{FormalizationPart},
these properties are required to make
agents' beliefs
and desires consistent, namely,
to guarantee that 
the formulas
$\neg(\mathsf{B}_i \varphi \wedge \mathsf{B}_i \neg \varphi) $,
$\neg\mathsf{B}_i \bot $,
$\neg(\mathsf{D}_i \varphi \wedge \mathsf{D}_i \neg \varphi )$
and $\neg\mathsf{D}_i \top$
become valid.
We will define 
the logic 
$\mathrm{DLCA}^{\mathit{wf}}$ 
to be the extension 
of the logic $\mathrm{DLCA}$  of Definition
\ref{axiomatics}
by the following two axioms:
\begin{align}
& \langle \equiv_i  \rangle \psi \rightarrow
\langle \equiv_i  \rangle (\psi \wedge [\prec_{i,P}]\neg \psi)
 \tagLabel{CWF$_{ \preceq_{i,P}}$}{ax:ConvWF1}\\
 &\langle \equiv_i  \rangle \psi \rightarrow
\langle \equiv_i  \rangle (\psi \wedge [\succ_{i,D}]\neg \psi)
 \tagLabel{WF$_{ \preceq_{i,D}}$}{ax:ConvWF2}
\end{align}
Such axioms are variants of the so-called
G\"{o}del-L\"{o}b ($\mathbf{GL}$) axiom from provability logic \cite{BoolosProvability}.
Our conjecture 
is that
the logic
$\mathrm{DLCA}^{\mathit{wf}}$
so defined is sound and complete 
 for the class of
multi-agent cognitive models (MCMs)
whose relations 
$ \preceq_{i,D}$
and
$ \preceq_{i,P}$
are, respectively,
well-founded and conversely well-founded.
 

\paragraph{Ceteris paribus preference}
We also plan to study 
a \emph{ceteris paribus}
notion of dyadic  preference
in the sense of Von Wright \cite{WrightPreference},
which has been recently formalized in a modal logic
setting by van Benthem et al.  \cite{BenthemGirardRoy}.
According to Von Wright,
for an agent 
to have
a preference of $\varphi$ over $\psi$,
she should prefer a situation in which $\varphi$ is true
     to a situation in which $\psi$
     is true,    \emph{all other things being equal}.\footnote{See also   \cite{WellmanDoyle} for a ``ceteris paribus'' interpretation
   of the notion of goal.}
     Our aim is to show that the
      $\mathrm{DLCA}$
framework is expressive enough to capture both
the static and the dynamic aspects of
this notion of 
 \emph{ceteris paribus}  preference.

\section*{Acknowledgments}

This work was supported
by the ANR project CoPains (``Cognitive Planning in Persuasive Multimodal Communication''). 
Support from the ANR-3IA Artificial and Natural Intelligence Toulouse Institute is also gratefully acknowledged.

\end{document}